\newtheorem{theorem}{Theorem}
\newtheorem{definition}{Definition}
\newtheorem{assumption}{Assumption}
\newtheorem{lemma}{Lemma}
\newtheorem{proposition}{Proposition}
\newtheorem{corollary}{Corollary}
\newcommand{\x}{{\normalfont\textrm x}}
\newcommand{\y}{{\normalfont\textrm y}}
\renewcommand{\v}{{\textrm v}}
\newcommand{\e}{{\textrm e}}
\renewcommand{\u}{{\textrm u}}
\renewcommand{\a}{{\textrm a}}
\newcommand{\1}{\textrm 1}
\newcommand{\p}{{\textrm p}}
\newcommand{\q}{{\textrm q}}
\newcommand{\w}{{\normalfont\textrm w}}
\renewcommand{\b}{{\textrm b}}
\newcommand{\I}{\textrm{I}}
\newcommand{\z}{\textrm{z}}
\newcommand{\A}{\textrm{A}}
\newcommand{\B}{\textrm{B}}
\newcommand{\M}{\textrm{M}}
\newcommand{\W}{\textrm{W}}
\newcommand{\C}{{\normalfont\textrm{C}}}
\newcommand{\U}{\textrm{U}}
\newcommand{\V}{\textrm{V}}
\newcommand{\Q}{\textrm{Q}}
\newcommand{\X}{{\normalfont\textrm{X}}}
\renewcommand{\L}{\textrm{L}}
\newcommand{\cD}{\mathcal{D}}
\newcommand{\cJ}{\mathcal{J}}
\newcommand{\cF}{\mathcal{F}}
\newcommand{\cG}{\mathcal{G}}
\newcommand{\cH}{\mathcal{H}}
\newcommand{\cL}{\mathcal{L}}
\newcommand{\cM}{\mathcal{M}}
\newcommand{\cW}{\mathcal{W}}
\newcommand{\cX}{\mathcal{X}}
\newcommand{\cY}{\mathcal{Y}}
\newcommand{\cS}{\mathcal{S}}
\newcommand{\cZ}{\mathcal{Z}}
\newcommand{\fR}{\mathfrak{R}}
\newcommand{\bb}{\mathbb}
\newcommand{\R}{\bb R}
\newcommand{\bP}{\bb P}
\newcommand{\E}{\bb E}
\DeclareMathOperator*{\var}{Var}
\DeclareMathOperator*{\diag}{diag}
\DeclareMathOperator*{\rank}{Rank}
\DeclareMathOperator*{\tr}{Tr}
\newcommand{\abs}[1]{\left| #1 \right|}
\newcommand{\relu}{\sigma}
\newcommand{\vect}[1]{\operatorname{vec}\left( #1 \right)}
\newcommand{\minim}[2]{\underset{#1}{\textrm{min}} \ #2}
\newcommand{\sout}[1]{}
\renewcommand{\hat}{\widehat}
\title{Dropout: Explicit Forms and Capacity Control}
\author{Raman Arora\thanks{Johns Hopkins University, email:
    arora@cs.jhu.edu}~ \and Peter Bartlett\thanks{University of California, Berkeley, email: bartlett@cs.berkeley.edu}~ \and Poorya Mianjy\thanks{Johns Hopkins University, email: mianjy@jhu.edu}~ \and Nathan Srebro\thanks{TTI Chicago, email: nati@ttic.edu}}
\begin{document}
\maketitle

\begin{abstract}
We investigate the capacity control provided by dropout in various machine learning problems. First, we study dropout for matrix completion, where it induces a data-dependent regularizer that, in expectation, equals the weighted trace-norm of the product of the factors. In deep learning, we show that the data-dependent regularizer due to dropout directly controls the Rademacher complexity of the underlying class of deep neural networks. These developments enable us to give concrete generalization error bounds for the dropout algorithm in both matrix completion as well as training deep neural networks. We evaluate our theoretical findings on real-world datasets, including MovieLens, MNIST, and Fashion-MNIST.
\end{abstract}

\section{Introduction}\label{sec:intro}

Dropout is a popular algorithmic regularization technique for training deep neural networks that aims at ``breaking co-adaptation'' among neurons by randomly dropping them at  training time~\citep{hinton2012improving}. Dropout has been shown effective across a wide range of machine learning tasks, from classification~\citep{srivastava2014dropout,szegedy2015going} to regression~\citep{toshev2014human}. Notably, dropout is considered an essential component in the design of AlexNet~\citep{krizhevsky2012imagenet}, which won the prominent ImageNet challenge in 2012 with a significant margin and helped transform the field of computer vision.

Dropout regularizes the empirical risk by randomly perturbing the model parameters during training. A natural first step toward understanding generalization due to dropout, therefore, is to instantiate the explicit form of the regularizer due to dropout. In linear regression, with dropout applied to the input layer (i.e., on the input features), the explicit regularizer was shown to be akin to a data-dependent ridge penalty~\cite{srivastava2014dropout,wager2013dropout,baldi2013understanding,wang2013fast}. In factored models dropout yields more exotic forms of regularization. For instance, dropout induces regularizer that behaves similar to nuclear norm regularization in matrix factorization~\cite{cavazza2018dropout}, in single hidden-layer linear networks  \cite{mianjy2018implicit}, and in deep linear networks \cite{mianjy2019dropout}. However, none of the works above discuss how the induced regularizer provides capacity control, or equivalently, help us establish generalization bounds for dropout.

In this paper, we provide an answer to this question. We give {\emph{explicit forms}} of the regularizers induced by dropout for the matrix sensing problem and two-layer neural networks with ReLU activations. Further, we establish {\emph{capacity control}} due to dropout and give precise generalization bounds. Our key contributions are as follows. 

\vspace*{-8pt}
\begin{enumerate}
\item In Section~\ref{sec:sensing}, we study dropout for matrix completion, wherein, the matrix factors are dropped randomly during training. We show that this algorithmic procedure induces a data-dependent  regularizer that behaves similar to the weighted trace-norm which has been shown to yield strong generalization guarantees for matrix completion~\citep{foygel2011learning}.
\vspace*{-2pt}
\item In Section~\ref{sec:dnn}, we study dropout in two-layer ReLU networks. We show that the regularizer induced by dropout is a data-dependent measure that behaves as $\ell_2$-path norm~\cite{neyshabur2015path}, and establish data-dependent generalization bounds.  

\vspace*{-2pt}
\item In Section~\ref{sec:exp}, we present empirical evaluations that confirm our theoretical findings for matrix completion and deep regression on real world datasets including the MovieLens data, as well as the MNIST and Fashion MNIST datasets.  
\end{enumerate}

\subsection{Related Work}\label{sec:related}

Dropout was first introduced by~\citet{hinton2012improving} as an effective heuristic for algorithmic regularization, yielding lower test errors on the MNIST and TIMIT datasets. In a subsequent work,~\citet{srivastava2014dropout} reported similar improvements over several tasks in computer vision (on CIFAR-10/100 and ImageNet datasets), speech recognition, text classification and genetics. 

Thenceforth, dropout has been widely used in training state-of-the-art systems for several tasks including large-scale visual recognition~\cite{szegedy2015going}, large vocabulary continuous speech recognition~\cite{dahl2013improving}, image question answering~\cite{yang2016stacked}, handwriting recognition~\cite{pham2014dropout}, sentiment prediction and question classification~\cite{kalchbrenner2014convolutional}, dependency parsing~\cite{chen2014fast},  and brain tumor segmentation~\cite{havaei2017brain}.

Following the empirical success of dropout, there have been several studies in recent years aimed at establishing theoretical underpinnings of why and how dropout helps with generalization. Early work of~\citet{baldi2013understanding} showed that for a single linear unit (and a single sigmoid unit, approximately), dropout amounts to weight decay regularization on the weights. A similar result was shown by \cite{mcallester2013pac} in a PAC-Bayes setting. For generalized linear models, \citet{wager2013dropout} established that dropout performs an adaptive regularization which is 
equivalent to a data-dependent scaling of the weight decay penalty. In their follow-up work, \citet{wager2014altitude} show that for linear classification, under a generative assumption on the data, dropout improves the convergence rate of the generalization error. In this paper, we focus on predictors represented in a factored form and give generalization bounds for matrix learning problems and single hidden layer ReLU networks.

In a related line of work, \citet{helmbold2015inductive} study the structural properties of the dropout regularizer 
in the context of linear classification. They characterize the landscape of the dropout criterion in terms of unique minimizers and establish non-monotonic and non-convex nature of the regularizer. In a follow up work, \citet{ helmbold2017surprising} extend their analysis to dropout in deep ReLU networks and surprisingly find that the nature of regularizer is different from that in linear classification. In particular, they show that unlike weight decay, dropout regularizer in deep networks can grow exponentially with depth and remains invariant to rescaling of inputs, outputs, and network weights. We confirm some of these findings in our theoretical analysis. However, counter to the claims of \citet{ helmbold2017surprising}, we argue that dropout does indeed prevent co-adaptation.

In a closely related approach as ours, the works of \citet{zhai2018adaptive}, \citet{gao2016dropout}, and \citet{wan2013regularization} bound the Rademacher complexity of deep neural networks trained using dropout. In particular, \citet{gao2016dropout} show that the Rademacher complexity of the target class decreases polynomially or exponentially, for shallow and deep networks, respectively, albeit they assume additional norm bounds on the weight vectors. Similarly, the works of \cite{wan2013regularization} and \cite{zhai2018adaptive} assume that certain norms of the weights are bounded, and show that the Rademacher complexity of the target class decreases with dropout rates. We argue in this paper that dropout alone does not directly control the norms of the weight vectors; therefore, each of the works above fail to capture the practice. We emphasize that none of the previous works provide a generalization guarantee, i.e., a bound on the gap between the population risk and the empirical risk, merely in terms of the value of the explicit regularizer due to dropout. We give a first such result for dropout in the context of matrix completion and for a single hidden layer ReLU network.

There are a bunch of other works that do not fall into any of the categories above, and, in fact, are somewhat unrelated to the focus in this paper. Nonetheless, we discuss them here for completeness. 
For instance, \citet{gal2016dropout} study dropout as Bayesian approximation , \citet{bank2018relationship} draw insights from frame theory to connect the notion of equiangular tight frames with dropout training in auto-encoders. Also, some recent works have considered variants of dropout. For instance, \citet{mou2018dropout} consider a variant of dropout, which they call
``truthful'' dropout, that ensures that the output of the randomly perturbed network is unbiased. However, rather than bound generalization error, \citet{mou2018dropout} bound the gap between the population risk and the dropout objective, i.e., the empirical risk plus the explicit regularizer. \citet{li2016improved} study a yet another variant based on multinomoal sampling (different nodes are dropped with different rates), and establish sub-optimality bounds for stochastic optimization of linear models (for convex Lipschitz loss functions).

\vspace*{-10pt}
\paragraph{Matrix Factorization with Dropout.} Our study of dropout is motivated in part by recent works of \citet{cavazza2018dropout}, \citet{mianjy2018implicit}, and \citet{mianjy2019dropout}. This line of work was initiated by \citet{cavazza2018dropout}, who studied dropout for low-rank matrix factorization without constraining the rank of the factors or adding an explicit regularizer to the objective. They show that dropout in the context of matrix factorization yields an explicit regularizer whose convex envelope is given by nuclear norm. This result is further strengthened by \citet{mianjy2018implicit} who show that induced regularizer is indeed nuclear norm. 

While matrix factorization is not a learning problem per se (for instance, what is training versus test data), in follow-up works by \citet{mianjy2018implicit} and \citet{mianjy2019dropout}, the authors show that training deep linear networks with $\ell_2$-loss using dropout reduces to the matrix factorization problem if the marginal distribution of the input feature vectors is assumed to be isotropic, i.e., $\E[\x\x^\top]=\I$. We note that this is a strong assumption. If we do not assume isotropy, we show that dropout induces a data-dependent regularizer which amounts to a simple scaling of the parameters and, therefore, does not control capacity in any meaningful way. We revisit this discussion in Section~\ref{sec:regression}. 

To summarize, while we are motivated by \citet{cavazza2018dropout}, the  problem setup, the nature of statements in this paper, and the tools we use are different from that in \cite{cavazza2018dropout}. Our proofs are simple and quickly verified. We do build closely on the prior work of \citet{mianjy2018implicit}. 

However, different from \citet{mianjy2018implicit}, we rigorously argue for dropout in matrix completion by 1) showing that the induced regularizer is equal to weighted trace-norm, which as far as we know, is a novel result, 2) giving strong generalization bounds, and 3) providing extensive experimental evidence that dropout provides state of the art performance on one of the largest datasets in recommendation systems research. Beyond that we rigorously extend our results to two layer ReLU networks, describe the explicit regularizer, bound the Rademacher complexity of the hypothesis class controlled by dropout, show precise generalization bounds, and support them with empirical results.

\subsection{Notation and Preliminaries}\label{sec:notation}
We denote matrices, vectors, scalar variables and sets by Roman capital letters, Roman small letters, small letters, and script letters, respectively (e.g. $\X$, $\x$, $x$, and $\cX$). For any integer $d$, we represent the set $\{ 1,\ldots,d \}$ by $[d]$. For any vector $\x \in \R^d$, $\diag(\x)\in \R^{d\times d}$ represents the diagonal matrix with
the $i^{th}$ diagonal entry equal to $\x_i$, and $\sqrt{\x}$ is the elementwise squared root of $\x$. Let $\| \x \|$ represent the $\ell_2$-norm of vector $\x$, and $\| \X \|$, $\| \X \|_F$, and $\| \X\|_*$ represent the spectral norm, the Frobenius norm, and the nuclear norm of matrix $\X$, respectively. Let $\X^\dagger$ denote the Moore-Penrose pseudo-inverse of $\X$. Given a positive definite matrix $\C$, we denote the Mahalonobis norm as $\|\x\|_\C^2 = \x^\top\C\x$. For a random variable $\x$ that takes values in $\cX$, given $n$ i.i.d. samples $\{\x_1,\cdots,\x_n\}$, the empirical average of a function $f:\cX\to \R$ is denoted by $\hat\E_i[f(\x_i)]:=\frac1n\sum_{i\in[n]}f(\x_i)$. Furthermore, we denote the second moment of $\x$ as $\C:=\E[\x\x^\top]$. The standard inner product is represented by $\langle \cdot, \cdot \rangle$, for vectors or matrices, where $\langle \X,\X' \rangle = \tr(\X^\top \X')$.

We are primarily interested in understanding how dropout controls the capacity of the hypothesis class when using dropout for training. To that end, we consider Rademacher complexity, a sample dependent measure of complexity of a hypothesis class that can directly bound the generalization gap~\citep{bartlett2002rademacher}. Formally, let $\cS = \{(\x_1, y_1),\ldots , (\x_n, y_n)\}$ be a sample of size $n$. Then, the empirical Rademacher complexity of a function class $\cF$ with respect to $\cS$, and the expected Rademacher complexity are defined, respectively, as
\vspace*{-10pt}
\begin{equation*}
\fR_\cS(\cF) = \E_{\sigma} \sup_{f\in \cF} \frac1n \sum_{i=1}^n \sigma_i f(\x_i), \ \ \fR_n(\cF) = \E_\x[\fR_\cS(\cF)],
\end{equation*}
where $\sigma_i$ are i.i.d. Rademacher random variables. 
\section{Matrix Sensing}\label{sec:sensing}
We begin with understanding dropout for matrix sensing, a problem which arguably is an important instance of a matrix learning problem with lots of applications, and is well understood from a  theoretical perspective. The problem setup is the following. Let $\M_* \in \R^{d_2 \times d_0}$ be a matrix with rank $r_*:=\rank(\M_*)$. Let $\A^{(1)},\ldots,\A^{(n)}$ be a set of measurement matrices of the same size as $\M_*$. The goal of matrix sensing is to recover the matrix $\M_*$ from $n$ observations of the form $y_i = \langle \M_*, \A^{(i)} \rangle$ such that $n\ll d_2 d_0$. 

A natural approach is to represent the matrix in terms of factors and solve the following \emph{empirical risk} minimization problem:
\begin{equation}\label{prob:erm_ms}
\minim{\U,\V}{\hat{L}(\U,\V):=\hat\E_i(y_i - \langle \U\V^\top, \A^{(i)} \rangle)^2} 
\end{equation}
where $\U = [\u_1,\ldots,\u_{d_1}] \in \R^{d_2\times d_1},\V = [\v_1,\ldots,\v_{d_1}] \in \R^{d_0\times d_1}$. When the number of factors is unconstrained, i.e., when $d_1 \gg r_*$, there exist many ``bad'' empirical minimizers, i.e., those with a large \emph{true risk} $L(\U,\V):=\E(y-\langle\U\V^\top,\A\rangle )^2$. 
Interestingly, \citet{li2018algorithmic} showed recently that under a restricted isometry property (RIP), despite the existence of such poor ERM solutions, gradient descent with proper initialization is \emph{implicitly} biased towards finding solutions with minimum nuclear norm -- this is an important result which was first conjectured and empirically verified by~\citet{gunasekar2017implicit}. We do not make an RIP assumption here. Further, we argue that for the most part, modern machine learning systems employ \emph{explicit} regularization techniques. In fact, as we show in the experimental section, the \emph{implicit} bias due to (stochastic) gradient descent does not prevent it from blatant overfitting in the matrix completion problem.

We propose solving the ERM problem~\eqref{prob:erm_ms} using dropout, where at training time, corresponding columns of $\U$ and $\V$ are dropped uniformly at random. As opposed to an \emph{implicit} effect of gradient descent, dropout  \emph{explicitly} regularizes the empirical objective. It is then natural to ask, in the case of matrix sensing, if dropout also biases the ERM towards certain low norm solutions. To answer this question, we begin with the observation that dropout can be viewed as an instance of SGD on the following objective~\cite{cavazza2018dropout,mianjy2018implicit}: 
\begin{equation}\label{eq:dropout_erm_sensing}
\hat{L}_{\text{drop}}(\U,\V)=\hat\E_j\E_\B(y_j - \langle \U\B\V^\top, \A^{(j)} \rangle)^2, 
\end{equation}
where $\B\in \R^{d_1\times d_1}$ is a diagonal matrix whose diagonal elements are Bernoulli random variables distributed as $\B_{ii} \sim \frac1{1-p}\textrm{Ber}(1-p)$. 
It is easy to show that for  $p\in[0,1)$:
\begin{align}
\label{eq:dropout_obj_sensing}
    \hat{L}_\text{drop}(\U,\V) &=\hat{L}(\U,\V)+\frac{p}{1-p}\hat{R}(\U,\V),
\end{align}
where $\hat{R}(\U,\V):=\sum_{i=1}^{d_1}\hat\E_j(\u_i^\top \A^{(j)}\v_i)^2$ is a data-dependent term that captures the \emph{explicit} regularizer due to dropout. A similar result was shown by \citet{cavazza2018dropout} and \citet{mianjy2018implicit}, but we provide a proof for completeness (see Proposition~\ref{prop:dropout_reg_sensing} in the Appendix). 

We show that the \emph{explicit} regularizer concentrates around its expected value w.r.t. the data distribution (see Lemma~\ref{lem:concentration} in the Appendix). Furthermore, given that we seek a minimum of $\hat{L}_{\textrm{drop}}$, it suffices to consider the factors with the minimal value of the regularizer among all that yield the same empirical loss. This motivates studying the the following distribution-dependent \emph{induced} regularizer:
\begin{equation*}
\Theta(\M)\!:=\!\!\minim{\U\V^\top=\M}{\!R(\U,\!\V)}, \quad\textrm{where} \ R(\U,\!\V)\!:=\! \E_\A[\hat{R}(\U,\!\V)]. 
\end{equation*}
For a wide range of random measurements, $\Theta(\cdot)$ turns out to be a ``suitable'' regularizer. Here, we instantiate two important examples (see Proposition~\ref{prop:induced} in the Appendix).

\begin{paragraph}{Gaussian Measurements.}
For all $j\in [n]$, let $\A^{(j)}$ be standard Gaussian matrices. In this case, it is easy to see that $\L(\U,\V)=\|\M_*-\U\V^\top\|_F^2$ and we recover the matrix factorization problem. Furthermore, we know from~\cite{cavazza2018dropout, mianjy2019dropout} that dropout regularizer acts as trace-norm regularization, i.e., $\Theta(\M) = \frac1{d_1}\| \M \|_*^2$.
\end{paragraph}

\begin{paragraph}{Matrix Completion.}
For all $j\in [n]$, let $\A^{(j)}$ be an indicator matrix whose $(i,k)$-th element is selected randomly with probability $p(i)q(k)$, where $p(i)$ and $q(k)$ denote the probability of choosing the $i$-th row and the $k$-th column, respectively. Then
$$\Theta(\M)= \frac1{d_1} \| \diag(\sqrt p)\U\V^\top \diag(\sqrt q) \|_*^2$$ is the \emph{weighted trace-norm} studied by~\citet{srebro2010collaborative} and~\citet{foygel2011learning}.
\end{paragraph}

These observations are specifically important because they connect dropout, an algorithmic heuristic in deep learning, to strong complexity measures that are empirically effective as well as theoretically well understood. To illustrate, here we give a generalization bound for matrix completion using dropout in terms of the value of the \emph{explicit} regularizer at the minimizer. 
\begin{theorem}\label{thm:generalization_sensing_expected} \normalfont 
Assume that $d_2\geq d_0$ and $\|\M_*\| \leq 1$. Furthermore, assume that $\min_{i,k}p(i)q(k) \geq \frac{\log(d_2)}{n\sqrt{d_2 d_0}}$. Let $(\U,\V)$ be a minimizer of the dropout ERM objective in equation~\eqref{eq:dropout_obj_sensing}.
Let $\alpha$ be such that $R(\U,\V)\leq \alpha/{d_1}$. Then, for any $\delta\in(0,1)$, the following generalization bounds holds with probability at least $1-\delta$ over a sample of size $n$:
\begin{equation*}
L(g(\U\V^\top)) \leq \hat{L}(\U,\V) + 8\sqrt\frac{2\alpha d_2 \log(d_2) + \frac14 \log(2/\delta)}{n}
\end{equation*}
where $g(\M)$ thresholds $\M$ at $\pm 1$, i.e. $g(\M)(i,j) = \max\{ -1, \min\{ 1, \M(i,j) \} \},$ and $L(g(\U\V^\top)):=\E(y-\langle g(\U\V^\top),\A\rangle )^2$ is the \emph{true risk} of $g(\U\V^\top)$.
\end{theorem}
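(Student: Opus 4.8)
The plan is to convert the bound on the explicit regularizer at the minimizer into a constraint placing $\M := \U\V^\top$ inside a weighted-trace-norm ball, and then run a standard Rademacher-complexity argument over that fixed ball. Concretely, since $(\U,\V)$ minimizes $\hat L_{\mathrm{drop}}$ and the induced regularizer satisfies $\Theta(\M) = \min_{\U\V^\top=\M} R(\U,\V)$, the hypothesis $R(\U,\V)\le\alpha/d_1$ forces $\Theta(\M)\le\alpha/d_1$; combined with the Matrix Completion instantiation $\Theta(\M)=\frac1{d_1}\|\diag(\sqrt p)\M\diag(\sqrt q)\|_*^2$, this yields $\|\diag(\sqrt p)\M\diag(\sqrt q)\|_*\le\sqrt\alpha$. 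Hence it suffices to prove a \emph{uniform} generalization bound over the fixed class $\cF_\alpha:=\{\A\mapsto\langle\M,\A\rangle : \|\diag(\sqrt p)\M\diag(\sqrt q)\|_*\le\sqrt\alpha\}$, of which the learned $\M$ is a member.

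Next I would set up the generalization gap. Each $\A$ is the indicator of a single entry $(i,k)$ drawn with probability $p(i)q(k)$, so $\langle\M,\A\rangle=\M(i,k)$ and $y=\M_*(i,k)$ with $|y|\le\|\M_*\|\le1$. Two facts make the loss tractable: (i) thresholding can only decrease the empirical squared loss, i.e.\ $\hat L(g(\M))\le\hat L(\M)=\hat L(\U,\V)$, because projecting onto $[-1,1]\ni y$ cannot increase distance to $y$; and (ii) the map $t\mapsto(y-g(t))^2$ is $4$-Lipschitz and bounded by $4$ on the relevant range, since $g$ is $1$-Lipschitz and the squared loss has derivative bounded by $4$ on $[-1,1]$. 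The standard Rademacher bound for bounded losses together with Talagrand's contraction lemma then gives, with probability at least $1-\delta$,
\[
 L(g(\M))\ \le\ \hat L(\U,\V) + 8\,\fR_n(\cF_\alpha) + 4\sqrt{\tfrac{\log(2/\delta)}{2n}},
\]
using that post-composition with the $1$-Lipschitz $g$ does not increase $\fR_n(\cF_\alpha)$.

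The crux is bounding $\fR_n(\cF_\alpha)$. Writing $\Zz:=\sum_{i=1}^n\sigma_i\A^{(i)}$ and using trace-norm/spectral-norm duality together with the rescaling identity $\langle\M,\Zz\rangle=\langle\diag(\sqrt p)\M\diag(\sqrt q),\,\diag(1/\sqrt p)\Zz\diag(1/\sqrt q)\rangle$, one obtains
\[
 \fR_n(\cF_\alpha)\ \le\ \frac{\sqrt\alpha}{n}\,\E_{\A,\sigma}\big\|\diag(\tfrac1{\sqrt p})\,\Zz\,\diag(\tfrac1{\sqrt q})\big\|.
\]
The matrix inside is a random matrix whose nonzeros are the signs $\sigma_i$ at the sampled locations, rescaled by $1/\sqrt{p(i)q(k)}$; its expected spectral norm is controlled by a matrix-Bernstein / non-commutative Khintchine estimate, following \citet{foygel2011learning}. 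The assumption $\min_{i,k}p(i)q(k)\ge\log(d_2)/(n\sqrt{d_2d_0})$ is exactly what keeps the rescaled spikes from blowing up the operator norm, and is what permits using the \emph{unsmoothed} weighted trace-norm directly; it should yield $\fR_n(\cF_\alpha)$ of order $\sqrt{\alpha d_2\log(d_2)/n}$, which, after combining the two terms under a single square root and collecting constants, gives the stated bound. The main obstacle is precisely this operator-norm concentration: obtaining the dimension factor $d_2\log d_2$ with the correct constants requires care in separating the diagonal-variance part from the off-diagonal part and in invoking the tail bound under the marginal lower bound. Finally, since $\alpha$ is defined through the learned $(\U,\V)$ it is data-dependent; to make the uniform bound over the \emph{fixed} $\cF_\alpha$ rigorous I would either treat $\alpha$ as prescribed in advance (as the statement reads) or discretize over a grid of scales and union-bound, which only inflates the $\log(2/\delta)$ term.
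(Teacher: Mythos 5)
Your proposal follows essentially the same route as the paper's proof: reduce the regularizer bound $R(\U,\V)\le\alpha/d_1$ to membership of $\U\V^\top$ in the weighted-trace-norm ball $\cM_\alpha$ via the induced regularizer $\Theta$, invoke the Rademacher complexity bound of \citet{foygel2011learning} for that class (which you sketch rather than cite, but it is the same key lemma, and the non-degeneracy assumption plays the same role), and then apply the standard bounded-regression generalization bound with Talagrand's contraction for the clipping $g$, combining terms under one square root at the end. The argument is correct and matches the paper's.
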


The proof of Theorem~\ref{thm:generalization_sensing_expected} follows from standard generalization bounds for $\ell_2$ loss~\citep{mohri2018foundations} based on the Rademacher complexity~\citep{bartlett2002rademacher} of the class of functions with weighted trace-norm bounded by $\sqrt{\alpha}$, i.e. $\cM_\alpha:=\{ \M: \ \| \diag(\sqrt\p) \M \diag(\sqrt\q) \|_*^2 \leq \alpha \}$. The non-degeneracy condition $\min_{i,j}p(i)q(j) \geq \frac{\log(d_2)}{n\sqrt{d_2 d_0}}$ is required to obtain a bound on the Rademacher complexity of $\cM_\alpha$, as established by~\citet{foygel2011learning}.

We note that for large enough sample size, $\hat{R}(\U, \V) \approx R(\U, \V) \approx \Theta(\U\V^\top) = \frac1{d_1}\| \diag(\sqrt p)\U\V^\top \diag(\sqrt\q) \|_*^2$, where the second approximation is due the fact that the pair $(\U,\V)$ is a minimizer. That is, compared to the weighted trace-norm, the value of the explicit regularizer at the minimizer roughly scales as $1/d_1$. Hence the assumption $\hat{R}(\U,\V)\leq \alpha/{d_1}$ in the statement of the corollary.

In practice, for models that are trained with dropout, the training error $\hat{L}(\U,\V)$ is negligible (see Figure~\ref{fig:sensing} for experiments on the MovieLens dataset). Moreover, given that the sample size is large enough, the third term can be made arbitrarily small. Having said that, the second term, which is $\tilde{O}(\sqrt{\alpha d_2/n})$, dominates the right hand side of generalization error bound in Theorem~\ref{thm:rademacher_sensing}. In Appendix, we also give optimistic generalization bounds that decay as $\tilde{O}(ad_2/n)$.

Finally, the required sample size heavily depends on the value of the explicit regularizer (i.e., $\alpha/d_1$)
at a minimizer, and hence, on the dropout rate $p$. In particular, increasing the dropout rate increases the regularization parameter $\lambda:=\frac{p}{1-p}$, thereby intensifying the penalty due to the explicit regularizer. Intuitively, a larger dropout rate $p$ results in a smaller $\alpha$, thereby a tighter generalization gap can be guaranteed. We show through experiments that that is indeed the case in practice.
\section{Non-linear Networks}\label{sec:dnn}
Next, we focus on neural networks with a single hidden layer. Let $\cX\subseteq \R^{d_0}$ and $\cY \subseteq [-1, 1]^{d_2}$ denote the input and output spaces, respectively. Let $\cD$ denote the joint probability distribution on $\cX \times \cY$. 
Given $n$ examples $\{ (\x_i,\y_i)\}_{i=1}^n \sim \cD^n$ drawn i.i.d. from the joint distribution and a loss function $\ell: \cY \times \cY \to \R$, the goal of learning is to find a hypothesis $f_\w:\cX\to\cY$, parameterized by $\w$, that has a small \emph{population risk} $L(f_\w):=\E_{\cD}[\ell(f_{\w}(\x),\y)]$. 

We focus on the squared $\ell_2$ loss, i.e., $\ell(\y,\y')= \|\y - \y'\|^2$, and study the generalization properties of the dropout algorithm for minimizing the \emph{empirical risk} ${\widehat{L}(f_\w):=\hat\E_i[\| \y_i - f_\w(\x_i) \|^2]}$. We consider the hypothesis class associated with feed-forward neural networks with $2$ layers, i.e., functions of the form $f_\w(\x)=\U \sigma(\V^\top \x )$, where $\U=[\u_1,\ldots,\u_{d_1}] \in \R^{d_2\times d_1}, \V=[\v_1,\ldots,\v_{d_1}] \in \R^{d_0\times d_1}$ are the weight matrices. The parameter $\w$ is the collection of weight matrices $\{\U,\V\}$ and $\sigma: \R \to \R$ is the ReLU activation function applied entrywise to an input vector.

As in Section~\ref{sec:sensing}, we view dropout as an instance of stochastic gradient descent on the following \emph{dropout objective}: 
\begin{equation}\label{eq:drop_obj}
{\!\hat{L}_\text{drop}(\w)} := \hat\E_i \E_{\B}\| \y_i\! -\! \U  \B\sigma(\V^\top\x_i) \|^2,
\end{equation}
where $\B$ is a diagonal random matrix  with diagonal elements distributed identically and independently as $\B_{ii}\sim \frac1{1-p} \text{Bern}(1-p), \ i \in [d_1]$, for some \emph{dropout rate} $p$. We seek to understand the \emph{explicit  regularizer} due to dropout:
\begin{equation}\label{eq:exp_reg}
\hat{R}(\w):=\hat{L}_\text{drop}(\w)-\hat{L}(\w). 
\end{equation}

We denote the output of the $i$-th hidden node on an input vector $\x$ by $a_{i}(\x)\in \R$; for example, $a_{2}(\x)=\sigma(\v_2^\top \x)$. Similarly, the vector $\a(\x)\in \R^{d_1}$ denotes the activation of the hidden layer on input $\x$. Using this notation, we can rewrite the objective in~(\ref{eq:drop_obj}) as $\hat{L}_\text{drop}(\w):=\E_i \E_{\B}\|\y_i - \U \B\a(\x_i)\|^2$. It is then easy to show that the regularizer due to dropout in~(\ref{eq:exp_reg}) is given as (see Proposition~\ref{prop:dropout_reg_dnn} in Appendix):
\begin{equation*}
\hat{R}(\w)= \frac{p}{1-p}{\sum_{j=1}^{d_1}\|\u_j\|^2 \hat{a}_j^2}, \ \textrm{where} \ \hat{a}_j=\sqrt{\hat\E_i a_{j}(\x_i)^2}.
\end{equation*}

The explicit regularizer $\hat{R}(\w)$ is a summation over hidden nodes, of the product of the squared norm of the outgoing weights with the empirical second moment of the output of the corresponding neuron. We should view it as a data-dependent variant of the $\ell_2$ path-norm of the network, studied recently by \citet{neyshabur2015norm} and shown to yield capacity control in deep learning. Indeed, 
if we consider ReLU activations and input distributions that are symmetric and isotropic~\cite{mianjy2018implicit}, the expected regularizer is equal to the sum over all paths from input to output of the product of the squares of weights along the paths, i.e.,

\begin{equation}
R(\w):=\E[\hat{R}(\w)]=\frac12\sum_{i_0,i_1,i_2=1}^{d_0,d_1,d_2}\U(i_2,i_1)^2\V(i_0,i_1)^2, \nonumber
\end{equation}
which is precisely the squared $\ell_2$ path-norm of the network. We refer the reader to Proposition~\ref{prop:reg_two_layer} in the Appendix for a formal statement and proof.

\subsection{Generalization Bounds}
To understand the generalization properties of dropout, we focus on the following distribution-dependent~class
$$\cF_{\alpha} := \{f_\w : \x \mapsto \u^\top\sigma(\V^\top\x), \  \sum_{i=1}^{d_1}| u_i | a_i \leq \alpha  \}.$$
where $\u\in\R^{d_1}$ is the top layer weight vector, $u_i$ denotes the $i$-th entry of $\u$, and $a_i^2:=\E_\x [\hat{a}_i^2] = \E_\x[a_{i}(\x)^2]$ is the expected squared neural activation for the $i$-th hidden node. For simplicity, we focus on networks with one output neuron $d_2=1$; extension to multiple output neurons is rather straightforward.

We argue that networks that are trained with dropout belong to the class $\cF_{\alpha}$, for a small value of $\alpha$. In particular, by Cauchy-Schwartz inequality, it is easy to to see that $\sum_{i=1}^{d_1}|u_i|a_i \leq \sqrt{d_1 R(\w)}$. Thus, for a fixed width, dropout implicitly controls the function class $\cF_{\alpha}$. More importantly, this inequality is loose if a small subset of hidden nodes $\cJ \subset [d_1]$ \emph{``co-adapt''} in a way that for all $j \in [d_1]\setminus \cJ$, the other hidden nodes are almost inactive, i.e. $ u_j a_j \approx 0$. In other words, by minimizing the expected regularizer, dropout is biased towards networks where the gap between $R(\w)$ and $(\sum_{i=1}^{d_1}|u_i|a_i)^2/d_1$ is small, which in turn happens if $|u_i|a_i \approx |u_j|a_j, \forall i,j \in [d_1]$. In this sense, dropout breaks \emph{``co-adaptation''} between neurons by promoting solutions with nearly equal contribution from hidden neurons.

As we mentioned in the introduction, a bound on the dropout regularizer is not sufficient to guarantee a bound on a norm-based complexity measures that are common in the deep learning literature (see, e.g. \cite{golowich2018size} and the references therein), whereas a norm bound on the weight vector would imply a bound on the explicit regularizer due to dropout. Formally, we show the following. 
\begin{proposition}\label{prop:weak_measure}
For any $C>0$, there exists a distribution on the unit Euclidean sphere, and a network $f_{\w}:\x\mapsto \sigma(\w^\top\x)$, such that $R(\w)=\sqrt{\E\sigma(\w^\top\x)^2}\leq 1$, while $\|\w\|>C$.
\end{proposition}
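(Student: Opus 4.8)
The plan is to exploit the fact that the dropout regularizer $R(\w)=\sqrt{\E\sigma(\w^\top\x)^2}$ is a \emph{data-dependent} quantity: it sees the weight vector only through the scalar pre-activations $\w^\top\x$ on the support of the input distribution. Since we are free to choose that distribution, I would keep every pre-activation $\w^\top\x$ bounded while letting $\|\w\|$ grow without limit, by supporting the distribution on directions nearly orthogonal to $\w$. On the sphere one only has $|\w^\top\x|\le\|\w\|\,\|\x\|=\|\w\|$, and this worst case is attained solely in the direction of $\w$ --- precisely the direction I will avoid.

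Concretely, I would fix $C>0$, work in $\R^{d_0}$ with $d_0\ge 2$, pick any $M>\max\{C,1\}$, and set $\w=M\e_1$, so that $\|\w\|=M>C$. I would then take the input distribution to be the point mass at a single sphere point $\x_0$ whose first coordinate equals $1/M$, e.g.\ $\x_0=(1/M)\e_1+\sqrt{1-1/M^2}\,\e_2$. A one-line computation then gives $\w^\top\x_0=M\cdot(1/M)=1$, hence $\sigma(\w^\top\x_0)=1$ and $R(\w)=\sqrt{\sigma(\w^\top\x_0)^2}=1\le 1$, while $\|\w\|=M>C$, establishing the claim.

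I expect essentially no technical obstacle: the entire content is the \emph{choice} of construction, and the verification is immediate. The only point worth emphasizing is the conceptual one motivating the section --- the same $\w$ can have arbitrarily large Euclidean norm yet a small regularizer value, so $R(\w)$ neither controls nor is controlled by a norm-based complexity measure of $\w$ alone. I would also note that the ReLU affords even more slack: placing the mass instead on the negative half-space $\{\x:\w^\top\x\le 0\}$ (e.g.\ $\x_0=-\e_1$) zeroes the activation entirely, yielding $R(\w)=0$ for \emph{any} $\|\w\|$, a construction that even works in dimension $d_0=1$, where there is no orthogonal direction in which to hide.
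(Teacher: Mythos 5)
Your construction is correct and proves the proposition exactly as stated: a Dirac mass at a unit vector $\x_0$ with $\w^\top\x_0=1$ gives $R(\w)=1$ while $\|\w\|=M>C$, and the verification is a one-line computation. The underlying idea is the same as the paper's --- support the distribution away from the direction of $\w$ so that the pre-activations stay $O(1)$ while $\|\w\|$ blows up --- but the paper uses a slightly less degenerate witness: a three-point, \emph{zero-mean} distribution on the sphere that places mass $\delta$ on $\e_1$ and mass $\tfrac{1-\delta}{2}$ on each of two points with negative first coordinate, paired with $\w=\delta^{-1/2}\e_1$ (so $\E\sigma(\w^\top\x)^2=\delta\cdot\delta^{-1}=1$ and $\|\w\|=\delta^{-1/2}$; note the paper's closing line should read $\delta=1/C^2$, not $\delta=C^2$). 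The zero-mean choice buys robustness against the obvious rejoinder that the counterexample disappears after centering the data, and it keeps the example consistent with $\beta$-retentiveness in the direction of $\w$ (indeed $\E\sigma(\w^\top\x)^2=(1-\delta)\,\E(\w^\top\x)^2$ there), whereas your point mass --- and a fortiori your second variant with all mass in the negative half-space, where $R(\w)=0$ --- is maximally degenerate. Since the proposition imposes no non-degeneracy condition on the distribution, your proof is a valid and more elementary derivation of the stated claim.
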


In other words, even though we connect the dropout regularizer to path-norm, the data-dependent nature of the regularizer prevents us from leveraging that connection in data-independent manner (i.e., for all distributions). At the same time, making strong distributional assumptions (as in Proposition~\ref{prop:reg_two_layer}) would be impractical. Instead, we argue for the following milder condition on the input distribution which we show as sufficient to ensure generalization.
\begin{assumption}[$\beta$-retentive]~\label{ass:symmetry}\normalfont
The marginal input distribution is $\beta$-retentive for some $\beta\in(0,1/2]$, if for any non-zero vector $\v \in \R^d$, it holds that $\E\sigma(\v^\top\x)^2 \geq \beta \E(\v^\top\x)^2$. 
\end{assumption}

Intuitively, what the assumption implies is that the variance (aka, the information or signal in the data) in the pre-activation at any node in the network is not quashed considerably due to the non-linearity. In fact, no reasonable training algorithm should learn weights where $\beta$ is small. However, we steer clear from algorithmic aspects of dropout training, and make the assumption above for every weight vector as we will need it when carrying out a union bound.

We now present the first main result of this section, which bounds the Rademacher complexity of $\cF_{\alpha}$ in terms of $\alpha$, the retentiveness coefficient $\beta$, and the Mahalanobis norm of the data with respect to the pseudo-inverse of the second moment, i.e. $\|\X\|_{\C^\dagger}^2=\sum_{i=1}^{n}\x_i^\top\C^{\dagger}\x_i$.

\begin{theorem}\label{thm:rademacher_dnn_new}
For  any sample $S=\{(\x_i,\y_i)\}_{i=1}^n$ of size $n$, 
$$\fR_{\cS}(\cF_{\alpha}) \leq \frac{2\alpha \| \X\|_{\C^\dagger}}{n\sqrt{\beta}}.$$ 
Furthermore, it holds for the expected Rademacher complexity that $\fR_n(\cF_{\alpha}) \leq 2\alpha \sqrt\frac{\rank(\C)}{\beta n}.$
\end{theorem}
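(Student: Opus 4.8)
The plan is to expand the empirical Rademacher complexity, dualize the weighted $\ell_1$ constraint defining $\cF_\alpha$ to collapse the supremum onto a single hidden direction, convert the data-dependent activation normalization into a Mahalanobis norm via $\beta$-retentiveness, and finally linearize the ReLU by contraction so that the dual Mahalanobis norm appears.

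Concretely, for a fixed draw of the Rademacher signs I would write $f_\w(\x) = \sum_{j=1}^{d_1} u_j\sigma(\v_j^\top\x)$ and exchange the order of summation to get $\frac1n\sum_i\sigma_i f_\w(\x_i) = \sum_j u_j g_j$, where $g_j := \frac1n\sum_i\sigma_i\sigma(\v_j^\top\x_i)$. Since the only constraint is $\sum_j|u_j|a_j\leq\alpha$ and the columns of $\V$ are otherwise free, the inner supremum over $\u$ is the support function of a weighted $\ell_1$ ball, namely $\alpha\max_j|g_j|/a_j$; taking the supremum over $\V$ then reduces to a single vector, giving $\fR_\cS(\cF_\alpha) = \frac{\alpha}{n}\E_\sigma\sup_{\v\neq 0}\frac{|\sum_i\sigma_i\sigma(\v^\top\x_i)|}{\sqrt{\E_\x\sigma(\v^\top\x)^2}}$.

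Next I would apply Assumption~\ref{ass:symmetry} to lower-bound the denominator by $\sqrt\beta\,\|\v\|_\C$, and use positive homogeneity of $\sigma$ to restrict the supremum to $\|\v\|_\C\leq 1$ (directions in the kernel of $\C$ contribute nothing, since $\x_i$ lies in the range of $\C$ almost surely). A symmetrization step removes the absolute value at the cost of a factor $2$ (replacing $\sigma_i$ by $-\sigma_i$ leaves the law unchanged), and the Ledoux--Talagrand contraction inequality then strips the $1$-Lipschitz ReLU, leaving $\frac{2\alpha}{n\sqrt\beta}\E_\sigma\sup_{\|\v\|_\C\leq1}\v^\top(\sum_i\sigma_i\x_i)$. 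Recognizing this supremum as the dual of the Mahalanobis norm yields $\|\sum_i\sigma_i\x_i\|_{\C^\dagger}$ (valid because $\sum_i\sigma_i\x_i\in\mathrm{range}(\C)$), and Jensen's inequality together with $\E_\sigma[\sigma_i\sigma_j]=\delta_{ij}$ bounds its expectation by $\sqrt{\sum_i\x_i^\top\C^\dagger\x_i}=\|\X\|_{\C^\dagger}$, establishing the first claim.

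For the expected Rademacher complexity I would take expectations over the sample, apply Jensen once more, and use $\E_\x[\x^\top\C^\dagger\x]=\tr(\C^\dagger\C)=\rank(\C)$ (since $\C^\dagger\C$ is the orthogonal projection onto the range of $\C$), so that $\E_\x\|\X\|_{\C^\dagger}\leq\sqrt{n\,\rank(\C)}$ and the bound $2\alpha\sqrt{\rank(\C)/(\beta n)}$ follows. The main obstacle is the first reduction: correctly dualizing the weighted-$\ell_1$ ball while the weights $a_j$ themselves depend on $\v_j$, and then making the data-dependent normalization $\sqrt{\E_\x\sigma(\v^\top\x)^2}$ interact cleanly with the pseudo-inverse dual norm---this is exactly where $\beta$-retentiveness and the range-of-$\C$ observation are needed.
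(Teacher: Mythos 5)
Your proposal is correct and follows essentially the same route as the paper's proof: reduce the supremum over the outer weights to a single hidden direction via the weighted $\ell_1$ constraint (you do this by an exact dual-norm computation where the paper uses a H\"older-type ``sup of product $\leq$ product of sups'' bound), then apply $\beta$-retentiveness, symmetrize away the absolute value at a cost of $2$, strip the ReLU by contraction, pass to the dual Mahalanobis norm, and finish with Jensen and $\tr(\C^\dagger\C)=\rank(\C)$. If anything you are slightly more explicit than the paper at two points it glosses over --- invoking the Ledoux--Talagrand contraction by name and justifying that kernel-of-$\C$ directions are harmless because $\x\in\mathrm{range}(\C)$ almost surely --- but the argument is the same.
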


First, note that the bound depends on the quantity $\|\X\|_{\C^\dagger}$ which can be in the same order as $\|\X\|_F$ with both scaling as $\asymp \sqrt{n d_0}$; the latter is more common in the literature~\cite{neyshabur2018towards,bartlett2017spectrally,neyshabur2017pac,golowich2018size,neyshabur2015norm}. This is unfortunately unavoidable, unless one makes stronger distributional assumptions. 

Second, as we discussed earlier, the dropout regularizer directly controls the value of $\alpha$, thereby controlling the Rademacher complexity in Theorem~\ref{thm:rademacher_dnn_new}. This bound also gives us a bound on the Rademacher complexity of the networks trained using dropout. To see that, consider the following class of networks with bounded explicit regularizer, i.e., $\cH_r:=\{ h_\w:\x\mapsto \u^\top\sigma(\V^\top\x), \ R(\u,\V)\leq r \}$. Then, Theorem~\ref{thm:rademacher_dnn_new} yields $\fR_{\cS}(\cH_r) \leq \frac{2\sqrt{d_1 r} \| \X\|_{\C^\dagger}}{n\sqrt{\beta}}.$ In fact, we can show that this bound is tight up to $1/\sqrt{\beta}$ by a reduction to the linear case. Formally, we show the following. 

\begin{theorem}[Lowerbound on $\fR_{\cS}$]\label{thm:lb_rademacher} 
There is a constant $c$ such that for any scalar $r > 0$,
$\fR_{\cS}(\cH_r) \geq \frac{c\sqrt{d_1 r} \| \X\|_{\C^\dagger}}{n}.$
\end{theorem}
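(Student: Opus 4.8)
The plan is to prove the lower bound by exhibiting a large \emph{linear} subclass inside $\cH_r$ whose Rademacher complexity can be computed exactly. The engine is the mirror-pair identity for the ReLU, namely $\sigma(s)-\sigma(-s)=s$ and $\sigma(s)^2+\sigma(-s)^2=s^2$. Fix a target direction $\w$ and set $m=\lfloor d_1/2\rfloor$. I would use $2m$ hidden units arranged in $m$ mirror pairs, the $k$-th pair having hidden weights $\v_{2k-1}=\w$, $\v_{2k}=-\w$ and output weights $u_{2k-1}=1/m$, $u_{2k}=-1/m$, with any leftover unit set to zero. By the first identity this network computes exactly $\x\mapsto\w^\top\x$, and by the second identity each pair contributes $m^{-2}(\E[\sigma(\w^\top\x)^2]+\E[\sigma(-\w^\top\x)^2])=m^{-2}\|\w\|_\C^2$ to the regularizer $R(\w)=\sum_j u_j^2 a_j^2$, so $R(\w)=\tfrac1m\|\w\|_\C^2$. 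Hence whenever $\|\w\|_\C\le B:=\sqrt{m\,r}$ the predictor $\x\mapsto\w^\top\x$ lies in $\cH_r$; that is, $\cL_B:=\{\x\mapsto\w^\top\x:\|\w\|_\C\le B\}\subseteq\cH_r$, and since Rademacher complexity is monotone under set inclusion, $\fR_\cS(\cH_r)\ge\fR_\cS(\cL_B)$.

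Next I would evaluate $\fR_\cS(\cL_B)$ by duality. Writing $\z:=\sum_{i=1}^n\sigma_i\x_i$, we have $\fR_\cS(\cL_B)=\tfrac1n\E_\sigma\sup_{\|\w\|_\C\le B}\w^\top\z$. Each sample point lies in $\mathrm{range}(\C)$ almost surely, since any $\v\perp\mathrm{range}(\C)$ satisfies $\E(\v^\top\x)^2=\v^\top\C\v=0$; hence $\z\in\mathrm{range}(\C)$, and decomposing $\w$ along $\mathrm{range}(\C)$ and $\ker(\C)$ shows the kernel part is cost-free but also orthogonal to $\z$. The supremum over the Mahalanobis ball therefore equals $B\sqrt{\z^\top\C^\dagger\z}$, giving $\fR_\cS(\cL_B)=\tfrac Bn\,\E_\sigma\sqrt{\z^\top\C^\dagger\z}=\tfrac Bn\,\E_\sigma\big\|\textstyle\sum_i\sigma_i\y_i\big\|$, where $\y_i:=\C^{\dagger/2}\x_i$.

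It remains to lower-bound this expected norm, and this is the step I expect to be delicate. Its second moment is exactly the target quantity, $\E_\sigma\|\sum_i\sigma_i\y_i\|^2=\sum_i\|\y_i\|^2=\sum_i\x_i^\top\C^\dagger\x_i=\|\X\|_{\C^\dagger}^2$, so the crux is to pass from this second moment to a first-moment lower bound of the form $\E_\sigma\|\sum_i\sigma_i\y_i\|\ge c'\,(\E_\sigma\|\sum_i\sigma_i\y_i\|^2)^{1/2}$. This is precisely the Khintchine--Kahane inequality for Hilbert-space-valued Rademacher sums, which yields $c'=1/\sqrt2$; an elementary alternative is Hölder's inequality $\E W\le(\E\sqrt W)^{2/3}(\E W^2)^{1/3}$ with $W=\|\sum_i\sigma_i\y_i\|^2$, combined with the standard fourth-moment bound $\E W^2\le C(\E W)^2$ for a degree-two Rademacher chaos. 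Putting the pieces together gives $\fR_\cS(\cH_r)\ge\tfrac Bn\cdot\tfrac1{\sqrt2}\|\X\|_{\C^\dagger}=\tfrac{\sqrt{m\,r}}{\sqrt2\,n}\|\X\|_{\C^\dagger}\ge\tfrac{c\sqrt{d_1 r}}{n}\|\X\|_{\C^\dagger}$ for an absolute constant $c$, the floor $m=\lfloor d_1/2\rfloor$ forcing only a harmless constant adjustment. Finally, the $1/\sqrt\beta$ gap relative to the upper bound of Theorem~\ref{thm:rademacher_dnn_new} is expected, since the witnessing predictors here are linear and hence never feel the $\beta$-retentiveness of the nonlinearity.
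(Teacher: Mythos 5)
Your proposal is correct and follows essentially the same route as the paper's proof: the same mirror-pair construction $\sigma(s)-\sigma(-s)=s$, $\sigma(s)^2+\sigma(-s)^2=s^2$ embedding the Mahalanobis ball $\{\w:\E(\w^\top\x)^2\le d_1r/2\}$ of linear predictors into $\cH_r$, the same change of variables via $\C^{\dagger/2}$ to evaluate the linear class's complexity, and the same Khintchine--Kahane finish. Your extra care with $\lfloor d_1/2\rfloor$ and with the kernel of $\C$ only refines details the paper glosses over.
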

Moreover, it is easy to give a generalization bound based on Theorem~\ref{thm:rademacher_dnn_new} that depends only on the distribution dependent quantities $\alpha$ and $\beta$. Let $g_\w(\cdot) := \max\{ -1, \min\{1, f_\w(\cdot)\} \}$ project the network output $f_\w$ onto the range $[-1, 1]$. We have the following generalization gurantees for $g_\w$.

\begin{corollary}\label{cor:gen_bound_beta_regression}
For any $\w\in \cF_{\alpha}$, for any $\delta\in(0,1)$, the following generalization bound holds with probability at least $1-\delta$ over a sample $\cS$ of size $n$
\begin{equation*}
L_\cD(g_\w) \leq \hat{L}_\cS(g_\w) + \frac{16\alpha\|\X\|_{\C^\dagger}}{\sqrt{\beta}n}+12\sqrt\frac{\log(2/\delta)}{2n}
\end{equation*}
\end{corollary}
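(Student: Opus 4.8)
The plan is to reduce the statement to the standard Rademacher-complexity-based generalization bound for bounded losses and then transfer the complexity of the loss class back to that of $\cF_\alpha$ via a contraction argument, invoking Theorem~\ref{thm:rademacher_dnn_new} only at the very end. First I would fix the loss class $\cG := \{(\x,y)\mapsto (g_\w(\x)-y)^2 : \w\in\cF_\alpha\}$ and observe that it is uniformly bounded: since $d_2=1$ the labels satisfy $y\in[-1,1]$, and the clipped predictions satisfy $g_\w(\x)\in[-1,1]$, so every element of $\cG$ takes values in $[0,4]$. The standard uniform deviation bound (e.g.~\cite{mohri2018foundations,bartlett2002rademacher}), applied after rescaling by the range $M=4$, then yields with probability at least $1-\delta$ that, simultaneously for all $\w\in\cF_\alpha$,
\[
L_\cD(g_\w)\leq \hat{L}_\cS(g_\w) + 2\,\fR_\cS(\cG) + 3M\sqrt{\frac{\log(2/\delta)}{2n}},
\]
and substituting $M=4$ already produces the last term $12\sqrt{\log(2/\delta)/(2n)}$ of the corollary.

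It remains to show $\fR_\cS(\cG)\le 4\,\fR_\cS(\cF_\alpha)$. To this end I would write each loss as the composition $(\x,y)\mapsto \psi_y(f_\w(\x))$, where $f_\w(\x)=\u^\top\sigma(\V^\top\x)\in\R$ is the un-clipped network and $\psi_y(t):=(\max\{-1,\min\{1,t\}\}-y)^2$, so that $g_\w=\max\{-1,\min\{1,f_\w\}\}$ is absorbed into $\psi_y$. The point is that $\psi_y$ is $4$-Lipschitz on all of $\R$: the clipping map is $1$-Lipschitz and maps into $[-1,1]$, while $s\mapsto(s-y)^2$ has derivative bounded by $|2(s-y)|\le 4$ for $s,y\in[-1,1]$, so by the chain rule $|\psi_y'|\le 4$ wherever it exists and $\psi_y'=0$ once clipping saturates. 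Talagrand's contraction lemma in its per-example form (each $\psi_{y_i}$ may differ but all share Lipschitz constant $4$) then gives $\fR_\cS(\cG)\le 4\,\fR_\cS(\cF_\alpha)$, folding both the clipping and the squaring into the Lipschitz constant so that the residual complexity is exactly that of the raw class $\cF_\alpha$.

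Finally I would invoke Theorem~\ref{thm:rademacher_dnn_new}, which gives $\fR_\cS(\cF_\alpha)\le 2\alpha\|\X\|_{\C^\dagger}/(n\sqrt\beta)$; chaining the two inequalities yields $2\,\fR_\cS(\cG)\le 8\,\fR_\cS(\cF_\alpha)\le 16\alpha\|\X\|_{\C^\dagger}/(n\sqrt\beta)$, which is precisely the middle term, and collecting the three contributions completes the proof. The individual steps are routine, so the only point demanding care is the contraction: I must use the version of the contraction lemma that permits a distinct $4$-Lipschitz function $\psi_{y_i}$ per sample (the labels $y_i$ differ across examples), and I must confirm that composing with the clip does not inflate the Lipschitz constant beyond $4$. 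Absorbing the clip into $\psi_y$ rather than treating it as a separate contraction step is what keeps the constant tight and lets me apply Theorem~\ref{thm:rademacher_dnn_new} to the unprojected class $\cF_\alpha$ directly.
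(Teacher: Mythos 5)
Your proof is correct and follows essentially the same route as the paper: a standard Rademacher-complexity uniform-convergence bound, a Talagrand contraction to pass from the loss class back to $\cF_\alpha$, and then Theorem~\ref{thm:rademacher_dnn_new}, arriving at identical constants. The only difference is bookkeeping: the paper invokes the regression-specific bound (Theorem~10.3 of \cite{mohri2018foundations}) with deviation parameter $M=2$, which already absorbs the $2M$-Lipschitzness of the squared loss into the factor $4M=8$, and then contracts the $1$-Lipschitz clip separately (Lemma~\ref{lem:rademacher_twice}), whereas you apply the generic bounded-class bound to the $[0,4]$-valued loss class and fold the clip and the square into a single per-example $4$-Lipschitz contraction, so the factor $8$ arises as $2\times 4$ instead of $4\times 2\times 1$.
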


\paragraph{$\beta$-independent Bounds.} 
Geometrically, $\beta$-retentiveness requires that for any hyperplane passing through the origin, both halfspaces contribute significantly to the second moment of the data in the direction of the normal vector. 
It is not clear, however, if $\beta$ can be estimated efficiently, given a dataset. Nonetheless, when $\cX\subseteq\R_+^{d_0}$, which is the case for image datasets, a simple \emph{symmetrization} technique, described below, allows us to give bounds that are $\beta$-independent; note that the bound still depend on the sample as $\|\X\|_{\C^\dagger}$. Here is the randomized symmetrization we propose. \emph{Given a training sample $\cS=\{(\x_i,y_i),\ i \in [n]\}$, consider the following randomized perturbation, $\cS'=\{(\zeta_i\x_i,y_i),\ i \in [n]\}$, where $\zeta_i$'s are i.i.d. Rademacher random  variables.} We give a generalization bound (w.r.t. the original data distribution) for the hypothesis class with bounded regularizer w.r.t. the perturbed data distribution. 
\begin{corollary}\label{cor:gen_bound_nobeta_regression}
Given an i.i.d. sample $\cS=\{(\x_i, \y_i)\}_{i=1}^n$, let
$$\cF'_{\alpha} := \{f_\w : \x \mapsto \u^\top\sigma(\V^\top\x), \  \sum_{i=1}^{d_1}| u_i | a_i' \leq \alpha  \},$$
where ${a_i'}^2:=\E_{\x,\zeta}[a_{i}(\zeta\x)^2]$. For any $\w\in \cF'_{\alpha}$, for any $\delta\in(0,1)$, the following generalization bound holds with probability at least $1-\delta$ over a sample of size $n$ and the randomization in symmetrization
\begin{align*}
L_\cD(g_\w) \leq 2\hat{L}_{\cS'}(g_\w) + \frac{46\alpha \|\X\|_{\C^\dagger}}{n}+24\sqrt\frac{\log(2/\delta)}{2n}.
\end{align*}
\end{corollary}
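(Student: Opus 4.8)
The plan is to reduce the claim to Corollary~\ref{cor:gen_bound_beta_regression} applied not to the original distribution $\cD$ but to the \emph{symmetrized} distribution, exploiting the fact that symmetrization forces the best possible retentiveness constant $\beta=1/2$ for free. Concretely, let $\cD'$ denote the law of $(\zeta\x,y)$ where $(\x,y)\sim\cD$ and $\zeta$ is an independent Rademacher sign. A single i.i.d.\ draw from $\cD'$ is exactly a symmetrized example $(\zeta_i\x_i,y_i)$, so $\cS'$ is an i.i.d.\ sample of size $n$ from $\cD'$, and the joint randomness over the original sample and the signs $\zeta_i$ is precisely the sample randomness for $\cD'$. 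Moreover $\cF'_\alpha$ uses ${a_i'}^2=\E_{\x,\zeta}[a_i(\zeta\x)^2]$, which is exactly the expected squared activation under $\cD'$; hence $\cF'_\alpha$ is the instantiation of $\cF_\alpha$ for the distribution $\cD'$. Everything therefore lines up to apply the $\beta$-dependent corollary to $\cD'$ directly.

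First I would verify the two invariances that make the data-dependent quantity carry over unchanged. Since $\zeta^2=1$, the second moment is preserved, $\C':=\E_{\x,\zeta}[(\zeta\x)(\zeta\x)^\top]=\E_\x[\x\x^\top]=\C$, and consequently $\|\X'\|_{(\C')^\dagger}^2=\sum_i \zeta_i^2\,\x_i^\top\C^\dagger\x_i=\|\X\|_{\C^\dagger}^2$. Next I would establish that $\cD'$ is $1/2$-retentive, using only the pointwise ReLU identity $\sigma(t)^2+\sigma(-t)^2=t^2$: for any nonzero $\v$,
\[
\E_{\cD'}\sigma(\v^\top\x')^2=\tfrac12\E_\x\big[\sigma(\v^\top\x)^2+\sigma(-\v^\top\x)^2\big]=\tfrac12\E_\x(\v^\top\x)^2=\tfrac12\E_{\cD'}(\v^\top\x')^2,
\]
so Assumption~\ref{ass:symmetry} holds with equality at $\beta=1/2$, the largest admissible value. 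Crucially this requires no distributional hypothesis at all; symmetrization manufactures retentiveness automatically, which is exactly what lets us drop the dependence on an unknown $\beta$.

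Applying Corollary~\ref{cor:gen_bound_beta_regression} to $\cD'$ with $\beta=1/2$, and substituting $\C'=\C$ and $\|\X'\|_{(\C')^\dagger}=\|\X\|_{\C^\dagger}$, yields for every $\w\in\cF'_\alpha$, with probability at least $1-\delta$ over the draw from $\cD'$ (i.e.\ over sample and signs),
\[
L_{\cD'}(g_\w)\le \hat{L}_{\cS'}(g_\w)+\frac{16\sqrt2\,\alpha\|\X\|_{\C^\dagger}}{n}+12\sqrt{\tfrac{\log(2/\delta)}{2n}}.
\]
The final step is to pass from $\cD'$ back to $\cD$. Splitting on the sign gives $L_{\cD'}(g_\w)=\tfrac12 L_\cD(g_\w)+\tfrac12\E_{\x,y}\|y-g_\w(-\x)\|^2\ge \tfrac12 L_\cD(g_\w)$, hence $L_\cD(g_\w)\le 2L_{\cD'}(g_\w)$; doubling the display and using $32\sqrt2\le 46$ produces the stated bound. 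The only genuinely lossy step is this last inequality, where the reflected error term is discarded, and it is the source of the factor $2$ multiplying $\hat{L}_{\cS'}$; it is unavoidable because symmetrization only constrains the network on $\pm\x$ jointly. I expect the main thing to get right is the bookkeeping of the two randomness sources, ensuring that ``probability over the sample from $\cD'$'' is correctly identified with ``probability over $\cS$ and the symmetrization,'' rather than any hard estimate — the analytic content is entirely in the free retentiveness $\beta=1/2$ together with the sign-invariance of $\C$ and $\|\X\|_{\C^\dagger}$.
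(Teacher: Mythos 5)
Your proposal is correct and follows essentially the same route as the paper's proof: apply Corollary~\ref{cor:gen_bound_beta_regression} to the symmetrized distribution $\cD'$, which is centrally symmetric and hence $\tfrac12$-retentive for free, then pass back to $\cD$ via $L_\cD(g_\w)\le 2L_{\cD'}(g_\w)$ and double the bound with $\beta=\tfrac12$ (so $32\sqrt2\le 46$). Your additional checks that $\C'=\C$ and $\|\X'\|_{(\C')^\dagger}=\|\X\|_{\C^\dagger}$ under sign flips, and the explicit identification of the probability space with the joint randomness of the sample and the signs, are details the paper leaves implicit but are the same argument.
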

Note that the population risk of the
clipped predictor $g_\w(\cdot) := \max\{ -1, \min\{1, f_\w(\cdot)\} \}$ is bounded in terms of the empirical risk on $\cS'$. Finally, we verify in Section~\ref{sec:exp} that symmetrization of the training set, on MNIST and FashionMNIST datasets, does not have an effect on the performance of the trained models.

\begin{table*}[t]
\centering
{\begin{tabular}{c || c | c || c | c | c | c}
    & \multicolumn{2}{c}{plain SGD} & \multicolumn{4}{c}{dropout} \\
    \textbf{width} & last iterate & best iterate & $p=0.1$ & $p=0.2$ & $p=0.3$ & $p=0.4$\\ \hline
    $d_1 = 30$  & 0.8041 & 0.7938 & \cellcolor{gray}{0.7805} & \cellcolor{gray!30}{0.785} &  0.7991 & 0.8186 \\
    $d_1 = 70$  & 0.8315 & 0.7897 & 0.7899 & \cellcolor{gray!30}{0.7771} & \cellcolor{gray}{0.7763} & 0.7833 \\
    $d_1 = 110$  & 0.8431 & 0.7873 & 0.7988 & 0.7813 & \cellcolor{gray}{0.7742} & \cellcolor{gray!30}{0.7743} \\
    $d_1 = 150$  & 0.8472 & 0.7858 & 0.8042 & 0.7852 & \cellcolor{gray!30}{0.7756} & \cellcolor{gray}{0.7722} \\
    $d_1 = 190$  & 0.8473 & 0.7844 &  0.8069 & 0.7879 & \cellcolor{gray!30}{0.7772} & \cellcolor{gray}{0.772} \\
    \end{tabular}}
    \caption{\small MovieLens dataset: Test RMSE of plain SGD as well as the dropout algorithm with various dropout rates for various factorization sizes. The grey cells shows the best performance(s) in each row.
    \label{tab:movielens}}
\end{table*}

\section{Role of Parametrization}
\label{sec:regression}
In this section, we argue that parametrization plays an important role in determining the nature of the inductive bias. 

We begin by considering matrix sensing in non-factorized form, which entails minimizing $\hat{L}(\M):=\hat\E_i(y_i - \langle \vect{\M}, \textrm{vec}\big(\A^{(i)}\big) \rangle)^2$,
where $\vect{\M}$ denotes the column vectorization of $\M$. Then, the expected explicit regularizer due to dropout equals $R(\M) = \frac{p}{1-p} \| \vect{\M} \|_{\diag(\C)}^2$, where $\C = \E[\vect{\A}\vect{\A}^\top]$ is the second moment of the measurement matrices. For instance, with Gaussian measurements, the second moment equals the identity matrix, in which case, the regularizer reduces to the Frobenius norm of the parameters ${R}(\M) = \frac{p}{1-p}\| \M\|_F^2$. While such a ridge penalty yields a useful inductive bias in linear regression, it is not \emph{``rich''} enough to capture the kind of inductive bias that provides rank control in matrix sensing. 

However, simply representing the hypotheses in a factored form alone is not sufficient in terms of imparting a rich inductive bias to the learning problem. Recall that in linear regression, dropout, when applied on the input features, yields ridge regularization. However, if we were to represent the linear predictor in terms of a deep linear network, then we argue that the effect of dropout is markedly different. Consider a deep linear network, $f_\w:\x \mapsto \W_k\cdots\W_1\x$ with a single output neuron. In this case,~\citet{mianjy2019dropout} show that $\nu\| f \|_{\hat\C}^2 = \minim{ f_\w = f}{\hat{R}(\w)},$ where $\nu$ is a regularization parameter independent of the parameters $\w$. Consequently, in deep linear networks with a single output neuron, dropout reduces to solving 
\begin{equation*}
\minim{\u\in \R^{d_0}}{\hat\E_i(y_i-\u^\top\x_i)^2 + \nu \|\u\|_{\hat\C}^2}. 
\end{equation*}
All the minimizers of the above problem are solutions to the system of linear equations $(1+\frac{\nu}n)\X\X^\top\u = \X\y$, where $\X = [\x_1,\ldots,\x_n] \in \R^{d_0 \times n}, \y = [y_1; \ldots; y_n] \in \R^{n}$ are the design matrix and the response vector, respectively. In other words, the dropout regularizer manifests itself merely as a scaling of the parameters which does not seem to be useful in any meaningful way.

What we argue above may at first seem to contradict the results of Section~\ref{sec:sensing} on matrix sensing, which is arguably an instance of regression with a two-layer linear network. Note though that casting matrix sensing in a factored form as a linear regression problem requires us to use a convolutional structure. This is easy to check since 
\begin{align*}
\langle \U\V^\top, \A \rangle &= \langle \vect{\U^\top}, \vect{\V^\top\A^\top}\rangle \\
&= \langle \vect{\U^\top}, (\I_{d_2} \otimes \V^\top) \vect{\A^\top}\rangle, 
\end{align*}
where $\otimes$ is the Kronecker product, and we used the fact that $\vect{\A\B}=(\I \otimes \A) \vect{\B}$ for any pair of matrices $\A,\B$. The expression $(\I \otimes \V^\top)$ represents a fully connected convolutional layer with $d_1$ filters specified by columns of $\V$. The convolutional structure in addition to dropout is what imparts the problem of matrix sensing the nuclear norm regularization. For nonlinear networks, however, a simple feed-forward structure suffices as we saw in Section~\ref{sec:dnn}. 

\section{Experimental Results}\label{sec:exp}
In this section, we report our empirical findings on real world datasets. All results are averaged over 50 independent runs with random initialization.

\begin{figure*}[t]
\centering
\begin{tabular}{ccc}
\hspace*{-20pt} 
\includegraphics[width=0.365\textwidth]{./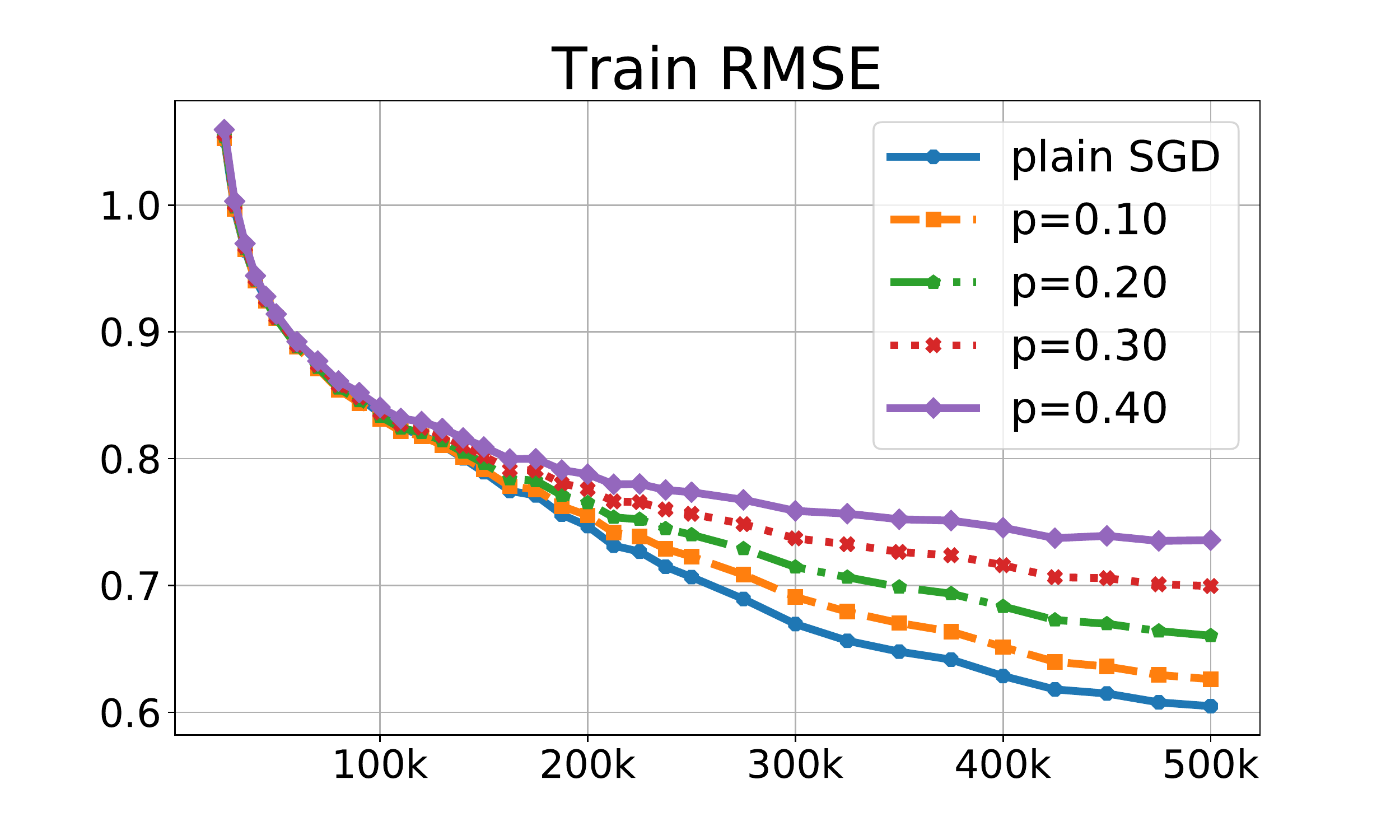}
&
\hspace*{-22pt} 
\includegraphics[width=0.365\textwidth]{./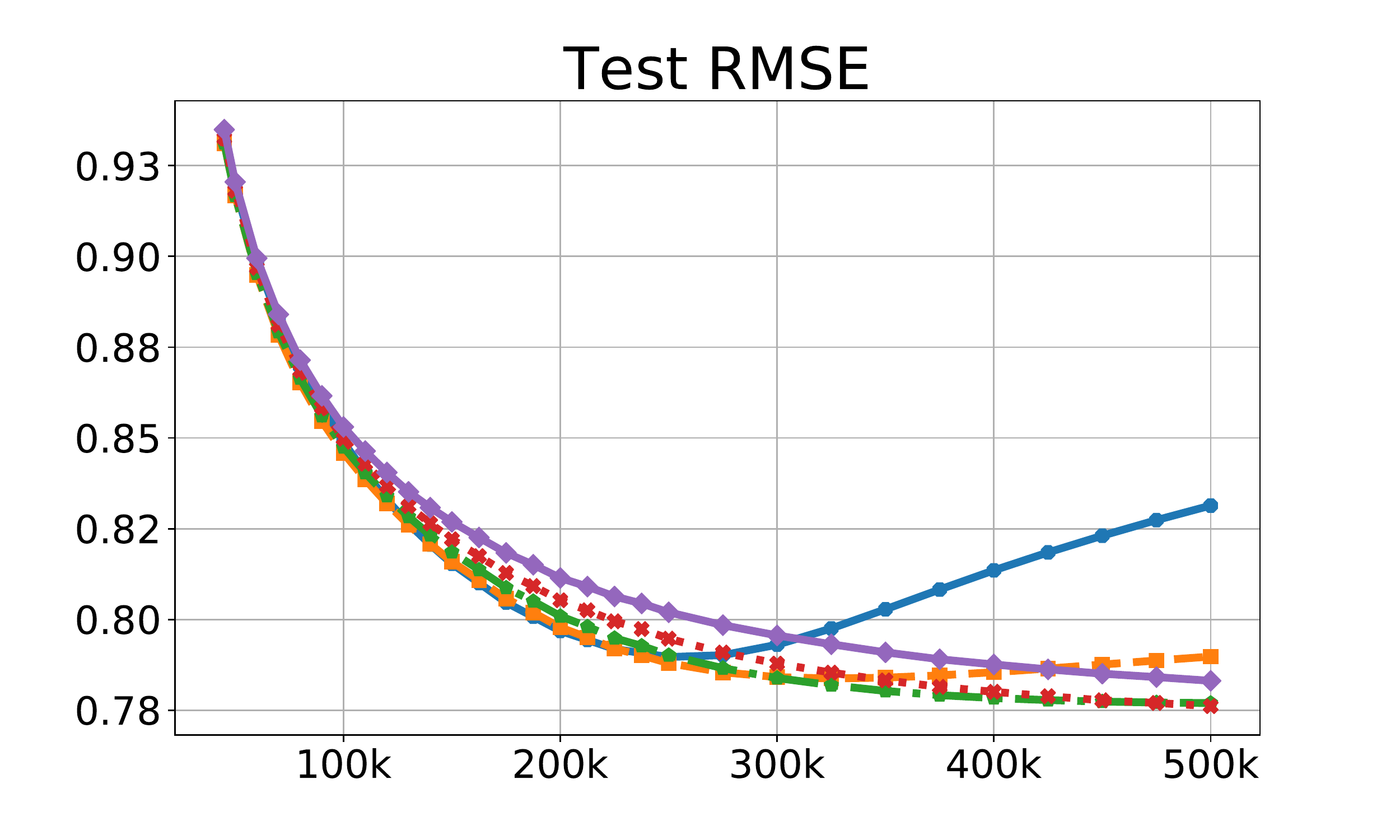}
&
\hspace*{-22pt} 
\includegraphics[width=0.365\textwidth]{./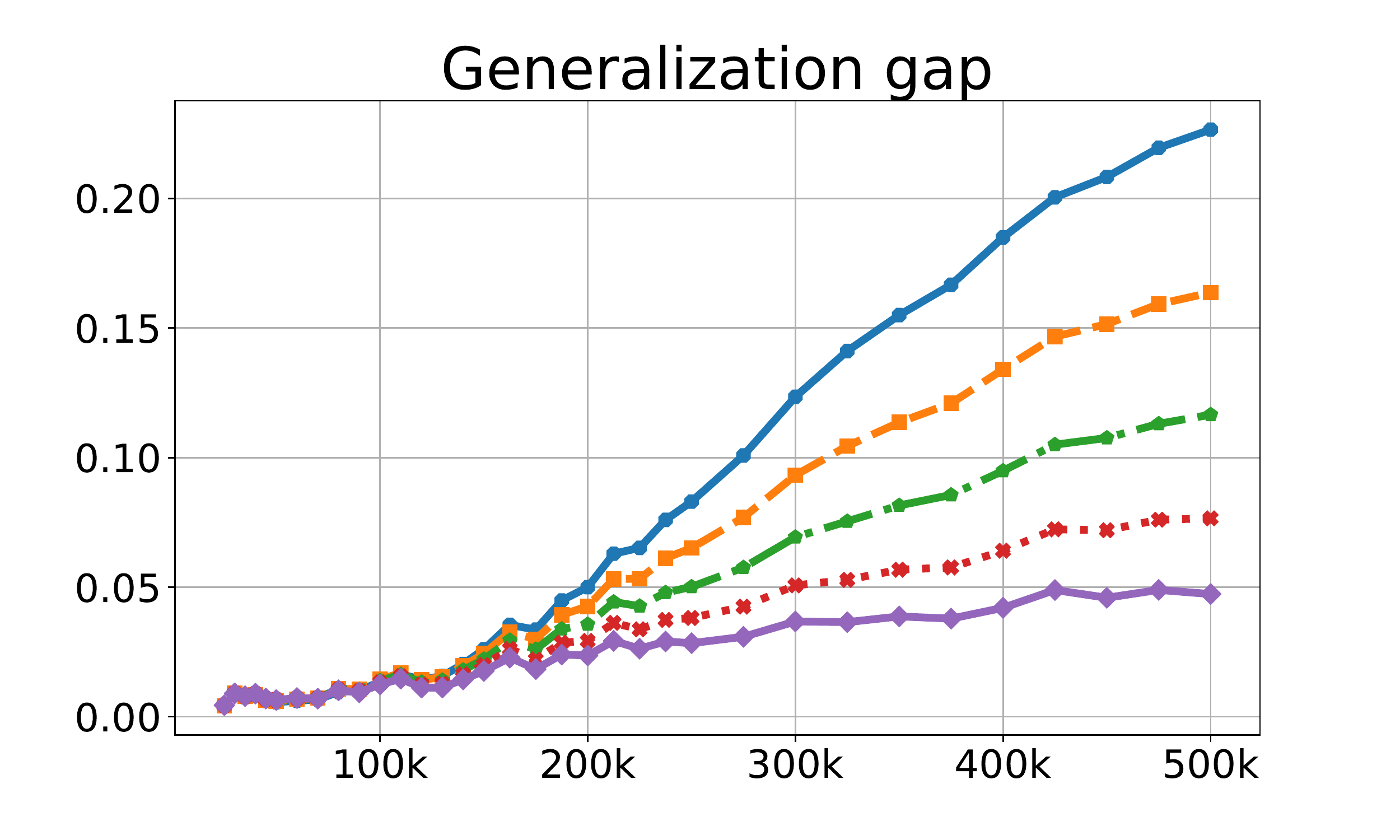}
\vspace*{-15pt} 
\end{tabular}
\caption{\small{MovieLens dataset: the training error ({\bf left}), the test error ({\bf middle}), and the generalization gap ({\bf right}) for plain SGD and dropout with $p \in \{ 0.1, 0.2, 0.3, 0.4 \}$ as a function of the number of iterations. The factorization size is $d_1 = 70$.}}
\label{fig:sensing}
\end{figure*}

\vspace*{-5pt}
\subsection{Matrix Completion}
We evaluate dropout on the MovieLens dataset~\cite{harper2016movielens}, a publicly available collaborative filtering dataset that contains 10M ratings for 11K movies by 72K users of the online movie recommender~service~MovieLens.

We initialize the factors using the standard He initialization scheme. We train the model for 100 epochs over the training data, where we use a fixed learning rate of $\texttt{lr} = 1$, and a batch size of $2000$. We report the results for plain SGD ($p=0.0$) as well as the dropout algorithm with $p \in \{ 0.1, 0.2, 0.3, 0.4 \}$.

Figure~\ref{fig:sensing} shows the progress in terms of the training and test error as well as the gap between them as a function of the number of iterations for $d_1 = 70$. It can be seen that plain SGD is the fastest in minimizing the empirical risk. The dropout rate clearly determines the trade-off between the approximation error and the estimation error: as the dropout rate $p$ increases, the algorithm favors less complex solutions that suffer larger empirical error (left figure) but enjoy smaller generalization gap (right figure). The best trade-off here seems to be achieved by a moderate dropout rate of $p=0.3$. We observe similar behaviour for different factorization sizes; please see the Appendix for additional plots with factorization sizes $d_1 \in \{ 30, 110, 150, 190 \}$.

It is remarkable, how even in the ``simple'' problem of matrix completion, plain SGD lacks a proper inductive bias. As seen in the middle plot, without \emph{explicit} regularization, in particular, without early stopping or dropout, SGD starts overfitting. We further illustrate this in Table~\ref{tab:movielens}, where we compare the test root-mean-squared-error (RMSE) of plain SGD with the dropout algorithm, for various factorization sizes. To show the superiority of dropout over SGD with early stopping, we give SGD the advantage of having access to the \emph{test set} (and not a separate validation set), and report the best iterate in the third column. Even with this impractical privilege, dropout performs better ($> 0.01$ difference in test RMSE).
\vspace{-5pt}

\subsection{Neural Networks}
\begin{figure*}
\centering
\begin{tabular}{ccc}
\hspace*{-20pt} 
\includegraphics[width=0.365\textwidth]{./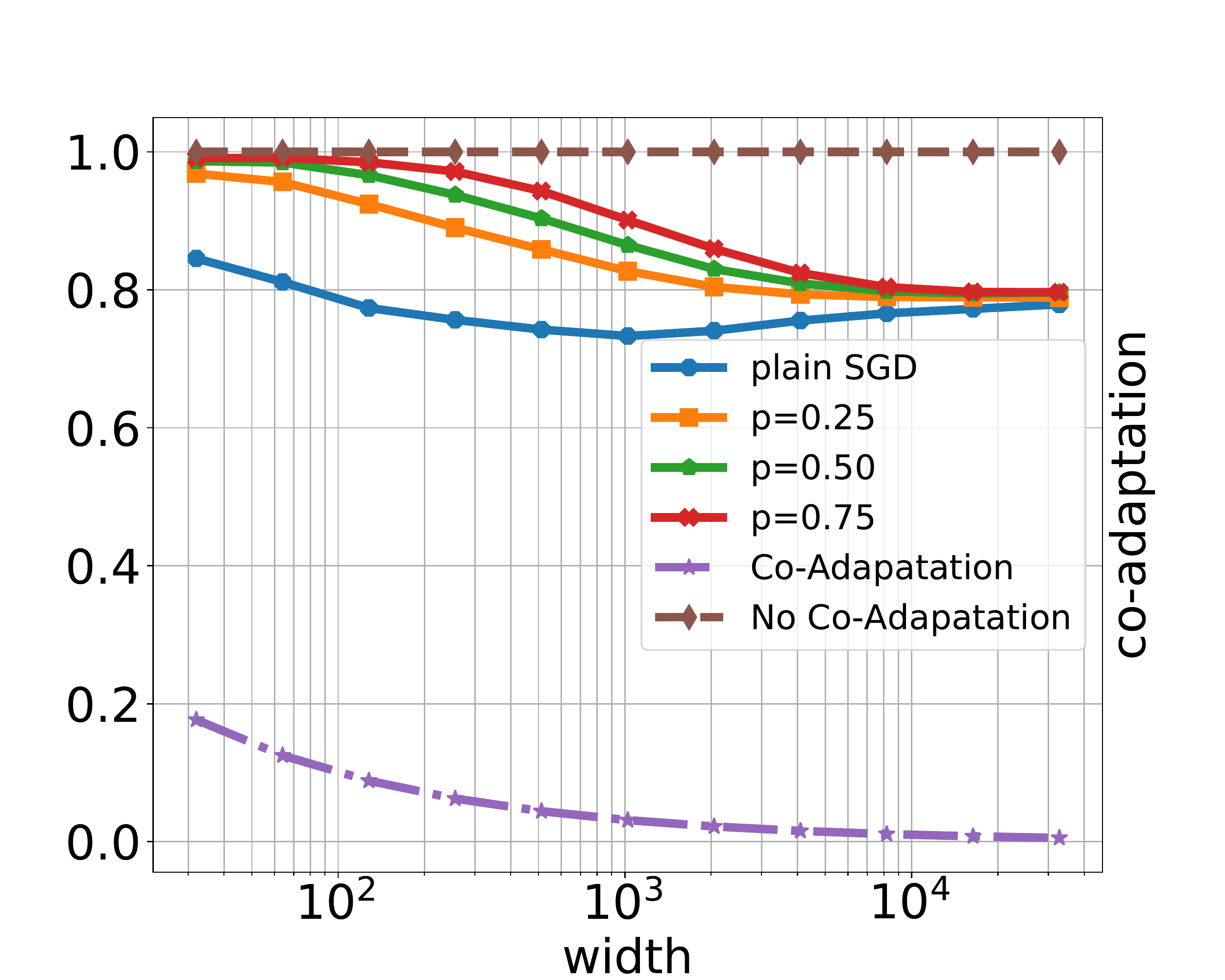}
&
\hspace*{-22pt} 
\includegraphics[width=0.365\textwidth]{./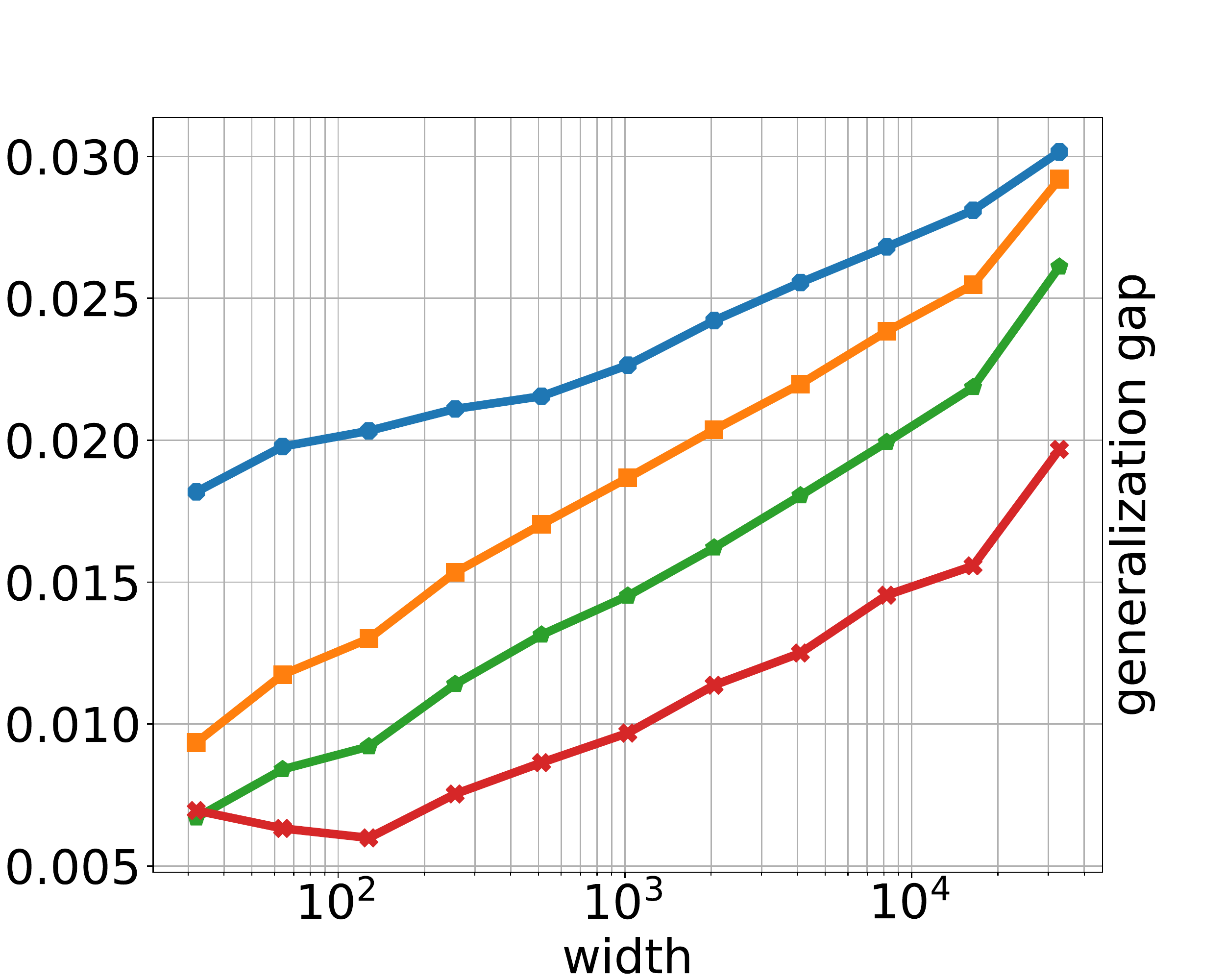}
&
\hspace*{-22pt} 
\includegraphics[width=0.365\textwidth]{./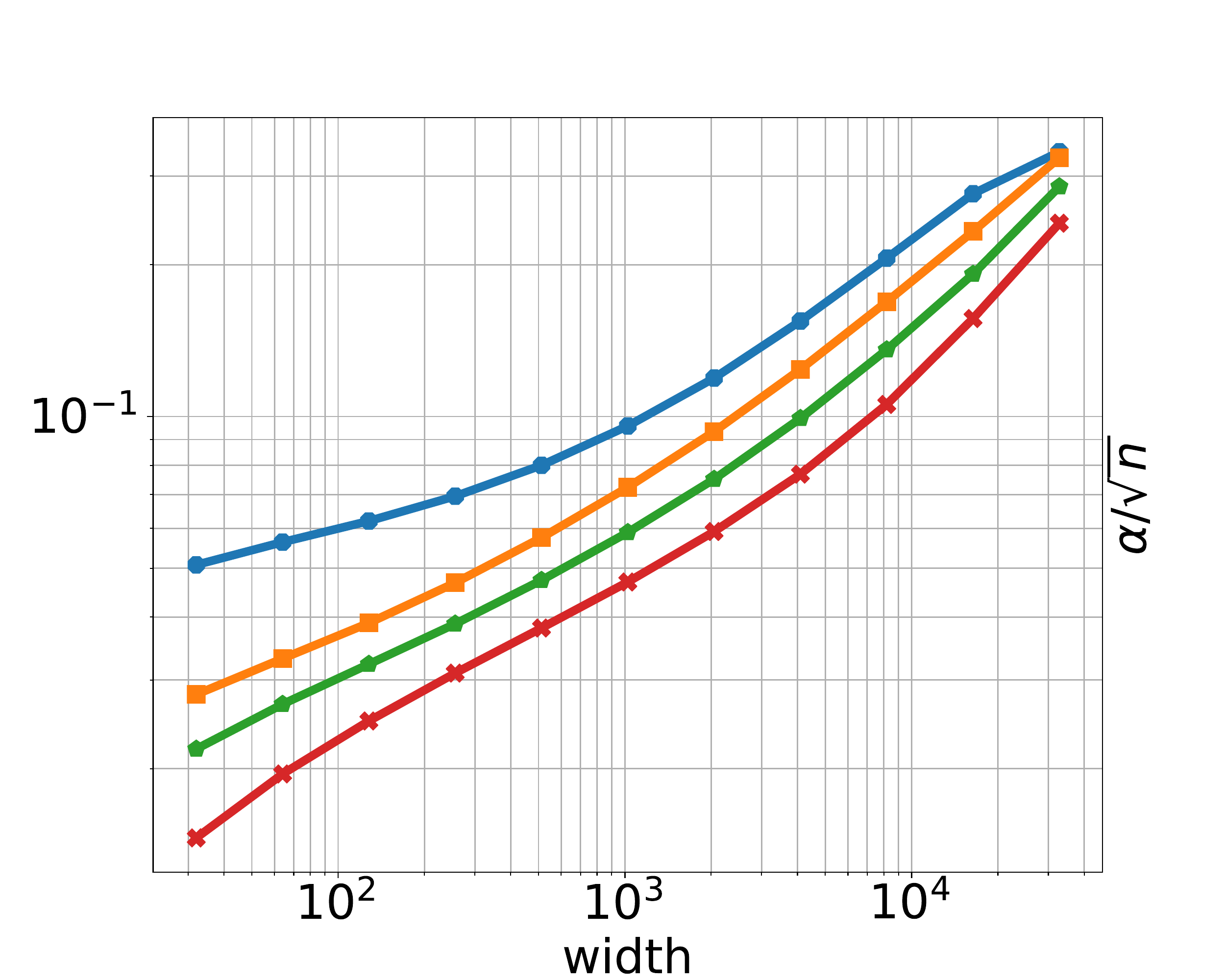}
\vspace*{-15pt} 
\end{tabular}
\caption{\label{fig:2nn}({\bf left}) \emph{``co-adaptation''}; ({\bf middle}) generalization gap; and ({\bf right}) $\alpha/\sqrt{n}$ as a function of the width of networks trained with dropout on MNIST. In left figure, the dashed brown and dotted purple lines represent minimal and maximal co-adaptations, respectively.}
\end{figure*}
We train 2-layer neural networks with and without dropout, on MNIST dataset of handwritten digits and Fashion MNIST dataset of Zalando's article images, each of which contains 60K training examples and 10K test examples, where each example is a $28\times 28$ grayscale image associated with a label from $10$ classes. We extract two classes $\{4, 7\}$ and label them as $\{ -1, +1 \}$~\footnote{We observe similar results across other choices of target classes.}. The learning rate in all experiments is set to $\texttt{lr}=1e-3$. We train the models for 30 epochs over the training set. We run the experiments \emph{both with and without symmetrization}. Here we only report the results with symmetrization, and on the MNIST dataset. For experiments without symmetrization, and experiments on FashionMNIST, please see the Appendix. We remark that \emph{under the above experimental setting, the trained networks achieve $100\%$ training accuracy.}

For any node $i\in[d_1]$, we define its \emph{flow} as $\psi_i:=|u_i|a_i$ (respectively $\psi_i:=|u_i|a_i'$ for symmetrized data), which measures the overall contribution of a node to the output of the network. Co-adaptation occurs when a small subset of nodes dominate the overall function of the network. We argue that $\phi(\w)=\frac{\|\psi\|_1}{\sqrt{d_1}\|\psi\|_2}$ is a suitable measure of co-adaptation (or lack thereof) in a network parameterized by $\w$. In case of high co-adaptation, only a few nodes have a high flow, which implies $\phi(\w) \approx \frac{1}{\sqrt{d_1}}$. At the other end of the spectrum, all nodes are equally active, in which case $\phi(\w) \approx 1$. Figure~\ref{fig:2nn} (left) illustrates this measure as a function of the network width for several dropout rates $p\in\{ 0, 0.25, 0.5, 0.75 \}$. In particular, we observe that a higher dropout rate corresponds to less co-adapation. More interestingly, even plain SGD is \emph{implicitly} biased towards networks with less co-adapation. Moreover, for a fixed dropout rate, the regularization effect due to dropout decreases as we increase the width. Thus, it is natural to expect more co-adaptation as the network becomes wider, which is what we observe in the plots.

The generalization gap is plotted in Figure~\ref{fig:2nn} (middle). As expected, increasing dropout rate decreases the generalization gap, uniformly for all widths. In our experiments, the generalization gap increases with the width of the network. The figure on the right shows the quantity $\alpha/\sqrt{n}$ that shows up in the Rademacher complexity bounds in Section~\ref{sec:dnn}. We note that, the bound on the Rademacher complexity is predictive of the generalization gap, in the sense that a smaller bound corresponds to a curve with smaller generalization gap. 
\vspace{-10pt}
\section{Conclusion}\label{sec:disc}
Motivated by the success of dropout in deep learning, we study a dropout algorithm for matrix sensing and show that it enjoys strong generalization guarantees as well as competitive test performance on the MovieLens dataset. We then focus on deep regression under the squared loss and show that the regularizer due to dropout serves as a strong complexity measure for the underlying class of neural networks, using which we give a generalization error bound in terms of the value of the regularizer.

\section*{Acknowledgements}
This research was supported, in part, by NSF BIGDATA award IIS-1546482 and NSF CAREER award IIS-1943251. The seeds of this work were sown during the summer 2019 workshop on the Foundations of Deep Learning at the Simons Institute for the Theory of Computing. Raman Arora acknowledges the support provided by the Institute for Advanced Study, Princeton, New Jersey as part of the special year on Optimization, Statistics, and Theoretical Machine Learning.
\bibliographystyle{icml2020}
\bibliography{references.bib}

\begin{thebibliography}{44}
\providecommand{\natexlab}[1]{#1}
\providecommand{\url}[1]{\texttt{#1}}
\expandafter\ifx\csname urlstyle\endcsname\relax
  \providecommand{\doi}[1]{doi: #1}\else
  \providecommand{\doi}{doi: \begingroup \urlstyle{rm}\Url}\fi

\bibitem[Baldi \& Sadowski(2013)Baldi and Sadowski]{baldi2013understanding}
Baldi, P. and Sadowski, P.~J.
\newblock Understanding dropout.
\newblock In \emph{Advances in Neural Information Processing Systems (NIPS)},
  pp.\  2814--2822, 2013.

\bibitem[Bank \& Giryes(2018)Bank and Giryes]{bank2018relationship}
Bank, D. and Giryes, R.
\newblock On the relationship between dropout and equiangular tight frames.
\newblock \emph{arXiv preprint arXiv:1810.06049}, 2018.

\bibitem[Bartlett \& Mendelson(2002)Bartlett and
  Mendelson]{bartlett2002rademacher}
Bartlett, P.~L. and Mendelson, S.
\newblock Rademacher and gaussian complexities: Risk bounds and structural
  results.
\newblock \emph{Journal of Machine Learning Research}, 3\penalty0
  (Nov):\penalty0 463--482, 2002.

\bibitem[Bartlett et~al.(2017)Bartlett, Foster, and
  Telgarsky]{bartlett2017spectrally}
Bartlett, P.~L., Foster, D.~J., and Telgarsky, M.~J.
\newblock Spectrally-normalized margin bounds for neural networks.
\newblock In \emph{Advances in Neural Information Processing Systems}, pp.\
  6240--6249, 2017.

\bibitem[Cavazza et~al.(2018)Cavazza, Haeffele, Lane, Morerio, Murino, and
  Vidal]{cavazza2018dropout}
Cavazza, J., Haeffele, B.~D., Lane, C., Morerio, P., Murino, V., and Vidal, R.
\newblock Dropout as a low-rank regularizer for matrix factorization.
\newblock \emph{Int. Conf. on Artificial Intelligence and Statistics
  (AISTATS)}, 2018.

\bibitem[Chen \& Manning(2014)Chen and Manning]{chen2014fast}
Chen, D. and Manning, C.
\newblock A fast and accurate dependency parser using neural networks.
\newblock In \emph{Proceedings of the 2014 conference on empirical methods in
  natural language processing (EMNLP)}, pp.\  740--750, 2014.

\bibitem[Dahl et~al.(2013)Dahl, Sainath, and Hinton]{dahl2013improving}
Dahl, G.~E., Sainath, T.~N., and Hinton, G.~E.
\newblock Improving deep neural networks for lvcsr using rectified linear units
  and dropout.
\newblock In \emph{2013 IEEE international conference on acoustics, speech and
  signal processing}, pp.\  8609--8613. IEEE, 2013.

\bibitem[Foygel et~al.(2011)Foygel, Shamir, Srebro, and
  Salakhutdinov]{foygel2011learning}
Foygel, R., Shamir, O., Srebro, N., and Salakhutdinov, R.~R.
\newblock Learning with the weighted trace-norm under arbitrary sampling
  distributions.
\newblock In \emph{Advances in Neural Information Processing Systems}, pp.\
  2133--2141, 2011.

\bibitem[Gal \& Ghahramani(2016)Gal and Ghahramani]{gal2016dropout}
Gal, Y. and Ghahramani, Z.
\newblock Dropout as a bayesian approximation: Representing model uncertainty
  in deep learning.
\newblock In \emph{Int. Conf. Machine Learning (ICML)}, 2016.

\bibitem[Gao \& Zhou(2016)Gao and Zhou]{gao2016dropout}
Gao, W. and Zhou, Z.-H.
\newblock Dropout rademacher complexity of deep neural networks.
\newblock \emph{Science China Information Sciences}, 59\penalty0 (7):\penalty0
  072104, 2016.

\bibitem[Golowich et~al.(2018)Golowich, Rakhlin, and Shamir]{golowich2018size}
Golowich, N., Rakhlin, A., and Shamir, O.
\newblock Size-independent sample complexity of neural networks.
\newblock In \emph{Conference On Learning Theory}, pp.\  297--299, 2018.

\bibitem[Gunasekar et~al.(2017)Gunasekar, Woodworth, Bhojanapalli, Neyshabur,
  and Srebro]{gunasekar2017implicit}
Gunasekar, S., Woodworth, B.~E., Bhojanapalli, S., Neyshabur, B., and Srebro,
  N.
\newblock Implicit regularization in matrix factorization.
\newblock In \emph{Advances in Neural Information Processing Systems}, pp.\
  6151--6159, 2017.

\bibitem[Harper \& Konstan(2016)Harper and Konstan]{harper2016movielens}
Harper, F.~M. and Konstan, J.~A.
\newblock The movielens datasets: History and context.
\newblock \emph{Acm transactions on interactive intelligent systems (tiis)},
  5\penalty0 (4):\penalty0 19, 2016.

\bibitem[Havaei et~al.(2017)Havaei, Davy, Warde-Farley, Biard, Courville,
  Bengio, Pal, Jodoin, and Larochelle]{havaei2017brain}
Havaei, M., Davy, A., Warde-Farley, D., Biard, A., Courville, A., Bengio, Y.,
  Pal, C., Jodoin, P.-M., and Larochelle, H.
\newblock Brain tumor segmentation with deep neural networks.
\newblock \emph{Medical image analysis}, 35:\penalty0 18--31, 2017.

\bibitem[Helmbold \& Long(2015)Helmbold and Long]{helmbold2015inductive}
Helmbold, D.~P. and Long, P.~M.
\newblock On the inductive bias of dropout.
\newblock \emph{Journal of Machine Learning Research (JMLR)}, 16:\penalty0
  3403--3454, 2015.

\bibitem[Helmbold \& Long(2017)Helmbold and Long]{helmbold2017surprising}
Helmbold, D.~P. and Long, P.~M.
\newblock Surprising properties of dropout in deep networks.
\newblock \emph{The Journal of Machine Learning Research}, 18\penalty0
  (1):\penalty0 7284--7311, 2017.

\bibitem[Hinton et~al.(2012)Hinton, Srivastava, Krizhevsky, Sutskever, and
  Salakhutdinov]{hinton2012improving}
Hinton, G.~E., Srivastava, N., Krizhevsky, A., Sutskever, I., and
  Salakhutdinov, R.~R.
\newblock Improving neural networks by preventing co-adaptation of feature
  detectors.
\newblock \emph{arXiv preprint arXiv:1207.0580}, 2012.

\bibitem[Kalchbrenner et~al.(2014)Kalchbrenner, Grefenstette, and
  Blunsom]{kalchbrenner2014convolutional}
Kalchbrenner, N., Grefenstette, E., and Blunsom, P.
\newblock A convolutional neural network for modelling sentences.
\newblock \emph{arXiv preprint arXiv:1404.2188}, 2014.

\bibitem[Krizhevsky et~al.(2009)Krizhevsky, Hinton,
  et~al.]{krizhevsky2009learning}
Krizhevsky, A., Hinton, G., et~al.
\newblock Learning multiple layers of features from tiny images.
\newblock Technical report, Citeseer, 2009.

\bibitem[Krizhevsky et~al.(2012)Krizhevsky, Sutskever, and
  Hinton]{krizhevsky2012imagenet}
Krizhevsky, A., Sutskever, I., and Hinton, G.~E.
\newblock Imagenet classification with deep convolutional neural networks.
\newblock In \emph{Advances in neural information processing systems}, pp.\
  1097--1105, 2012.

\bibitem[Li et~al.(2018)Li, Ma, and Zhang]{li2018algorithmic}
Li, Y., Ma, T., and Zhang, H.
\newblock Algorithmic regularization in over-parameterized matrix sensing and
  neural networks with quadratic activations.
\newblock In \emph{Conference On Learning Theory}, pp.\  2--47, 2018.

\bibitem[Li et~al.(2016)Li, Gong, and Yang]{li2016improved}
Li, Z., Gong, B., and Yang, T.
\newblock Improved dropout for shallow and deep learning.
\newblock In \emph{Advances in neural information processing systems}, pp.\
  2523--2531, 2016.

\bibitem[McAllester(2013)]{mcallester2013pac}
McAllester, D.
\newblock A pac-bayesian tutorial with a dropout bound.
\newblock \emph{arXiv preprint arXiv:1307.2118}, 2013.

\bibitem[Mianjy \& Arora(2019)Mianjy and Arora]{mianjy2019dropout}
Mianjy, P. and Arora, R.
\newblock On dropout and nuclear norm regularization.
\newblock In \emph{International Conference on Machine Learning}, 2019.

\bibitem[Mianjy et~al.(2018)Mianjy, Arora, and Vidal]{mianjy2018implicit}
Mianjy, P., Arora, R., and Vidal, R.
\newblock On the implicit bias of dropout.
\newblock In \emph{International Conference on Machine Learning}, pp.\
  3537--3545, 2018.

\bibitem[Mohri et~al.(2018)Mohri, Rostamizadeh, and
  Talwalkar]{mohri2018foundations}
Mohri, M., Rostamizadeh, A., and Talwalkar, A.
\newblock \emph{Foundations of machine learning}.
\newblock MIT press, 2018.

\bibitem[Mou et~al.(2018)Mou, Zhou, Gao, and Wang]{mou2018dropout}
Mou, W., Zhou, Y., Gao, J., and Wang, L.
\newblock Dropout training, data-dependent regularization, and generalization
  bounds.
\newblock In \emph{International Conference on Machine Learning}, pp.\
  3642--3650, 2018.

\bibitem[Neyshabur et~al.(2015{\natexlab{a}})Neyshabur, Salakhutdinov, and
  Srebro]{neyshabur2015path}
Neyshabur, B., Salakhutdinov, R.~R., and Srebro, N.
\newblock Path-sgd: Path-normalized optimization in deep neural networks.
\newblock In \emph{Advances in Neural Information Processing Systems}, pp.\
  2422--2430, 2015{\natexlab{a}}.

\bibitem[Neyshabur et~al.(2015{\natexlab{b}})Neyshabur, Tomioka, and
  Srebro]{neyshabur2015norm}
Neyshabur, B., Tomioka, R., and Srebro, N.
\newblock Norm-based capacity control in neural networks.
\newblock In \emph{Conference on Learning Theory}, pp.\  1376--1401,
  2015{\natexlab{b}}.

\bibitem[Neyshabur et~al.(2017)Neyshabur, Bhojanapalli, and
  Srebro]{neyshabur2017pac}
Neyshabur, B., Bhojanapalli, S., and Srebro, N.
\newblock A pac-bayesian approach to spectrally-normalized margin bounds for
  neural networks.
\newblock \emph{arXiv preprint arXiv:1707.09564}, 2017.

\bibitem[Neyshabur et~al.(2018)Neyshabur, Li, Bhojanapalli, LeCun, and
  Srebro]{neyshabur2018towards}
Neyshabur, B., Li, Z., Bhojanapalli, S., LeCun, Y., and Srebro, N.
\newblock Towards understanding the role of over-parametrization in
  generalization of neural networks.
\newblock \emph{arXiv preprint arXiv:1805.12076}, 2018.

\bibitem[Pham et~al.(2014)Pham, Bluche, Kermorvant, and
  Louradour]{pham2014dropout}
Pham, V., Bluche, T., Kermorvant, C., and Louradour, J.
\newblock Dropout improves recurrent neural networks for handwriting
  recognition.
\newblock In \emph{2014 14th international conference on frontiers in
  handwriting recognition}, pp.\  285--290. IEEE, 2014.

\bibitem[Srebro \& Salakhutdinov(2010)Srebro and
  Salakhutdinov]{srebro2010collaborative}
Srebro, N. and Salakhutdinov, R.~R.
\newblock Collaborative filtering in a non-uniform world: Learning with the
  weighted trace norm.
\newblock In \emph{Advances in Neural Information Processing Systems}, pp.\
  2056--2064, 2010.

\bibitem[Srebro et~al.(2010)Srebro, Sridharan, and
  Tewari]{srebro2010optimistic}
Srebro, N., Sridharan, K., and Tewari, A.
\newblock Optimistic rates for learning with a smooth loss.
\newblock \emph{arXiv preprint arXiv:1009.3896}, 2010.

\bibitem[Srivastava et~al.(2014)Srivastava, Hinton, Krizhevsky, Sutskever, and
  Salakhutdinov]{srivastava2014dropout}
Srivastava, N., Hinton, G.~E., Krizhevsky, A., Sutskever, I., and
  Salakhutdinov, R.
\newblock Dropout: a simple way to prevent neural networks from overfitting.
\newblock \emph{Journal of Machine Learning Research (JMLR)}, 15\penalty0 (1),
  2014.

\bibitem[Szegedy et~al.(2015)Szegedy, Liu, Jia, Sermanet, Reed, Anguelov,
  Erhan, Vanhoucke, and Rabinovich]{szegedy2015going}
Szegedy, C., Liu, W., Jia, Y., Sermanet, P., Reed, S., Anguelov, D., Erhan, D.,
  Vanhoucke, V., and Rabinovich, A.
\newblock Going deeper with convolutions.
\newblock In \emph{Proceedings of the IEEE conference on computer vision and
  pattern recognition}, pp.\  1--9, 2015.

\bibitem[Toshev \& Szegedy(2014)Toshev and Szegedy]{toshev2014human}
Toshev, A. and Szegedy, C.
\newblock Deeppose: Human pose estimation via deep neural networks.
\newblock In \emph{The IEEE Conference on Computer Vision and Pattern
  Recognition (CVPR)}, June 2014.

\bibitem[Vershynin(2018)]{vershynin2018high}
Vershynin, R.
\newblock \emph{High-dimensional probability: An introduction with applications
  in data science}, volume~47.
\newblock Cambridge University Press, 2018.

\bibitem[Wager et~al.(2013)Wager, Wang, and Liang]{wager2013dropout}
Wager, S., Wang, S., and Liang, P.~S.
\newblock Dropout training as adaptive regularization.
\newblock In \emph{Advances in Neural Information Processing Systems (NIPS)},
  2013.

\bibitem[Wager et~al.(2014)Wager, Fithian, Wang, and Liang]{wager2014altitude}
Wager, S., Fithian, W., Wang, S., and Liang, P.~S.
\newblock Altitude training: Strong bounds for single-layer dropout.
\newblock In \emph{Adv. Neural Information Processing Systems}, 2014.

\bibitem[Wan et~al.(2013)Wan, Zeiler, Zhang, Le~Cun, and
  Fergus]{wan2013regularization}
Wan, L., Zeiler, M., Zhang, S., Le~Cun, Y., and Fergus, R.
\newblock Regularization of neural networks using dropconnect.
\newblock In \emph{International conference on machine learning}, pp.\
  1058--1066, 2013.

\bibitem[Wang \& Manning(2013)Wang and Manning]{wang2013fast}
Wang, S. and Manning, C.
\newblock Fast dropout training.
\newblock In \emph{international conference on machine learning}, pp.\
  118--126, 2013.

\bibitem[Yang et~al.(2016)Yang, He, Gao, Deng, and Smola]{yang2016stacked}
Yang, Z., He, X., Gao, J., Deng, L., and Smola, A.
\newblock Stacked attention networks for image question answering.
\newblock In \emph{Proceedings of the IEEE conference on computer vision and
  pattern recognition}, pp.\  21--29, 2016.

\bibitem[Zhai \& Wang(2018)Zhai and Wang]{zhai2018adaptive}
Zhai, K. and Wang, H.
\newblock Adaptive dropout with rademacher complexity regularization.
\newblock In \emph{International Conference on Learning Representations}, 2018.
\newblock URL \url{https://openreview.net/forum?id=S1uxsye0Z}.

\end{thebibliography}


\clearpage
\appendix

\onecolumn
\setcounter{figure}{0}
\setcounter{table}{0}
\begin{center}
{\bf\LARGE Supplementary Materials for \\ ``Dropout: Explicit Forms and Capacity Control''}
\end{center}
\vspace{30pt}
\section{Auxiliary Results}

\begin{lemma}[Khintchine-Kahane inequality]\label{lem:Khintchine-Kahane}
Let $\{ \epsilon_i\}_{i=1}^n$ be i.i.d. Rademacher random variables, and $\{ \x\}_{i=1}^n \subset \R^d$. Then there exist a universal constants $c>0$ such that $$ \E\| \sum_{i=1}^n \epsilon_i \x_i \| \geq c \sqrt{\sum_{i=1}^n \|\x_i\|^2}.$$
\end{lemma}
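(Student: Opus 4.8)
The plan is to prove this lower bound by a standard moment-comparison argument: identify the right-hand side with the second moment of the Rademacher sum $S := \sum_{i=1}^n \epsilon_i\x_i$, control the fourth moment in terms of the square of the second, and then deduce the reverse inequality $\E\|S\| \geq c\,(\E\|S\|^2)^{1/2}$ by interpolation between $L^1$ and $L^4$.

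First I would record that, by independence and the fact that $\E[\epsilon_i\epsilon_j]$ equals $1$ when $i=j$ and $0$ otherwise,
$$\E\|S\|^2 = \sum_{i,j}\E[\epsilon_i\epsilon_j]\langle\x_i,\x_j\rangle = \sum_{i=1}^n\|\x_i\|^2,$$
so the claimed right-hand side is exactly $\sqrt{\E\|S\|^2}$. Since $\E\|S\|\le\sqrt{\E\|S\|^2}$ by Jensen's inequality, the entire content of the lemma is the reverse comparison of the first and second moments of the scalar random variable $\|S\|$.

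The key step, and the main obstacle, is the fourth-moment bound $\E\|S\|^4 \le 3\,(\E\|S\|^2)^2$. Writing $a_{ij}:=\langle\x_i,\x_j\rangle$, I would expand $\|S\|^4 = \sum_{i,j,k,l}\epsilon_i\epsilon_j\epsilon_k\epsilon_l\, a_{ij}a_{kl}$ and take expectations; only those index patterns in which every index appears an even number of times survive. Collecting the surviving patterns (the fully paired and the two-distinct-pair configurations) gives the identity $\E\|S\|^4 = \big(\sum_i\|\x_i\|^2\big)^2 + 2\sum_{i\neq j}a_{ij}^2$. Bounding $\sum_{i\neq j}a_{ij}^2 \le \sum_{i,j}\langle\x_i,\x_j\rangle^2 \le \sum_{i,j}\|\x_i\|^2\|\x_j\|^2 = \big(\sum_i\|\x_i\|^2\big)^2$ via Cauchy--Schwarz then yields $\E\|S\|^4\le 3(\E\|S\|^2)^2$. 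This combinatorial bookkeeping is the delicate part; everything else is routine.

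Finally I would interpolate. Hölder's inequality (splitting $\|S\|^2 = \|S\|^{2/3}\cdot\|S\|^{4/3}$ with conjugate exponents $3/2$ and $3$) gives $\E\|S\|^2 \le (\E\|S\|)^{2/3}(\E\|S\|^4)^{1/3}$. Substituting the fourth-moment bound produces $\E\|S\|^2 \le 3^{1/3}(\E\|S\|)^{2/3}(\E\|S\|^2)^{2/3}$, and rearranging yields $(\E\|S\|^2)^{1/2}\le \sqrt3\,\E\|S\|$, i.e. the claim with $c=1/\sqrt3$. Alternatively, applying the Paley--Zygmund inequality to $Z=\|S\|^2$ at threshold $\tfrac12\E Z$ gives $\P(Z>\tfrac12\E Z)\ge \tfrac{1}{12}$ and hence the same conclusion with a worse universal constant.
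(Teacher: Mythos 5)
The paper states this lemma as an imported auxiliary result (the $L^1$--$L^2$ Khintchine--Kahane inequality) and gives no proof of its own, so there is nothing to compare your argument against line by line; what matters is whether your self-contained proof is correct, and it is. The second-moment identity $\E\|S\|^2=\sum_i\|\x_i\|^2$ is right, and your fourth-moment bookkeeping is accurate: the surviving index patterns give exactly $\E\|S\|^4=\bigl(\sum_i\|\x_i\|^2\bigr)^2+2\sum_{i\neq j}\langle\x_i,\x_j\rangle^2$, and Cauchy--Schwarz then yields $\E\|S\|^4\le 3\,(\E\|S\|^2)^2$. The H\"older interpolation with exponents $3/2$ and $3$ is applied correctly and gives $\E\|S\|\ge\tfrac{1}{\sqrt 3}\bigl(\E\|S\|^2\bigr)^{1/2}$, which establishes the lemma with $c=1/\sqrt 3$ (the lemma only asks for some universal $c>0$; the sharp constant $1/\sqrt 2$ is not needed anywhere in the paper, in particular not in the proof of Theorem~\ref{thm:lb_rademacher}, which is the only place the lemma is invoked). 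One trivial point worth a half-sentence in a polished write-up: the final rearrangement divides by $(\E\|S\|^2)^{2/3}$, so you should note that the case $\sum_i\|\x_i\|^2=0$ is vacuous.
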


\begin{theorem}[Hoeffding's inequality: Theorem 2.6.2~\cite{vershynin2018high}]\label{thm:hoeffding}
Let $X_1, \ldots , X_N$ be independent, mean zero, sub-Gaussian random variables. Then, for every $t \geq  0$, we have
\begin{align*}
\bP\left( \abs{ \hat\E_i{X_i}} \geq t \right) \leq 2e^{ -\frac{ct^2N^2}{\sum_{i=1}^{N}\| X_i \|_{\psi_2}^2}}
\end{align*}
\end{theorem}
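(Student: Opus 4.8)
The plan is to prove this sub-Gaussian form of Hoeffding's inequality by the standard moment-generating-function (Chernoff) method, which reduces a tail bound on the average $\hat\E_i[X_i]$ to controlling the MGF of the sum. First I would recall the key characterization of sub-Gaussianity: for a mean-zero sub-Gaussian random variable $X$, there is an absolute constant $C$ with $\E[\exp(\lambda X)] \leq \exp(C\lambda^2 \| X \|_{\psi_2}^2)$ for every $\lambda \in \R$. This MGF bound is the workhorse of the whole argument; deriving it from the definition of the $\psi_2$-norm — i.e., the equivalence of the tail, moment, and MGF characterizations of sub-Gaussianity — is the one place where nontrivial work is hidden, although it is entirely standard.

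Next, writing $S := \sum_{i=1}^N X_i = N\,\hat\E_i[X_i]$ and $\sigma^2 := \sum_{i=1}^N \| X_i \|_{\psi_2}^2$, I would invoke independence to factorize the MGF of the sum, obtaining $\E[\exp(\lambda S)] = \prod_{i=1}^N \E[\exp(\lambda X_i)] \leq \exp(C\lambda^2 \sigma^2)$. A Markov bound applied to $\exp(\lambda S)$ then gives $\bP(S \geq s) \leq \exp(-\lambda s + C\lambda^2 \sigma^2)$ for every $\lambda > 0$. Optimizing the right-hand side over $\lambda$ (the minimizer is $\lambda = s/(2C\sigma^2)$) yields the one-sided bound $\bP(S \geq s) \leq \exp(-s^2/(4C\sigma^2))$.

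Finally, I would translate back to the average: since $\hat\E_i[X_i] \geq t$ is equivalent to $S \geq Nt$, setting $s = Nt$ produces $\bP(\hat\E_i[X_i] \geq t) \leq \exp(-c\,t^2 N^2 / \sigma^2)$ with $c = 1/(4C)$, which matches the exponent in the statement. Running the identical argument on the variables $-X_i$ controls the lower tail $\bP(\hat\E_i[X_i] \leq -t)$, and a union bound over the two one-sided events supplies the factor of $2$ in front of the exponential, giving the two-sided inequality as stated.

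Since this is a textbook result (cited here as Theorem 2.6.2 of \cite{vershynin2018high}), there is no genuine obstacle to overcome; the only delicate point, as noted above, is establishing the MGF bound with an absolute constant from the $\psi_2$-norm, after which the Chernoff argument — factorize, apply Markov, optimize the exponent, symmetrize — is purely mechanical.
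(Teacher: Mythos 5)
This is a quoted textbook result (Theorem 2.6.2 of Vershynin) for which the paper supplies no proof of its own, and your Chernoff/MGF argument --- sub-Gaussian MGF bound, factorization by independence, Markov, optimization over $\lambda$, and symmetrization for the two-sided bound --- is exactly the standard proof given in the cited source. The proposal is correct and matches that approach.
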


\begin{theorem}[Theorem~3.1 of~\cite{mohri2018foundations}]\label{thm:generalization_classification}
Let $\cG$ be a family of functions mapping from $\cZ$ to $[0, 1]$. Then, for any $\delta > 0$, with probability at least $1 - \delta$ over a sample $\cS=\{\z_1,\ldots,\z_n\}$, the following holds for all $g\in \cG$
\begin{align*}
E[g(\z)] &\leq \frac1n\sum_{i=1}^{n}g(\z_i)+2\fR_n(\cG)+\sqrt\frac{\log(1/\delta)}{2n}\\
E[g(\z)] &\leq \frac1n\sum_{i=1}^{n}g(\z_i)+2\fR_\cS(\cG)+3\sqrt\frac{\log(1/\delta)}{2n}
\end{align*}
\end{theorem}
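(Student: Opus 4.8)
The plan is to follow the classical two-step recipe—concentration of the uniform deviation via a bounded-differences argument, then symmetrization—and afterwards to pass from the expected to the empirical Rademacher complexity by a second concentration step. First I would introduce the uniform-deviation functional
\[
\Phi(\cS) := \sup_{g\in\cG}\left(\E[g(\z)] - \frac1n\sum_{i=1}^n g(\z_i)\right),
\]
and argue that it has bounded differences: because every $g$ maps into $[0,1]$, replacing a single sample point $\z_i$ by an arbitrary $\z_i'$ changes $\frac1n\sum_i g(\z_i)$ by at most $1/n$ uniformly in $g$, hence $|\Phi(\cS)-\Phi(\cS')|\le 1/n$. McDiarmid's bounded-differences inequality then yields, with probability at least $1-\delta$,
\[
\Phi(\cS) \le \E_\cS[\Phi(\cS)] + \sqrt{\frac{\log(1/\delta)}{2n}}.
\]

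Next I would bound $\E_\cS[\Phi(\cS)]$ by symmetrization. Introducing a ghost sample $\cS'=\{\z_1',\ldots,\z_n'\}$ drawn i.i.d.\ from the same distribution, I rewrite $\E[g(\z)] = \E_{\cS'}[\frac1n\sum_i g(\z_i')]$, pull this expectation outside and the supremum inside (Jensen), and obtain
\[
\E_\cS[\Phi(\cS)] \le \E_{\cS,\cS'}\sup_{g\in\cG}\frac1n\sum_{i=1}^n\bigl(g(\z_i')-g(\z_i)\bigr).
\]
Since each summand $g(\z_i')-g(\z_i)$ is symmetric, its distribution is unchanged under multiplication by an independent Rademacher sign $\sigma_i$; inserting these signs and splitting the supremum across the two terms introduces the factor of two, giving $\E_\cS[\Phi(\cS)]\le 2\fR_n(\cG)$. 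Combined with the previous display, this establishes the first inequality.

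For the second inequality I would apply a second bounded-differences step, now to the map $\cS\mapsto\fR_\cS(\cG)$, which again changes by at most $1/n$ when one point is replaced; this gives $\fR_n(\cG)=\E_\cS[\fR_\cS(\cG)]\le \fR_\cS(\cG)+\sqrt{\log(1/\delta)/2n}$ with high probability. Running both McDiarmid steps and taking a union bound over the two failure events replaces $2\fR_n(\cG)$ by $2\fR_\cS(\cG)$ and accumulates the deviation terms into the single constant $3$. The only genuinely delicate step is the symmetrization: one must be careful to introduce the ghost-sample expectation \emph{before} taking the supremum, so that Jensen's inequality runs in the correct direction, and to insert the Rademacher signs using the symmetry of the differences $g(\z_i')-g(\z_i)$ rather than of $g$ itself. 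Everything else—the bounded-difference constants and the union bound—is routine bookkeeping.
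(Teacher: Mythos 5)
The paper states this result as an imported auxiliary theorem (Theorem~3.1 of Mohri et al.) and gives no proof of its own, so there is nothing internal to compare against; your argument is the standard textbook proof --- McDiarmid's bounded-differences inequality applied to the uniform-deviation functional, ghost-sample symmetrization with Rademacher signs to get the factor $2\fR_n(\cG)$, and a second McDiarmid step plus a union bound to pass to $\fR_\cS(\cG)$ --- and it is correct. One bookkeeping note: the union bound over the two failure events naturally yields $3\sqrt{\log(2/\delta)/(2n)}$ in the second inequality (which is how Mohri et al.\ actually state it), not $\log(1/\delta)$ as transcribed in the paper, but that is a discrepancy in the paper's transcription rather than a flaw in your proof.
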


\begin{theorem}[Theorem~10.3 of~\cite{mohri2018foundations}]\label{thm:generalization_regression}
Assume that $\|h-f\|_\infty \leq M$ for all $h\in \cH$. Then, for any $\delta>0$, with probability at least $1-\delta$ over a sample $\cS=\{(\x_i,\y_i), \ i \in [n]\}$ of size $n$, the following inequalities holds uniformly for all $h\in \cH$.
\begin{align*}
\E[|h(\x)-f(\x) |^2] &\leq \hat\E_i|h(\x_i) - f(\x_i)|^2 + 4M\fR_n(\cH)+M^2\sqrt\frac{\log(2/\delta)}{2n}\\
\E[|h(\x)-f(\x) |^2] &\leq \hat\E_i|h(\x_i) - f(\x_i)|^2 + 4M\fR_\cS(\cH)+3M^2\sqrt\frac{\log(2/\delta)}{2n}
\end{align*}
\end{theorem}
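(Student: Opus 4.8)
The plan is to reduce this squared-loss regression bound to the standard uniform-convergence bound for $[0,1]$-valued functions (Theorem~\ref{thm:generalization_classification}) applied to the induced loss class, and then strip off the squared loss with a Lipschitz contraction argument so that only the Rademacher complexity of $\cH$ remains. First I would introduce the loss class $\cG := \{ (\x,\y)\mapsto |h(\x)-f(\x)|^2 : h\in\cH\}$. By the hypothesis $\|h-f\|_\infty\leq M$, every $g\in\cG$ takes values in $[0,M^2]$, so the rescaled class $\cG/M^2$ maps into $[0,1]$ and Theorem~\ref{thm:generalization_classification} applies verbatim to it. Undoing the scaling by multiplying through by $M^2$ converts the deviation term into $M^2\sqrt{\log(2/\delta)/(2n)}$ (with the $3M^2$ variant coming from the empirical-complexity version of the same theorem), and converts the Rademacher term $2\fR(\cG/M^2)$ into $2\fR(\cG)$, since Rademacher complexity is positively homogeneous and the factor $1/M^2$ cancels.

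The more substantive step is to relate $\fR(\cG)$ to $\fR(\cH)$ itself. I would write each element of $\cG$ as $g_h(\z)=\phi(h(\x)-f(\x))$ with $\phi(t)=t^2$, and note that on the relevant range $[-M,M]$ — which is where the arguments live, because $|h(\x)-f(\x)|\leq M$ pointwise — the map $\phi$ is Lipschitz with constant $2M$. Talagrand's contraction lemma then yields $\fR(\cG)\leq 2M\,\fR(\cH-f)$, where $\cH-f:=\{\x\mapsto h(\x)-f(\x):h\in\cH\}$. Finally, since $f$ is fixed given the sample, subtracting it leaves the Rademacher complexity unchanged: the extra contribution $\tfrac1n\sum_i\sigma_i f(\x_i)$ has zero conditional mean over the $\sigma_i$ and drops out of the expected supremum, so $\fR(\cH-f)=\fR(\cH)$. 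Chaining these gives $2\fR(\cG)\leq 4M\,\fR(\cH)$, which is precisely the leading term in both displayed inequalities; the two variants (expected versus empirical Rademacher complexity) follow from the two corresponding inequalities of Theorem~\ref{thm:generalization_classification}.

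I expect the main obstacle to be the careful, order-sensitive invocation of the contraction principle. One must apply the contraction to the \emph{unscaled} differences so that the Lipschitz constant is $2M$ rather than $2M^2$, while applying the $[0,1]$-valued bound to the scaled class; mixing these up inflates the constant. It is also essential to verify that the contraction lemma is used at the level of the empirical Rademacher complexity conditioned on the sample, where the bound $|h(\x_i)-f(\x_i)|\leq M$ guarantees the arguments stay in $[-M,M]$ and hence that $2M$ is a valid Lipschitz constant throughout. The precise constant $\log(2/\delta)$ (rather than $\log(1/\delta)$) is merely the bookkeeping inherited from the concentration step underlying Theorem~\ref{thm:generalization_classification} and does not affect the structure of the argument.
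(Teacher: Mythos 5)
This statement is imported verbatim (as Theorem~10.3 of Mohri et al.) and the paper supplies no proof of it, so there is nothing to compare against except the standard textbook argument---which is exactly what you have reconstructed: rescale the squared-loss class to $[0,1]$ and apply the basic uniform-convergence bound, then peel off the loss via Talagrand's contraction with Lipschitz constant $2M$ on $[-M,M]$ and use translation invariance of Rademacher complexity to pass from $\cH-f$ to $\cH$. Your derivation is correct, including the care taken to apply the contraction to the unscaled differences so the constant is $4M$ rather than $4M^2$; the only cosmetic slack is that you actually obtain $\log(1/\delta)$ where the statement has $\log(2/\delta)$, which is harmless since the stated bound is the weaker one.
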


\begin{theorem}[Based on Theorem~1 in \cite{srebro2010optimistic}]\label{thm:generalization_optimistic_loss}
Let $\cX$ and $\cY=[-1,1]$ denote the input space and the label space, respectively. Let $\cH \subseteq \{f:\cX \to \cY\}$ be the target function class. For any $f\in\cH$, and any $(\x,y)\in \cX \times \cY$, let $\ell(f,\x,y):=(f(\x)-y)^2$ be the squared loss. Let $L(f)=\E_\cD[\ell(f,\x,y)]$ be the population risk with respect to the joint distribution $\cD$ on $\cX\times \cY$. For any $\delta > 0$, with probability at least $1 - \delta$ over a sample of size $n$, we have for any $f \in \cH$:
\begin{align*}
L(f) &\leq L_* + K \left(  \sqrt{L_*}\left( \sqrt{2}{\log(n)}^{1.5} \fR_n(\cH)+ \sqrt\frac{4\log\frac1{\delta}}{n}\right) +2{\log(n)}^3 \fR_n^2(\cH)+ \frac{4\log\frac1{\delta}}{n}\right)
\end{align*}
where $L_*:=\min_{f\in \cH}{L(f)}$, and $K$ is a numeric constant derived from~\cite{srebro2010optimistic}.
\end{theorem}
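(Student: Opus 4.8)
The plan is to obtain the bound as a direct instantiation of the optimistic-rate guarantee of~\cite{srebro2010optimistic} for smooth, bounded losses, so that the bulk of the argument consists of verifying that the squared loss satisfies the structural hypotheses of that result and then matching constants. First I would recall the relevant form of Theorem~1 of~\cite{srebro2010optimistic}: for any nonnegative loss that is bounded by a constant $b$ and $H$-smooth in its prediction argument (i.e.\ has an $H$-Lipschitz derivative), the excess risk of every $f\in\cH$ is controlled, with probability at least $1-\delta$, by a quantity of the form $K'\big(\sqrt{L_*}(\log(n)^{1.5}\fR_n(\cH)+\sqrt{\log(1/\delta)/n}) + \log(n)^3\fR_n^2(\cH) + \log(1/\delta)/n\big)$, where $K'$ depends only on $b$ and $H$, and $\fR_n(\cH)$ is the (global) Rademacher complexity of the predictor class.

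Next I would check the two structural conditions for $\ell(f,\x,y)=(f(\x)-y)^2$ on the domain at hand. Since $f(\x),y\in[-1,1]$ we have $|f(\x)-y|\le 2$, hence $0\le \ell \le 4$, giving boundedness with $b=4$. For smoothness, viewing $\ell$ as a function of the prediction $\hat{y}=f(\x)$, we have $\partial_{\hat{y}}^2(\hat{y}-y)^2=2$, so the loss is $2$-smooth, giving $H=2$. These are exactly the hypotheses required by the cited theorem, so it applies with $b=4$ and $H=2$.

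I would then instantiate the cited bound with these constants. The explicit numeric factors appearing in the statement—the $\sqrt{2}$ multiplying $\log(n)^{1.5}\fR_n(\cH)$, the leading $2$ on $\log(n)^3\fR_n^2(\cH)$, and the $4$'s inside the confidence terms—are precisely what the cited result produces after substituting $b=4$ and $H=2$; all remaining universal numeric constants are collected into the single constant $K$ in the statement. No computation beyond this substitution is needed.

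The main obstacle is not a difficult calculation but rather bookkeeping and faithfulness to the source: one must ensure that the invoked version of Theorem~1 of~\cite{srebro2010optimistic} is stated in terms of the Rademacher complexity of the predictor class $\cH$ (rather than the loss-composed class $\ell\circ\cH$), and that the smoothness constant and the range enter the final bound exactly as claimed. If the source is phrased for the loss class, an additional Lipschitz-contraction step is required—since $\ell$ is $4$-Lipschitz in the prediction over $[-1,1]$, the Ledoux--Talagrand contraction yields $\fR_n(\ell\circ\cH)\le 4\,\fR_n(\cH)$, again absorbed into $K$—but otherwise the argument is a direct appeal to the cited theorem.
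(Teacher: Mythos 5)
Your proposal is correct and matches the paper's treatment: the paper states this as an imported auxiliary result with no proof of its own, and the only content needed is exactly what you supply --- verifying that the squared loss with predictions and labels in $[-1,1]$ is bounded by $b=4$ and is $H=2$-smooth, then substituting these constants into Theorem~1 of \cite{srebro2010optimistic}, which is indeed phrased in terms of $\fR_n(\cH)$ for the predictor class, so no contraction step is required. The only cosmetic mismatch is that the source's Theorem~1 bounds $L(f)$ in terms of $\hat{L}(f)$, and the $L_*$ form quoted here is its accompanying consequence for the empirical risk minimizer, so the quantifier ``for any $f\in\cH$'' in the statement should really be read as applying to the ERM.
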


\begin{theorem}[Theorem~3.3 in \cite{mianjy2018implicit}]\label{thm:equalization}
For any pair of matrices $\U\in \R^{d_2\times d_1}, \V\in \R^{d_0 \times d_1}$, there exist a rotation matrix $\Q\in \text{SO}(d_1)$ such that rotated matrices $\tilde\U:= \U\Q, \tilde\V:=\V\Q$ satisfy $\| \tilde\u_i \|  \| \tilde\v_i \| = \frac1{d_1}\|\U\V^\top \|_*$, for all $i\in [d_1]$.
\end{theorem}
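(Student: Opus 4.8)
The plan is to exploit the rotation invariance of the product. Since $\Q\Q^\top=\I$, we have $\tilde\U\tilde\V^\top=\U\Q\Q^\top\V^\top=\U\V^\top=:\M$, so both $\M$ and $\|\M\|_*$ are unchanged by any $\Q\in\mathrm{SO}(d_1)$; only the split of $\M$ across the $d_1$ columns moves. The target is therefore to steer the $d_1$ numbers $\|\tilde\u_i\|\,\|\tilde\v_i\|$ to their common value $\|\M\|_*/d_1$. Note first that for every $\Q$ the triangle inequality applied to the rank-one decomposition $\M=\sum_i\tilde\u_i\tilde\v_i^\top$ gives $\sum_i\|\tilde\u_i\|\,\|\tilde\v_i\|\ge\|\M\|_*$; hence equalization to the average $\|\M\|_*/d_1$ is possible only if this bound is tight, which pins down the two things I must arrange at once: tightness of the sum and equality across coordinates.

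The key structural input is to work with a \emph{balanced} factorization, i.e.\ one with $\U^\top\U=\V^\top\V=:\G$. (This is exactly the situation in the intended applications, where one is free to pick the SVD-balanced factors $\U=\W\Sigma^{1/2}$, $\V=\Zz\Sigma^{1/2}$; I single it out because right multiplication by $\Q$ only conjugates each Gram matrix, $\tilde\U^\top\tilde\U=\Q^\top\U^\top\U\Q$, and so cannot by itself remove any imbalance between the factors.) For a balanced pair, writing $\V=\Q_0\U$ for a suitable partial isometry $\Q_0$ (embedding both factors in $\R^{\max(d_2,d_0)}$ if $d_2\neq d_0$) immediately yields $\|\M\|_*=\|\U\U^\top\Q_0^\top\|_*=\tr(\U^\top\U)=\tr\G$, so the sum-bound above is automatically tight. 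Moreover balancedness is preserved under $\Q$: $\tilde\U^\top\tilde\U=\tilde\V^\top\tilde\V=\Q^\top\G\Q$ for every $\Q$, whence $\|\tilde\u_i\|=\|\tilde\v_i\|$ and the quantity I want to control is simply the diagonal entry $\|\tilde\u_i\|\,\|\tilde\v_i\|=(\Q^\top\G\Q)_{ii}$, with $\sum_i(\Q^\top\G\Q)_{ii}=\tr\G=\|\M\|_*$ for all $\Q$.

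The problem thus reduces to the purely symmetric-matrix question: choose $\Q\in\mathrm{SO}(d_1)$ so that every diagonal entry of $\Q^\top\G\Q$ equals $\tr\G/d_1$. This is the realizability (Horn) half of the Schur--Horn theorem: the constant vector $(\tr\G/d_1)\mathbf 1$ is majorized by the eigenvalue vector of $\G$ (the constant vector being minimal in the majorization order among vectors of the same sum), hence occurs as the diagonal of some orthogonal conjugate of $\G$. I would make this constructive with Givens rotations: at any stage pick a coordinate whose diagonal lies below the target $t:=\tr\G/d_1$ and one lying above it, and rotate in that coordinate plane; the $2\times2$ principal block is conjugated by a planar rotation, so the sum of its two diagonal entries is conserved while one of them sweeps continuously from its current value toward the other's, and by the intermediate value theorem some angle brings it exactly to $t$. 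Each such step fixes one more coordinate at $t$, so after at most $d_1-1$ rotations all diagonals equal $t$; composing the Givens factors gives $\Q$, and since flipping the sign of a column of $\Q$ leaves $\Q^\top\G\Q$ (hence its diagonal) unchanged while flipping $\det\Q$, we may take $\Q\in\mathrm{SO}(d_1)$.

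I expect the main obstacle to be the diagonal-equalization step together with the reduction that precedes it. The equalization is the genuine technical core, and the delicate points are making the iterative Givens scheme terminate without disturbing already-fixed coordinates (handled by always rotating an already-correct coordinate against a not-yet-correct one, as in the standard Schur--Horn induction) and landing in $\mathrm{SO}$ rather than $\mathrm{O}$. The prior reduction is equally essential: without balancedness the target value $\|\M\|_*/d_1$ need not be attainable by rotation alone, so identifying $\U^\top\U=\V^\top\V$ as the property that simultaneously makes the sum-bound tight and renders it rotation-invariant is what makes the whole argument go through.
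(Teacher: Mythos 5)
The paper itself offers no proof of this statement---it is imported verbatim as Theorem~3.3 of \citet{mianjy2018implicit} and invoked only inside the proof of Proposition~\ref{prop:induced}---so there is nothing internal to compare your argument against step by step. On its own terms, your proof of the \emph{balanced} case is correct and follows the standard route: once $\U^\top\U=\V^\top\V=\G$, the quantity $\|\tilde\u_i\|\,\|\tilde\v_i\|$ collapses to the diagonal entry $(\Q^\top\G\Q)_{ii}$, the identity $\|\M\|_*=\tr(\G)$ makes the triangle-inequality lower bound on $\sum_i\|\tilde\u_i\|\,\|\tilde\v_i\|$ tight, and the Schur--Horn/Givens step producing an orthogonal conjugate of $\G$ with constant diagonal (together with the column-sign flip, which preserves the diagonal of $\Q^\top\G\Q$ while switching the determinant, to land in $\text{SO}(d_1)$) is sound.

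More importantly, your caveat about unbalanced pairs is not merely a stylistic preference: the statement as transcribed here, quantified over \emph{any} pair $(\U,\V)$, is false. Take $d_1=d_2=d_0=2$, $\U=\I$ and $\V=\diag((1,2))$, so that $\M=\U\V^\top=\diag((1,2))$ and $\frac1{d_1}\|\M\|_*=\frac32$. For every $\Q\in \text{O}(2)$ the columns of $\tilde\U=\Q$ are unit vectors, hence
$$\sum_{i=1}^{2}(\|\tilde\u_i\|\,\|\tilde\v_i\|)^2=\sum_{i=1}^{2}\|\tilde\v_i\|^2=\|\V\Q\|_F^2=\|\V\|_F^2=5,$$
whereas the asserted conclusion would force this sum to equal $2\cdot\frac94=\frac92$. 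So no rotation equalizes the products at $\frac1{d_1}\|\M\|_*$ for this pair, and a hypothesis such as your balancedness condition $\U^\top\U=\V^\top\V$ (or a restatement as ``there exists \emph{some} factorization of $\M$ with equalized column products'') is genuinely required. This does not endanger the downstream use in Proposition~\ref{prop:induced}: since $\Theta$ minimizes over all factorizations of $\M$, one may first pass to the factorization obtained from the SVD of $\diag(\sqrt{\p})\M\diag(\sqrt{\q})$, for which the pair $(\diag(\sqrt{\p})\U,\diag(\sqrt{\q})\V)$ is balanced, and then run exactly your argument.
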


\begin{theorem}[Theorem~1 in~\cite{foygel2011learning}]\label{thm:rademacher_sensing}
Assume that $p(i)q(j) \geq \frac{\log(d_2)}{n\sqrt{d_2 d_0}}$ for all $i\in[d_2], j\in[d_0]$. For any $\alpha>0$, let $\cM_\alpha := \{ \M \in \R^{d_2\times d_1}: \ \| \diag(\sqrt{p})\M \diag(\sqrt{q}) \|_*^2 \leq \alpha \}$ be the class of linear transformations with weighted trace-norm bounded with $\sqrt\alpha$. Then the expected Rademacher complexity of $\cM_\alpha$ is bounded as follows:
$$\fR_n(\cM_\alpha) \leq O\left(\sqrt\frac{\alpha d_2 \log(d_2)}{n}\right)$$
\end{theorem}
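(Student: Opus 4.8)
The plan is to strip away the diagonal reweighting, convert the Rademacher complexity into an expected spectral norm via nuclear/spectral duality, and then bound that spectral norm using the non-degeneracy assumption. First I would reparametrize: for $\M\in\cM_\alpha$ put $\W=\diag(\sqrt{p})\,\M\,\diag(\sqrt{q})$, so the constraint reads exactly $\|\W\|_*\le\sqrt{\alpha}$. Since each measurement $\A^{(i)}$ is the indicator of a single entry $(s_i,t_i)$ sampled with probability $p(s_i)q(t_i)$, we have
\[
\langle \M,\A^{(i)}\rangle=\langle \W,\tilde\A^{(i)}\rangle,\qquad \tilde\A^{(i)}:=\diag(1/\sqrt{p})\,\A^{(i)}\,\diag(1/\sqrt{q})=\tfrac{1}{\sqrt{p(s_i)q(t_i)}}\,e_{s_i}e_{t_i}^\top .
\]
Using that the spectral norm is dual to the nuclear norm, $\sup_{\|\W\|_*\le\sqrt\alpha}\langle\W,\Zz\rangle=\sqrt\alpha\,\|\Zz\|$, the supremum over $\cM_\alpha$ collapses and
\[
\fR_n(\cM_\alpha)=\sqrt{\alpha}\;\E_{\A,\sigma}\bigl\|\Zz\bigr\|,\qquad \Zz:=\tfrac1n\sum_{i=1}^n\sigma_i\,\tilde\A^{(i)} .
\]

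The substance of the argument is then to bound $\E\|\Zz\|$. Here $\Zz$ is a $d_2\times d_0$ random matrix built from independent rank-one terms, with $(s,t)$ entry $\Zz_{s,t}=\frac{1}{n\sqrt{p(s)q(t)}}\sum_{i:(s_i,t_i)=(s,t)}\sigma_i$. I would invoke a spectral-norm bound for random matrices with independent entries (a Seginer-type maximal inequality, or equivalently matrix Bernstein) that controls $\E\|\Zz\|$ by the expected largest row and column Euclidean norms plus a lower-order term governed by the largest entry magnitude. A direct second-moment computation shows every entry satisfies $\E\,\Zz_{s,t}^2=1/n$ (using that $\E_\sigma$ of the squared Rademacher sum equals the number $N_{s,t}$ of samples hitting $(s,t)$, together with $\E\,N_{s,t}=np(s)q(t)$), so that $\E\sum_t\Zz_{s,t}^2=d_0/n$ for every row and $\E\sum_s\Zz_{s,t}^2=d_2/n$ for every column; since $d_2\ge d_0$, the column norms dominate and scale like $\sqrt{d_2/n}$.

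It remains to pass from these expected norms to a bound on the \emph{maximum} row/column norm and then to $\E\|\Zz\|$, and this is exactly where the non-degeneracy hypothesis enters. The assumption $\min_{s,t}p(s)q(t)\ge \log(d_2)/(n\sqrt{d_2d_0})$ caps each rescaled entry $1/(n\sqrt{p(s)q(t)})$, so the reweighted matrix is not too spiky; this uniform entry bound lets the row/column norms concentrate around their expectations, and a union bound over the $d_2$ rows and $d_0$ columns contributes only the extra $\log(d_2)$ factor. Combining, $\E\|\Zz\|=O(\sqrt{d_2\log(d_2)/n})$, which together with the first display yields $\fR_n(\cM_\alpha)=O(\sqrt{\alpha d_2\log(d_2)/n})$.

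The hard part is the spectral-norm step: moving from the easy control of the expected row and column norms to a bound on $\E\|\Zz\|$ for a \emph{sparse, heavily reweighted} random matrix whose entries are heavy-tailed because of the $1/\sqrt{p(s)q(t)}$ factors. Taming these tails is precisely what forces the non-degeneracy condition (to truncate or uniformly bound the maximal entry) and a careful maximal/matrix-concentration inequality that simultaneously tracks the variance proxy and the uniform entry bound; this is where the analysis of \citet{foygel2011learning} does its real work, and any looser handling of the reweighting would inflate the rate beyond $\sqrt{d_2\log(d_2)/n}$.
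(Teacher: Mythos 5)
This statement is not proved in the paper at all: it is imported verbatim as Theorem~1 of \citet{foygel2011learning} and used as a black box, so there is no in-paper argument to compare against. Your outline is a faithful reconstruction of how that cited result is actually established: the reweighting $\W=\diag(\sqrt{p})\M\diag(\sqrt{q})$, the nuclear/spectral duality collapsing the supremum to $\sqrt{\alpha}\,\E\|\Zz\|$, the per-entry second moment $1/n$ giving row and column variance proxies $d_0/n$ and $d_2/n$, and the non-degeneracy condition entering precisely to cap the magnitude of the rescaled entries so that the ``range'' term in a Latała/Seginer or matrix-Bernstein bound does not dominate the variance term $\sqrt{d_2\log(d_2)/n}$ --- one can check that $1/(n\sqrt{p(s)q(t)})\le\sqrt{\sqrt{d_2d_0}/(n\log d_2)}$ and $\sqrt{d_2d_0}\le d_2$ make the two terms balance. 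The only caveat is that your writeup is a plan rather than a complete proof: the decisive step, bounding $\E\|\Zz\|$ for the sparse reweighted matrix, is deferred to an unnamed concentration inequality rather than carried out, and you should also note that the entries of $\Zz$ are independent only conditionally on the sampled locations (each $\sigma_i$ feeds exactly one entry), which is the form in which Seginer-type bounds are actually invoked. As a blind reconstruction of the cited argument, it is correct in every step you do execute and correctly locates where the real work lies.
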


\section{Matrix Sensing}
\begin{proposition}[Dropout regularizer in matrix sensing]\label{prop:dropout_reg_sensing} The following holds for any $p\in[0,1)$:
\begin{equation}
\label{eqn:reg_sensing}
    \hat{L}_\text{drop}(\U,\V)=\hat{L}(\U,\V)+\lambda\hat{R}(\U,\V), 
\end{equation}
where $\hat{R}(\U,\V) = \sum_{i=1}^{d_1}\hat\E_j(\u_i^\top \A^{(j)}\v_i)^2$ and $\lambda=\frac{p}{1-p}$ is the regularization parameter.
\end{proposition}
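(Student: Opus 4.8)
The plan is to treat the expectation over the dropout mask $\B$ as a bias--variance decomposition, carried out separately for each sample $j$ and then averaged. First I would expand the masked prediction columnwise: since $\U\B\V^\top=\sum_{i=1}^{d_1}\B_{ii}\u_i\v_i^\top$, we have $\langle \U\B\V^\top,\A^{(j)}\rangle=\sum_{i=1}^{d_1}\B_{ii}c_{ij}$, where I abbreviate $c_{ij}:=\u_i^\top\A^{(j)}\v_i=\langle\u_i\v_i^\top,\A^{(j)}\rangle$. This reduces the inner random quantity to a linear combination of the independent diagonal entries $\B_{ii}$, which is the form best suited to a variance computation.

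Next I would record the two moments of the rescaled Bernoulli variable $\B_{ii}\sim\frac1{1-p}\textrm{Ber}(1-p)$: namely $\E[\B_{ii}]=\frac1{1-p}(1-p)=1$ and $\E[\B_{ii}^2]=\frac1{(1-p)^2}(1-p)=\frac1{1-p}$, so that $\var(\B_{ii})=\frac1{1-p}-1=\frac{p}{1-p}=\lambda$. The first moment being exactly $1$ is the reason the mean of the masked prediction coincides with the ordinary prediction, and the variance is precisely where the regularizer will come from.

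Then, writing $S_j:=\sum_i\B_{ii}c_{ij}$, the identity $\E_\B(y_j-S_j)^2=(y_j-\E_\B S_j)^2+\var(S_j)$ does the work. Since $\E_\B S_j=\sum_i c_{ij}=\langle\U\V^\top,\A^{(j)}\rangle$, the squared-bias term is exactly the per-sample empirical loss. Because the $\B_{ii}$ are mutually independent, every cross term vanishes and $\var(S_j)=\sum_i c_{ij}^2\,\var(\B_{ii})=\lambda\sum_{i=1}^{d_1}(\u_i^\top\A^{(j)}\v_i)^2$. Finally, applying $\hat\E_j$ to both terms and swapping the finite sum over $i$ with the empirical average yields $\hat{L}_\text{drop}(\U,\V)=\hat{L}(\U,\V)+\lambda\hat{R}(\U,\V)$, as claimed.

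There is no serious obstacle here; the one point requiring care is the mutual independence of the diagonal entries of $\B$, which is what annihilates the cross terms in $\var(S_j)$ and makes the induced penalty a clean sum of squared per-column measurements $\sum_i(\u_i^\top\A^{(j)}\v_i)^2$ rather than a quadratic form coupling distinct columns of $\U$ and $\V$.
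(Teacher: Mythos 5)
Your proof is correct and follows essentially the same route as the paper's: a bias--variance decomposition of the expectation over the dropout mask, using $\E[\B_{ii}]=1$ to recover the empirical loss and the independence of the diagonal entries to obtain $\var\bigl(\sum_i \B_{ii}\,\u_i^\top\A^{(j)}\v_i\bigr)=\frac{p}{1-p}\sum_i(\u_i^\top\A^{(j)}\v_i)^2$. No gaps; your explicit computation of $\E[\B_{ii}^2]$ and the columnwise expansion are just slightly more detailed versions of the same steps.
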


\begin{proof}[Proof of Proposition~\ref{prop:dropout_reg_sensing}]
Similar statements and proofs can be found in several previous works~\cite{srivastava2014dropout,wang2013fast,cavazza2018dropout,mianjy2018implicit}. For completeness, we include a proof here. The following equality follows from the definition of variance:
\begin{align}\label{eq:sum_exp}
\E_\b[(y_i - \langle \U\B\V^\top, \A^{(i)} \rangle)^2] = \left(\E_\b[y_i - \langle \U\B\V^\top, \A^{(i)} \rangle] \right)^2 + \var(y_i - \langle \U\B\V^\top, \A^{(i)} \rangle)
\end{align}
Recall that for a Bernoulli random variable $\B_{ii}$, we have $\E[\B_{ii}]=1$ and $\var(\B_{ii})= \frac{p}{1-p}$. Thus, the first term on right hand side is equal to $(y_i - \langle \U\V^\top, \A^{(i)}  \rangle)^2$. For the second term we have
\begin{align*}
\var(y_i - \langle \U\B\V^\top, \A^{(i)} \rangle)&=\var(\sum_{j=1}^{d_1}\B_{jj} \u_j^\top \A^{(i)}\v_j)=\sum_{j=1}^{d_1} (\u_j^\top \A^{(i)}\v_j)^2 \var(\B_{jj})=\frac{p}{1-p}\sum_{j=1}^{d_1} (\u_j^\top \A^{(i)}\v_j)^2 
\end{align*}
Plugging the above into Equation~\eqref{eq:sum_exp} and averaging over samples we get
\begin{align*}
\hat{L}_{\text{drop}}(\U,\V) &= \hat\E_i\E_\b[(y_i - \langle \U\B\V^\top, \A^{(i)} \rangle)^2] \\
&= \hat\E_i(y_i - \langle \U\V^\top, \A^{(i)} \rangle)^2 +\hat\E_i \frac{p}{1-p} \sum_{j=1}^{d_1} (\u_j^\top \A^{(i)}\v_j)^2 \\
&= \hat{L}(\U,\V) + \frac{p}{1-p}\hat{R}(\U,\V).
\end{align*}
which completes the proof.
\end{proof}

\begin{lemma}[Concentration in matrix completion]\label{lem:concentration}
For $\ell \in [n]$, let $\A^{(\ell)}$ be an indicator matrix whose $(i,j)$-th element is selected according to some distribution. Assume $\U,\V$ is such that 
$\| \U^\top \|_{2,\infty} \| \V \|_{\infty,\infty}  \leq \gamma$. Then, with probability at least $1-\delta$ over a sample of size $n$, we have that $$|R(\U,\V) - \hat{R}(\U,\V)| \leq \frac{C \gamma^2 \sqrt{\log(2/\delta)}}{\sqrt{n}}.$$
\end{lemma}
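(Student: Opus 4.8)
The plan is to recognize $|R(\U,\V)-\hat R(\U,\V)|$ as the deviation of an empirical average of i.i.d.\ \emph{bounded} random variables from its mean, and then invoke the sub-Gaussian form of Hoeffding's inequality (Theorem~\ref{thm:hoeffding}). First I would define, for each $\ell\in[n]$, the scalar $X_\ell := \sum_{i=1}^{d_1}(\u_i^\top \A^{(\ell)}\v_i)^2$, so that $\hat R(\U,\V)=\hat\E_\ell[X_\ell]=\frac1n\sum_{\ell=1}^n X_\ell$ while $R(\U,\V)=\E_\A[X_\ell]$. Since the measurement matrices $\A^{(1)},\dots,\A^{(n)}$ are i.i.d., so are the $X_\ell$, and the claim reduces to a concentration statement for $\hat\E_\ell[X_\ell]$ around its mean.

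The key structural observation is that in matrix completion each $\A^{(\ell)}$ is an indicator matrix, say $\A^{(\ell)}=\e_{s_\ell}\e_{t_\ell}^\top$ for the randomly selected coordinate $(s_\ell,t_\ell)$. Hence $\u_i^\top\A^{(\ell)}\v_i=\u_i(s_\ell)\,\v_i(t_\ell)$, and
\begin{equation*}
X_\ell=\sum_{i=1}^{d_1}\u_i(s_\ell)^2\,\v_i(t_\ell)^2 \leq \|\V\|_{\infty,\infty}^2\sum_{i=1}^{d_1}\u_i(s_\ell)^2 = \|\V\|_{\infty,\infty}^2\,\|\U(s_\ell,:)\|_2^2 \leq \big(\|\U^\top\|_{2,\infty}\,\|\V\|_{\infty,\infty}\big)^2\leq\gamma^2,
\end{equation*}
where I bound $\v_i(t_\ell)^2$ by the largest squared entry of $\V$, and the row norm $\|\U(s_\ell,:)\|_2$ by the largest such row norm of $\U$, which is exactly $\|\U^\top\|_{2,\infty}$ under the relevant mixed-norm convention. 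Thus each $X_\ell$ takes values in $[0,\gamma^2]$.

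With this uniform bound in hand, the centered variables $X_\ell-\E X_\ell$ are i.i.d., mean zero, and bounded in absolute value by $\gamma^2$, hence sub-Gaussian with $\|X_\ell-\E X_\ell\|_{\psi_2}\lesssim\gamma^2$. Applying Theorem~\ref{thm:hoeffding} to these variables yields $\bP(|\hat R(\U,\V)-R(\U,\V)|\geq t)\leq 2\exp(-c\,n t^2/\gamma^4)$ for a universal constant $c$. Setting the right-hand side equal to $\delta$ and solving for $t$ gives $t\asymp \gamma^2\sqrt{\log(2/\delta)/n}$, which is precisely the advertised bound $C\gamma^2\sqrt{\log(2/\delta)}/\sqrt n$.

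I expect the only real subtlety to be bookkeeping rather than a genuine obstacle: correctly reading the mixed-norm conventions so that $\sum_i \u_i(s_\ell)^2$ is identified with a squared row norm of $\U$ controlled by $\|\U^\top\|_{2,\infty}$, and translating ``bounded in $[0,\gamma^2]$'' into the sub-Gaussian-norm input required by the stated form of Hoeffding's inequality (i.e.\ justifying $\|X_\ell-\E X_\ell\|_{\psi_2}=O(\gamma^2)$). Everything else is a routine i.i.d.\ concentration argument.
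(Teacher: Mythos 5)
Your proposal is correct and follows essentially the same route as the paper's proof: bound each summand $X_\ell$ uniformly by $\gamma^2$ using the indicator structure of $\A^{(\ell)}$ (the paper expands the square and notes the cross-terms $\A^{(\ell)}_{ij}\A^{(\ell)}_{i'j'}$ vanish for distinct index pairs, which is equivalent to your one-hot representation $\A^{(\ell)}=\e_{s_\ell}\e_{t_\ell}^\top$), then pass from boundedness to a sub-Gaussian norm bound on the centered variables and apply Hoeffding's inequality. The only differences are cosmetic.
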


\begin{proof}[Proof of Lemma~\ref{lem:concentration}]
Define $X_{\ell} := \sum_{w=1}^{d_1}(\u_w^\top \A^{(\ell)} \v_w)^2$ and observe that
\begin{align*}
X_{\ell} &=\sum_{w=1}^{d_1}\left(\sum_{i,j}\U_{iw}\V_{jw} \A^{(\ell)}_{ij}\right)^2 = \sum_{w=1}^{d_1}\sum_{i,i',j,j'}\U_{iw}\U_{i'w}\V_{jw}\V_{j'w} \A^{(\ell)}_{ij}\A^{(\ell)}_{i'j'} \\
&= \sum_{w=1}^{d_1}\sum_{i,j}\U_{iw}^2\V_{jw}^2 \A^{(\ell)}_{ij} \leq \max_{i,j}\sum_{w=1}^{d_1}\U_{iw}^2\V_{jw}^2  \\
&\leq \max_{i,j}\|\U(i,:)\|^2\|\V(j,:)\|_\infty^2 = \| \U^\top \|_{2,\infty}^2\| \V \|_{\infty,\infty}^2  \leq \gamma^2
\end{align*}
where the third equality follows because for an indicator matrix $\A^{(\ell)}$, it holds that $\A^{(\ell)}_{ij}\A^{(\ell)}_{i'j'} = 0$ if $(i,j)\neq (i',j')$. Thus, $X_{w,\ell}$ is a sub-Gaussian (more strongly, bounded) random variable with mean $\E[X_\ell]=R(\U,\V)$ and sub-Gaussian norm $\| X_\ell \|_{\psi_2} \leq \gamma^2 / \ln(2)$. Furthermore, $\|X_\ell - R(\U,\V)\|_{\psi_2} \leq C' \|X_\ell\|_{\psi_2} \leq C \gamma^2$, for some absolute constants $C', C$ (Lemma~2.6.8 of~\cite{vershynin2018high}). Using Theorem~\ref{thm:hoeffding}, for $t=Cd_1 \sqrt\frac{\log{2/\delta}}{n}$ we~get~that:
$$\bb{P}\left(\abs{\hat{R}(\U,\V) - R(\U,\V)} \geq t \right) = \bb{P}\left(\abs{\frac1n\sum_{\ell=1}^{n}X_\ell - R(\U,\V)} \geq C \gamma^2 \sqrt\frac{\log{2/\delta}}{n} \right) \leq \delta$$ which completes the proof.
\end{proof}

\begin{proposition}\label{prop:induced}[Induced regularizer] For $j\in [n]$, let $\A^{(j)}$ be an indicator matrix whose $(i,k)$-th element is selected randomly with probability $p(i)q(k)$, where $p(i)$ and $q(k)$ denote the probability of choosing the $i$-th row and the $k$-th column. Then $\Theta(\M)= \frac1{d_1} \| \diag(\sqrt p)\U\V^\top \diag(\sqrt q) \|_*^2$.
\end{proposition}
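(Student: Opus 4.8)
The plan is to first evaluate the expected regularizer $R(\U,\V)=\E_\A[\hat{R}(\U,\V)]$ in closed form, then reduce $\Theta(\M)$ to a weighted nuclear-norm variational problem that is settled by the equalization result of Theorem~\ref{thm:equalization}. For the first step, since $\A$ is an indicator matrix with a single nonzero entry sitting at position $(a,b)$ chosen with probability $p(a)q(b)$, I have $\u_i^\top\A\v_i=\u_i(a)\v_i(b)$, so $(\u_i^\top\A\v_i)^2=\u_i(a)^2\v_i(b)^2$. Taking expectation over the sampled position and exploiting the product form $p(a)q(b)$ of the sampling law,
$$\E_\A(\u_i^\top\A\v_i)^2=\sum_{a,b}p(a)q(b)\,\u_i(a)^2\v_i(b)^2=\|\diag(\sqrt p)\u_i\|^2\,\|\diag(\sqrt q)\v_i\|^2,$$
where the last equality factors the double sum and recognizes each factor as a Mahalanobis norm. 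Summing over $i$ gives $R(\U,\V)=\sum_{i=1}^{d_1}\|\diag(\sqrt p)\u_i\|^2\|\diag(\sqrt q)\v_i\|^2$.

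Next I perform a change of variables. Setting $\tilde\u_i:=\diag(\sqrt p)\u_i$ and $\tilde\v_i:=\diag(\sqrt q)\v_i$ — an invertible reparametrization, because the non-degeneracy hypothesis $\min_{i,k}p(i)q(k)>0$ forces every $p(i),q(k)$ to be strictly positive — the objective becomes $\sum_i\|\tilde\u_i\|^2\|\tilde\v_i\|^2$, while the constraint $\U\V^\top=\M$ is equivalent to $\tilde\U\tilde\V^\top=\diag(\sqrt p)\M\diag(\sqrt q)=:\tilde\M$. Consequently $\Theta(\M)=\minim{\tilde\U\tilde\V^\top=\tilde\M}{\ \sum_{i=1}^{d_1}\|\tilde\u_i\|^2\|\tilde\v_i\|^2}$, so the proposition reduces to the single variational identity for the reweighted target $\tilde\M$.

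That identity is: for any $\M'$ admitting a factorization of inner dimension $d_1$, $\min_{\A\B^\top=\M'}\sum_i\|\a_i\|^2\|\b_i\|^2=\frac1{d_1}\|\M'\|_*^2$. For the lower bound, write $\M'=\sum_i\a_i\b_i^\top$ and apply subadditivity of the nuclear norm followed by Cauchy–Schwarz over the $d_1$ rank-one terms,
$$\|\M'\|_*\le\sum_i\|\a_i\|\|\b_i\|\le\sqrt{d_1}\,\Big(\sum_i\|\a_i\|^2\|\b_i\|^2\Big)^{1/2},$$
which holds for every factorization and hence gives $\Theta\ge\frac1{d_1}\|\M'\|_*^2$. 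For the matching upper bound I invoke Theorem~\ref{thm:equalization}: starting from any factorization $\M'=\A\B^\top$, there is a rotation $\Q\in\mathrm{SO}(d_1)$ with $(\A\Q)(\B\Q)^\top=\M'$ whose columns satisfy the norm-equalization $\|(\A\Q)_i\|\|(\B\Q)_i\|=\frac1{d_1}\|\M'\|_*$ for all $i$; substituting this factorization yields exactly $d_1\cdot\frac1{d_1^2}\|\M'\|_*^2=\frac1{d_1}\|\M'\|_*^2$. Applying this with $\M'=\tilde\M$ completes the reduction and gives $\Theta(\M)=\frac1{d_1}\|\diag(\sqrt p)\M\diag(\sqrt q)\|_*^2$.

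The only genuinely delicate step, and the one I would treat most carefully, is the upper bound in the variational identity: the lower bound follows from generic norm inequalities, but its tightness is not obvious and rests entirely on the existence of a \emph{norm-equalizing} rotation supplied by Theorem~\ref{thm:equalization} (which requires $d_1\ge\rank(\M)$, guaranteed in our setting and otherwise enforced by padding the factorization with zero columns). The expectation computation and the reweighting bijection are routine once the single-entry structure of the measurement matrices is used.
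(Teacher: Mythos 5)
Your proposal is correct and follows essentially the same route as the paper: compute $R(\U,\V)=\sum_i\|\diag(\sqrt p)\u_i\|^2\|\diag(\sqrt q)\v_i\|^2$, obtain the lower bound via Cauchy--Schwarz and nuclear-norm subadditivity, and match it using the norm-equalizing rotation of Theorem~\ref{thm:equalization}. The only cosmetic difference is that you make the reweighting change of variables explicit before invoking equalization, whereas the paper applies the same two inequalities and the rotation directly to the weighted factors $(\diag(\sqrt p)\U,\diag(\sqrt q)\V)$.
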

\begin{proof}[Proof of Proposition~\ref{prop:induced}]
For any pair of factors $(\U,\V)$ it holds that
\begin{align*}
R(\U,\V)&=\sum_{i=1}^{d_1}\E(\u_i^\top \A \v_i)^2 = \sum_{i=1}^{d_1}\sum_{j=1}^{d_2}\sum_{k=1}^{d_0}p(j)q(k)(\u_i^\top \e_j\e_k^\top \v_i)^2 \\
&= \sum_{i=1}^{d_1}\sum_{j=1}^{d_2}\sum_{k=1}^{d_0}p(j)q(k)\U(j,i)^2 \V(k,i)^2 = \sum_{i=1}^{d_1}\| \diag(\sqrt p)\u_i\|^2 \| \diag(\sqrt q)\v_i \|^2
\end{align*}
We can now lower bound the right hand side above as follows:
\begin{align*}
R(\U,\V) &\geq \frac1{d_1} \left(\sum_{i=1}^{d_1}\| \diag(\sqrt p)\u_i\| \| \diag(\sqrt q)\v_i \|\right)^2 \\
&= \frac1{d_1} \left(\sum_{i=1}^{d_1}\| \diag(\sqrt p)\u_i \v_i^\top \diag(\sqrt q) \|_*\right)^2 \\
&\geq \frac1{d_1} \left(\| \diag(\sqrt p)\sum_{i=1}^{d_1} \u_i \v_i^\top \diag(\sqrt q) \|_*\right)^2 = \frac1{d_1} \| \diag(\sqrt p)\U\V^\top \diag(\sqrt q) \|_*^2
\end{align*}
where the first inequality is due to Cauchy-Schwartz and the second inequality follows from the triangle inequality. The equality right after the first inequality follows from the fact that for any two vectors $\a,\b$, $\| \a\b^\top \|_*=\| \a\b^\top \|=\| \a\|  \|\b \|$. Since the inequalities hold for any $\U,\V$, it implies that $$\Theta(\U\V^\top)\geq \frac1{d_1} \| \diag(\sqrt p)\U\V^\top \diag(\sqrt q) \|_*^2.$$ 
Applying Theorem~\ref{thm:equalization} on $(\diag(\sqrt{p})\U,\diag(\sqrt{p})\V)$, there exist a rotation matrix $\Q$ such that
$$\| \diag(\sqrt p)\U\q_i\| \| \diag(\sqrt q)\V\q_i \| = \frac1{d_1}\| \diag(\sqrt p)\U\V^\top \diag(\sqrt q) \|_*$$
We evaluate the expected dropout regularizer at $\U\Q,\V\Q$:
\begin{align*}
R(\U\Q,\V\Q)&= \sum_{i=1}^{d_1}\| \diag(\sqrt p)\U\q_i\|^2 \| \diag(\sqrt q)\V\q_i \|^2 \\
&= \sum_{i=1}^{d_1} \frac{1}{d_1^2} \| \diag(\sqrt p)\U\V^\top \diag(\sqrt q)\|_*^2 = \frac{1}{d_1} \| \diag(\sqrt p)\U\V^\top \diag(\sqrt q)\|_*^2 \leq \Theta(\U\V^\top)
\end{align*}
which completes the proof of the first part.
\end{proof}

\begin{proof}[Proof of Theorem~\ref{thm:generalization_sensing_expected}]
We use Theorem~\ref{thm:generalization_regression} to bound the population risk in terms of the Rademacher complexity of the target class. Define the class of predictors with weighted trace-norm bounded by $\sqrt{\alpha}$, i.e. $$\cM_{\alpha}=\{ \M: \ \| \diag(\sqrt\p)\M\diag(\sqrt\q) \|_*^2 \leq \alpha \}.$$ In particular dropout empirical risk minimizers $\U,\V$ belong to this class:
\begin{align*}
\|\diag(\sqrt p)\U\V^\top \diag(\sqrt q) \|_*^2 &= d_1\Theta(\U\V^\top ) \leq d_1R(\U,\V) \leq \alpha \end{align*}
where the first inequality holds by definition of the induced regularizer, and the second inequality follows from the assumption of the theorem. Since $g$ is a contraction, by Talagrand's lemma and Theorem~\ref{thm:rademacher_sensing}, we have that $\fR_n(g \circ \cM_{\alpha})\leq \fR_n(\cM_{\alpha})\leq \sqrt\frac{\alpha d_2 \log(d_2)}{n}$. To obtain the maximum deviation parameter $M$ in Theorem~\ref{thm:generalization_regression}, we note that the assumption $\|\M_*\| \leq 1$ implies that $|\M_*(i,j)|\leq 1$ for all $i,j$, so that $g(\M_*)=\M_*$. We have that:
\begin{align*}
\max_\A | \langle \M_* - g(\U\V^\top), \A \rangle  | &= \max_{i,j}|\langle \M_* - g(\U\V^\top), \e_i \e_j^\top \rangle| \leq \max_{i,j}|\M_*(i,j)| + \max_{i,j} |\langle \U\V^\top, \e_i\e_j^\top \rangle| \leq \|\M_*\| + 1 \leq 2
\end{align*}
Let $L(g(\U\V^\top)):=\E(y-\langle g(\U\V^\top),\A\rangle )^2$ and $\hat{L}(g(\U\V^\top)):=\hat\E_i(y_i-\langle g(\U\V^\top),\A^{(i)}\rangle )^2$ denote the \emph{true risk} and the \emph{empirical risk} of $g(\U\V^\top)$, respectively. Plugging the above results in Theorem~\ref{thm:generalization_regression}, we get
\begin{align*}
L(g(\U,\V)) &\leq \hat{L}(g(\U,\V)) + 8\fR_n(g \circ \cM_\alpha) + 4\sqrt\frac{\log(2/\delta)}{2n}\\
&\leq \hat{L}(\U,\V) + 8\sqrt\frac{\alpha d_2 \log(d_2)}{n}+4\sqrt\frac{\log(2/\delta)}{2n}\\
&\leq \hat{L}(\U,\V) + 8\sqrt{\frac{2\alpha d_2 \log(d_2) + \frac14 \log(2/\delta)}{n}}
\end{align*}
where the second inequality holds since $\hat{L}(g(\U,\V)) \leq \hat{L}(\U,\V)$.
\end{proof}

\subsection{Optimistic Rates}
As we discussed in the main text, under additional assumptions on the value of $\alpha$, it is possible to give optimistic generalization bounds that decay as $\tilde{O}(\alpha d_2/n)$. This result is given as the following theorem.  
\begin{theorem}\label{thm:generalization_sensing_expected_optimistic} \normalfont 
Assume that $d_2\geq d_0$ and $\|\M_*\| \leq 1$. Furthermore, assume that $\min_{i,k}p(i)q(k) \geq \frac{\log(d_2)}{n\sqrt{d_2 d_0}}$. Let $(\U,\V)$ be a minimizer of the dropout ERM objective in equation~\eqref{eq:dropout_obj_sensing}.
Let $\alpha$ be such that $\max\{R(\U,\V),\Theta(\M_*)\}\leq \alpha/{d_1}$. Then, for any $\delta\in(0,1)$, the following generalization bounds holds with probability at least $1-\delta$ over a sample of size $n$:
\begin{equation*}
L(g(\U\V^\top)) \leq \frac{2K{\log(n)}^3 \alpha d_2 \log(d_2) + 4K\log(1/\delta)}{n}
\end{equation*}
where $K$ is an absolute constant~\cite{srebro2010optimistic}, $g(\M)$ thresholds $\M$ between $[-1,1]$, and $L(g(\U\V^\top)):=\E(y-\langle g(\U\V^\top),\A\rangle )^2$ is the \emph{true risk} of $g(\U\V^\top)$. 
\end{theorem}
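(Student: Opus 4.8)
The plan is to parallel the proof of Theorem~\ref{thm:generalization_sensing_expected}, but replace the crude Rademacher-to-risk bound coming from Theorem~\ref{thm:generalization_regression} with the \emph{optimistic} rate of Theorem~\ref{thm:generalization_optimistic_loss}, which converts a Rademacher complexity bound into a generalization bound scaling as $\fR_n^2$ (hence $\tilde O(\alpha d_2/n)$) whenever the best-in-class population risk $L_*$ is itself small. The extra assumption $\Theta(\M_*)\le \alpha/d_1$ is precisely what will let me argue $L_*=0$, which is the crucial input that makes the optimistic term $\sqrt{L_*}(\cdots)$ vanish and leaves only the fast $\fR_n^2$ and $1/n$ terms.

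First I would set up the same hypothesis class $\cM_\alpha=\{\M:\|\diag(\sqrt\p)\M\diag(\sqrt\q)\|_*^2\le\alpha\}$ of predictors with bounded weighted trace-norm, and verify that the clipped dropout minimizer $g(\U\V^\top)$ lies in $g\circ\cM_\alpha$, exactly as before: $\|\diag(\sqrt\p)\U\V^\top\diag(\sqrt\q)\|_*^2 = d_1\Theta(\U\V^\top)\le d_1 R(\U,\V)\le\alpha$ using the definition of $\Theta$ and the hypothesis on $\alpha$. By Talagrand's contraction lemma and Theorem~\ref{thm:rademacher_sensing}, $\fR_n(g\circ\cM_\alpha)\le\fR_n(\cM_\alpha)\le O(\sqrt{\alpha d_2\log(d_2)/n})$, so that $\fR_n^2(g\circ\cM_\alpha)=O(\alpha d_2\log(d_2)/n)$. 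I would also note that the labels and clipped predictions lie in $[-1,1]$, so the setting of Theorem~\ref{thm:generalization_optimistic_loss} with $\cY=[-1,1]$ applies to the squared loss on $g\circ\cM_\alpha$.

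The key new step is to show $L_*:=\min_{\M\in g\circ\cM_\alpha}L(\M)=0$. Here I would use the assumption $\Theta(\M_*)\le\alpha/d_1$: by Proposition~\ref{prop:induced} and the same minimality argument, $\M_*$ (or rather a factorization achieving $\Theta(\M_*)$) satisfies the weighted trace-norm bound, so $\M_*\in\cM_\alpha$; since $\|\M_*\|\le1$ forces $g(\M_*)=\M_*$, the true matrix itself is in the (clipped) class. Because the observations are noiseless, $y=\langle\M_*,\A\rangle$, the predictor $g(\M_*)=\M_*$ achieves zero population risk, giving $L_*=0$. Plugging $L_*=0$ into Theorem~\ref{thm:generalization_optimistic_loss} kills the $\sqrt{L_*}(\cdots)$ term entirely and leaves $L(g(\U\V^\top))\le K\bigl(2\log(n)^3\fR_n^2(g\circ\cM_\alpha)+4\log(1/\delta)/n\bigr)$; substituting the Rademacher bound yields the stated $\frac{2K\log(n)^3\alpha d_2\log(d_2)+4K\log(1/\delta)}{n}$.

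The main obstacle I anticipate is the step $L_*=0$: one must be careful that the realizability lives in the \emph{clipped} class $g\circ\cM_\alpha$ rather than in $\cM_\alpha$, and that applying Theorem~\ref{thm:generalization_optimistic_loss} to the clipped function class is legitimate (the theorem is stated for a target class $\cH\subseteq\{f:\cX\to[-1,1]\}$, which $g\circ\cM_\alpha$ satisfies). A secondary subtlety is that the empirical risk term does not appear on the right-hand side — this is a feature of the optimistic rate, where the empirical risk of the dropout minimizer is absorbed because it too is small and comparable to $L_*=0$; I would confirm that the form of Theorem~\ref{thm:generalization_optimistic_loss} as stated (which bounds $L(f)$ purely in terms of $L_*$ and complexity terms, uniformly over $f\in\cH$) does indeed yield a bound with no explicit $\hat L$ term, so that no additional empirical-risk control is needed beyond realizability.
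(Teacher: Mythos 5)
Your proposal is correct and follows essentially the same route as the paper's proof: the same class $\cM_\alpha$, the same verification that the dropout minimizer lies in it via $d_1\Theta(\U\V^\top)\le d_1R(\U,\V)\le\alpha$, the same use of $\Theta(\M_*)\le\alpha/d_1$ together with $\|\M_*\|\le 1$ and the noiseless observations to conclude $L_*=0$, and the same combination of Talagrand's contraction, Theorem~\ref{thm:rademacher_sensing}, and the optimistic rate of Theorem~\ref{thm:generalization_optimistic_loss}. The subtleties you flag (realizability living in the clipped class, and the absence of an empirical-risk term) are handled in the paper exactly as you anticipate.
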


\begin{proof}[Proof of Theorem~\ref{thm:generalization_sensing_expected_optimistic}]
We use Theorem~\ref{thm:generalization_optimistic_loss} to bound the population risk in terms of the Rademacher complexity of the target class. Define the class of predictors with weighted trace-norm bounded by $\sqrt{\alpha}$, i.e. $$\cM_{\alpha}=\{ \M: \ \| \diag(\sqrt\p)\M\diag(\sqrt\q) \|_*^2 \leq \alpha \}.$$ In particular dropout empirical risk minimizers $\U,\V$ belong to this class:
\begin{align*}
\|\diag(\sqrt p)\U\V^\top \diag(\sqrt q) \|_*^2 &= d_1\Theta(\U\V^\top ) \leq d_1R(\U,\V) \leq \alpha \end{align*}
where the first inequality holds by definition of the induced regularizer, and the second inequality follows from the assumption of the theorem. Moreover, by assumption $\Theta(\M_*)\leq \alpha$, we have that $\M_*\in \cM_\alpha$. With this, we get that 
$$L_* := \min_{M\in g\circ \cM_\alpha} L(\M) \leq L(g(\M_*)) = L(g(\M_*)) = 0.$$
Since $g$ is a contraction, by Talagrand's lemma and Theorem~\ref{thm:rademacher_sensing}, we have that $\fR_n(g \circ \cM_{\alpha})\leq \fR_n(\cM_{\alpha})\leq \sqrt\frac{\alpha d_2 \log(d_2)}{n}$. Plugging the above in Theorem~\ref{thm:generalization_regression}, we get
\begin{align*}
L(g(\U,\V)) &\leq 2K{\log(n)}^3 \fR_n^2(g \circ \cM_\alpha)+ \frac{4K\log\frac1{\delta}}{n}\\
&\leq  \frac{2K{\log(n)}^3 \alpha d_2 \log(d_2) + 4K\log\frac1{\delta}}{n}
\end{align*}
\end{proof}

\section{Non-linear Neural Networks}
\begin{proposition}[Dropout regularizer in deep regression]\label{prop:dropout_reg_dnn}
\begin{equation*}
\hat{L}_\text{drop}(\w) = \hat{L}(\w) + \hat{R}(\w), \ \text{ where } \ \hat{R}(\w)=\lambda {\sum_{j=1}^{d_1}\|\u_j\|^2 \hat{a}_j^2}.
\end{equation*}
where $\hat{a}_j=\sqrt{\hat\E_i a_{j}(\x_i)^2}$ and $\lambda = \frac{p}{1-p}$ is the regularization parameter.
\end{proposition}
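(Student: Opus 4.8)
The plan is to follow the same bias--variance argument used for Proposition~\ref{prop:dropout_reg_sensing}, adapted to the vector-valued network output. I would fix a single example $(\x_i,\y_i)$ and write the random network output as $Z_i := \U\B\a(\x_i) = \sum_{j=1}^{d_1}\B_{jj}\,a_j(\x_i)\,\u_j$, a random vector in $\R^{d_2}$ whose randomness comes entirely from the diagonal dropout matrix $\B$ (the activations $\a(\x_i)=\relu(\V^\top\x_i)$ are deterministic given the data). Applying the bias--variance decomposition of the squared error, $\E_\B\|\y_i - Z_i\|^2 = \|\y_i - \E_\B[Z_i]\|^2 + \tr(\operatorname{Cov}_\B(Z_i))$, reduces the statement to computing the mean and the covariance of $Z_i$ under $\B$.

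For the mean, I would use $\E[\B_{jj}]=1$, i.e. $\E_\B[\B]=\I$, so that $\E_\B[Z_i] = \U\a(\x_i) = f_\w(\x_i)$; the bias term is therefore $\|\y_i - f_\w(\x_i)\|^2$, which will assemble into $\hat{L}(\w)$ after averaging. For the covariance, since the diagonal entries $\B_{jj}$ are independent with $\var(\B_{jj}) = \frac{p}{1-p} = \lambda$, the cross terms vanish and $\operatorname{Cov}_\B(Z_i) = \sum_{j=1}^{d_1} a_j(\x_i)^2\,\var(\B_{jj})\,\u_j\u_j^\top = \lambda\sum_{j=1}^{d_1} a_j(\x_i)^2\,\u_j\u_j^\top$. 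Taking the trace and using $\tr(\u_j\u_j^\top)=\|\u_j\|^2$ gives $\tr(\operatorname{Cov}_\B(Z_i)) = \lambda\sum_{j=1}^{d_1}\|\u_j\|^2\, a_j(\x_i)^2$.

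Finally I would average over the $n$ samples: the bias terms give $\hat{L}(\w)=\hat\E_i\|\y_i - f_\w(\x_i)\|^2$, and the variance terms give $\lambda\sum_{j=1}^{d_1}\|\u_j\|^2\,\hat\E_i[a_j(\x_i)^2] = \lambda\sum_{j=1}^{d_1}\|\u_j\|^2\,\hat{a}_j^2 = \hat{R}(\w)$, where the last identification uses the definition $\hat{a}_j^2 = \hat\E_i[a_j(\x_i)^2]$. This yields $\hat{L}_\text{drop}(\w)=\hat{L}(\w)+\hat{R}(\w)$ as claimed. The only point of genuine care relative to the scalar matrix-sensing computation is the bookkeeping for the vector-valued output --- replacing a scalar variance by the trace of a covariance matrix --- but because the $a_j(\x_i)$ are deterministic given the data and only $\B$ is random, this remains a routine second-moment computation and presents no real obstacle.
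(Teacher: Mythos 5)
Your proposal is correct and follows essentially the same argument as the paper: a bias--variance decomposition of $\E_\B\|\y_i-\U\B\a(\x_i)\|^2$ using $\E[\B_{jj}]=1$, $\var(\B_{jj})=\frac{p}{1-p}$, and independence of the diagonal entries, followed by averaging over the sample. The only cosmetic difference is that you phrase the variance term as the trace of a covariance matrix, whereas the paper sums scalar variances coordinate-by-coordinate over the $d_2$ outputs --- these are identical computations.
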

\begin{proof}[Proof of Proposition~\ref{prop:dropout_reg_dnn}]
Similar statements and proofs can be found in several previous works~\cite{srivastava2014dropout,wang2013fast,cavazza2018dropout,mianjy2018implicit}. Here we include a proof for completeness. Recall that $\E[\B_{ii}]=1$ and $\var(\B_{ii})=\frac{p}{1-p}$.
Conditioned on $\x,\y$ in the current mini-batch, we have that:
\begin{align*}
 \E_\B\|\y - \U^\top \B \a(\x) \|^2 = \sum_{i=1}^{d_2}\left(\E_\B[y_i - \u_i^\top\B \a(\x)] \right)^2 + \sum_{i=1}^{d_2}\var(y_i - \u_i^\top \B \a(\x))
\end{align*}
Since $\E[\B]=\I$, the first term on right hand side is equal to $\|\y - \U^\top\a(\x)\|^2$. For the second term we have
\begin{align*}
\sum_{i=1}^{d_2}\var(y_i - \u_i^\top \B\a(\x))
=\sum_{i=1}^{d_2}\var(\sum_{j=1}^{d_1}\U_{j,i}\B_{jj} a_{j}(\x)) = \sum_{i=1}^{d_2}\sum_{j=1}^{d_1}(\U_{j,i} a_{j}(\x))^2\var(\B_{jj}) = \frac{p}{1-p}\sum_{j=1}^{d_1} \|\u_j\|^2 a_{j}(\x)^2
\end{align*}
Thus, conditioned on the sample $(\x,\y)$, we have that 
\begin{align*}
\E_\B[\| \y - \U^\top\B\a(\x) \|^2] = \|\y - \U^\top\a(\x) \|^2 + \frac{p}{1-p}\sum_{j=1}^{d_1} \|\u_j\|^2 a_{j}(\x)^2
\end{align*}
Now taking the empirical average with respect to $\x,\y$, we get
\begin{align*}\label{eq:deriv_dropout}
\hat{L}_{\text{drop}}(\w) &= \hat{L}(\w) + \frac{p}{1-p} \sum_{j=1}^{d_1}\|\u_j\|^2 \hat{a}_j^2 = \hat{L}(\w) + \hat{R}(\w)
\end{align*}
which completes the proof.
\end{proof}

\begin{proposition}\label{prop:reg_two_layer} Consider a two layer neural network $f_\w(\cdot)$ with ReLU activation functions in the hidden layer. Furthermore, assume that the marginal input distribution $\bb{P}_\cX(\x)$ is symmetric and isotropic, i.e., $\bb{P}_\cX(\x) = \bb{P}_\cX(-\x)$ and $\E[\x\x^\top] = \I$. Then the following holds for the expected \ref{eq:exp_reg} due to dropout:
\begin{equation}
R(\w):=\E[\hat{R}(\w)]=\frac{\lambda}2\sum_{i_0,i_1,i_2=1}^{d_0,d_1,d_2}\U(i_1,i_2)^2\V(i_1,i_0)^2,
\end{equation}
\end{proposition}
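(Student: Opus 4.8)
The plan is to reduce everything to a one-dimensional computation of the second moment of a rectified linear unit under a symmetric, isotropic input. I would start from the closed form of the empirical dropout regularizer already established in Proposition~\ref{prop:dropout_reg_dnn}, namely $\hat{R}(\w)=\lambda\sum_{j=1}^{d_1}\|\u_j\|^2\hat{a}_j^2$ with $\hat{a}_j^2=\hat\E_i a_j(\x_i)^2=\frac1n\sum_{i}\sigma(\v_j^\top\x_i)^2$. Taking expectation over the i.i.d. sample and using linearity, the weights $\|\u_j\|^2$ are deterministic, so $\E[\hat{a}_j^2]=\E_\x[\sigma(\v_j^\top\x)^2]$, and therefore $R(\w)=\E[\hat{R}(\w)]=\lambda\sum_{j=1}^{d_1}\|\u_j\|^2\,\E_\x[\sigma(\v_j^\top\x)^2]$. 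This isolates the only quantity that still depends on the distribution.

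The heart of the argument --- and the one place the two assumptions enter --- is to show $\E_\x[\sigma(\v^\top\x)^2]=\tfrac12\|\v\|^2$ for every fixed $\v$. Writing $z:=\v^\top\x$, symmetry of the marginal, $\bP_\cX(\x)=\bP_\cX(-\x)$, forces $z$ and $-z$ to share the same law, so $\E[\sigma(z)^2]=\E[\sigma(-z)^2]$. For the ReLU one has the pointwise identity $\sigma(z)^2+\sigma(-z)^2=z^2$ (exactly one of the two terms is nonzero off the origin), and taking expectations gives $2\E[\sigma(z)^2]=\E[z^2]$, i.e. $\E[\sigma(z)^2]=\tfrac12\E[z^2]$. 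Isotropy, $\E[\x\x^\top]=\I$, then evaluates the remaining moment as $\E[z^2]=\v^\top\E[\x\x^\top]\v=\|\v\|^2$, so that $\E_\x[\sigma(\v^\top\x)^2]=\tfrac12\|\v\|^2$.

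Finally I would assemble the pieces: substituting this value with $\v=\v_j$ back into $R(\w)$ yields $R(\w)=\frac\lambda2\sum_{j=1}^{d_1}\|\u_j\|^2\|\v_j\|^2$. Expanding the column norms, $\|\u_j\|^2=\sum_{i_2}\U(i_2,j)^2$ and $\|\v_j\|^2=\sum_{i_0}\V(i_0,j)^2$, and relabelling $j=i_1$ turns the product of norms into a sum over all input--hidden--output paths, which is exactly the squared $\ell_2$ path-norm in the statement. The only real obstacle is the ReLU second-moment identity in the middle paragraph; everything on either side is linearity of expectation and bookkeeping. It is worth flagging that the factor $\tfrac12$ is precisely the signature of ReLU zeroing out half of the (symmetric) pre-activation mass, and that both symmetry and isotropy are genuinely needed --- dropping either one breaks the clean reduction, as the discussion of the non-isotropic case in Section~\ref{sec:regression} already anticipates.
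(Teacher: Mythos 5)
Your proposal is correct and follows essentially the same route as the paper's proof: reduce to $R(\w)=\lambda\sum_j\|\u_j\|^2\,\E_\x[\sigma(\v_j^\top\x)^2]$ via Proposition~\ref{prop:dropout_reg_dnn}, use symmetry to get $\E[\sigma(z)^2]=\tfrac12\E[z^2]$, and use isotropy to evaluate $\E[z^2]=\|\v_j\|^2$. The only (cosmetic) difference is that you derive the halving identity from the pointwise decomposition $\sigma(z)^2+\sigma(-z)^2=z^2$ together with the equidistribution of $z$ and $-z$, whereas the paper restricts the integral to the positive half-line; these are the same argument.
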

\begin{proof}[Proof of Proposition~\ref{prop:reg_two_layer}]
Using Proposition~\ref{prop:dropout_reg_dnn}, we have that:
\begin{align*}
R(\w) = \E[\hat{R}(\w)] = \lambda \sum_{j=1}^{d_1}\|\u_j\|^2 \E[\relu(\V(j,:)^\top\x)^2]
\end{align*}
It remains to calculate the quantity $\E_\x[\relu(\V(j,:)^\top \x)^2]$. By symmetry assumption, we have that $\bb{P}_\cX(\x)=\bb{P}_\cX(-\x)$. As a result, for any $\v\in \R^{d_0}$, we have that $\bb{P}(\v^\top \x)=\bb{P}(-\v^\top \x)$ as well. That is, the random variable $z_j:= \W_1(j,:)^\top \x$ is also symmetric about the origin. It is easy to see that $\E_z[\relu(z)^2] = \frac12 \E_z[z^2]$.
\begin{align*}
\E_z[\relu(z)^2] = \int_{-\infty}^{\infty}\relu(z)^2 d\mu(z) = \int_{0}^{\infty}\relu(z)^2 d\mu(z) =  \int_{0}^{\infty} z^2 d\mu(z) = \frac12 \int_{\infty}^{\infty} z^2 d\mu(z) = \frac12 \E_z[z^2].
\end{align*}
Plugging back the above identity in the expression of $R(\w)$, we get that
\begin{align*}
R(\w) = \lambda \sum_{j=1}^{d_1}\|\u_j\|^2 \E[(\V(j,:)^\top\x)^2] = \frac{\lambda}2 \sum_{j=1}^{d_1}\|\u_j\|^2 \|\V(j,:)\|^2
\end{align*}
where the second equality follows from the assumption that the distribution is isotropic.
\end{proof}

\begin{proof}[Proof of Proposition~\ref{prop:weak_measure}]
For $\delta\in(0,\frac12)$, consider the following random variable:
\begin{equation*}
\x = \begin{cases}
[1; 0] \ &\text{with probability } \delta \\ \\
[\frac{-\delta}{1-\delta}; \frac{\sqrt{1-2\delta}}{1-\delta}] \ &\text{with probability } \frac{1-\delta}2 \\ \\
[\frac{-\delta}{1-\delta}; -\frac{\sqrt{1-2\delta}}{1-\delta}] \ &\text{with probability } \frac{1-\delta}2

\end{cases}
\end{equation*}
It is easy to check that the $\x$ has zero mean and is supported on the unit sphere. Consider the vector $\w=[\frac{1}{\sqrt\delta}; 0]$. It is easy to check that $\x$ satisfies $R(\w)=\sqrt{\E\sigma(\w^\top\x)^2} = 1$; however, for any given $C$, it holds that $\| \w\| \geq C$ as long as we let $\delta = C^2$.
\end{proof}

\begin{proof}[Proof of Theorem~\ref{thm:rademacher_dnn_new}]
For any $j\in[h]$, let $a_j^2 := \E[\sigma(\v_j^\top\x)^2]$ denote the average squared activation of the $j$-th node with respect to the input distribution. Given $n$ i.i.d. samples $\cS = \{ \x_1,\cdots,\x_n \}$, the empirical Rademahcer complexity is bounded as follows:
\begin{align*}
\fR_\cS(\cF_\alpha) &= \E_{\zeta} \sup_{f_{\{\u,\V \}}\in \cF_\alpha} \frac1n \sum_{j=1}^{h}u_j a_j  \sum_{i=1}^{n}\zeta_i \frac{\sigma(\v_j^\top \x_i)}{a_j}\\
&\leq \E_{\zeta} \sup_{f_{\{\u,\V \}}\in \cF_\alpha} \frac1n \sum_{j=1}^{h} | u_j a_j | \  | \sum_{i=1}^{n}\zeta_i \frac{\sigma(\v_j^\top\x_i)}{a_j}|\\
&\leq \E_{\zeta} \left[ \left(\sup_{f_{\{\u,\V\}}\in \cF_\alpha} \sum_{j=1}^{h} | u_j a_j | \right) \ \left(\sup_{\V} \max_{j\in [h]} |\frac1n \sum_{i=1}^{n}\zeta_i \frac{\sigma(\v_j^\top\x_i)}{a_j}| \right) \right] \nonumber
\end{align*}
where we used the fact that the supremum of product of positive functions is upperbounded by the product of the supremums. By definition of $\cF_\alpha$, the first term on the right hand side is bounded by $\alpha$. To bound the second term in the right hand side, we note that the maximum over rows of $\V^\top$ can be absorbed into the supremum.
\begin{align*}
\frac1n \E_{\zeta} \sup_{\v} | \sum_{i=1}^{n}\zeta_i \frac{\sigma(\v^\top\x_i)}{\sqrt{\E[\sigma(\v^\top\x)^2]}}| &= \frac1n \E_{\zeta} \sup_{\E[\sigma(\v^\top\x)^2]\leq 1} | \sum_{i=1}^{n}\zeta_i \sigma(\v^\top\x_i)|\\
&\leq \frac2n \E_{\zeta} \sup_{\E[\sigma(\v^\top\x)^2]\leq 1}  \sum_{i=1}^{n}\zeta_i \sigma(\v^\top\x_i)\\
&\leq \frac2n \E_{\zeta} \sup_{\beta \E (\v^\top\x)^2 \leq 1}  \sum_{i=1}^{n}\zeta_i \sigma(\v^\top\x_i) \tag{$\beta$-retentiveness}
\end{align*}

\noindent Let $\C^\dagger$ be the pseudo-inverse of $\C$. We perform the following change the variable: $\w \gets \C^{-\dagger/2}\v$.
\begin{align*}
\text{R.H.S.} &\leq \frac2n \E_{\zeta} \sup_{\E[(\w^\top \C^{\dagger/2}\x)^2]\leq 1/\beta}  \sum_{i=1}^{n}\zeta_i \w^\top \C^{\dagger/2}\x_i\\
&= \frac2n \E_{\zeta} \sup_{\|\w\|^2\leq 1/\beta}  \langle \w, \sum_{i=1}^{n}\zeta_i \C^{\dagger/2}\x_i \rangle \\
&= \frac2{n\sqrt\beta} \E_{\zeta} \| \sum_{i=1}^{n}\zeta_i \C^{\dagger/2}\x_i \| \\
&\leq \frac2{n\sqrt\beta} \sqrt{\E_{\zeta} \| \sum_{i=1}^{n}\zeta_i \C^{\dagger/2}\x_i \|^2} = \frac2{n\sqrt\beta} \sqrt{ \sum_{i=1}^{n}\x_i^\top \C^{\dagger}\x_i}
\end{align*}
where the last inequality holds due to Jensen's inequality. To bound the expected Rademacher complexity, we take the expected value of both sides with respected to sample $\cS$, which gives the following:
$$\fR_n(\cF_\alpha)=\E_\x[\fR_{\cS}(\cF_\alpha)]\leq \frac2{n\sqrt{\beta}}\E_\cS\sqrt{\sum_{i=1}^{n}\x_i^\top\C^{\dagger}\x_i}\leq \frac2{n\sqrt{\beta}}\sqrt{\sum_{i=1}^{n}\E_{\x_i}[\x_i^\top\C^{\dagger}\x_i]},$$ where the last inequality holds again due to Jensen's inequality. Finally, we have that $\E_{\x_i} \x_i^\top \C^{\dagger}\x_i = \E_{\x_i}\langle \x_i\x_i^\top, \C^{\dagger}\rangle = \langle \C, \C^\dagger\rangle = \rank(\C)$, which completes the proof of the Theorem.
\end{proof}

\begin{proof}[Proof of Theorem~\ref{thm:lb_rademacher}]
For simplicity, assume that the width of the hidden layer is even. Consider the linear function class:
$$\cG_r:=\{ g_\w: \x \mapsto \w^\top\x, \ \E(\w^\top\x)^2 \leq d_1 r/2 \}.$$
Recall that $\cH_r:=\{ h_\w:\x\mapsto \u^\top\sigma(\V^\top\x), \ R(\u,\V)\leq r \}$. First, we argue that $\cG_r \subset \cH_{r}$. Let $g_\w\in \cG_r$; we show that there exist $\u,\V$ such that $g_\w=f_{\u,\V}$ and $f_{\u,\V}\in\cH_r$. Define $\u:= \frac{2}{d_1}[1;-1;\cdots 1;-1]\in \R^{d_1}$, and let $\V=\w(\e_1-\e_2+\e_3-\e_4+\cdots+\e_{d_1-1}-\e_{d_1})^\top$, where $\e_i\in\R^{d_1}$ is the $i$-th standard basis vector. It's easy~to~see~that
\begin{align*}
f_{\u,\V}(\x)=\u^\top\sigma(\V^\top\x)&=\sum_{i=1}^{d_1}u_i\sigma(\v_i^\top\x) \\
&= \sum_{i=1}^{d_1} \frac2{d_1} (-1)^{i-1} \sigma(\v_i^\top\x) \\
&= \sum_{i=1}^{d_1/2} \frac2{d_1} (\sigma(\v_{2i-1}^\top\x)-\sigma(\v_{2i}^\top\x)) \\
&= \sum_{i=1}^{d_1/2}\frac{2}{d_1}(\sigma(\w^\top\x)-\sigma(-\w^\top\x))=\w^\top\x=g_\w.
\end{align*}
Furthermore, it holds for the explicit regularizer that
\begin{align*}
R(\u,\V)=\sum_{i=1}^{d_1}u_i^2\E\sigma(\v_i^\top\x)^2 &=\sum_{i=1}^{d_1/2}\frac{4}{d_1^2}\left(\E\sigma(\v_{2i-1}^\top\x)^2+\E\sigma(\v_{2i}^\top\x)^2\right) \\
&=\sum_{i=1}^{d_1/2}\frac{4}{d_1^2}\E[\sigma(\w^\top\x)^2+\sigma(-\w^\top\x)^2]\\
&=\frac{2}{d_1}\E(\w^\top\x)^2 \leq r
\end{align*}
Thus, we have that $\cG_r \subset \cH_{r}$, and the following inequalities follow.
\begin{align*}
\fR_{\cS}(\cH_r) &\geq \fR_{\cS}(\cG_{r}) = \E_{\epsilon_i}\sup_{g_\w\in\cG_{r}}\frac1n \sum_{i=1}^{n}\epsilon_i g_\w(\x_i)\\
&=\E_{\epsilon_i}\sup_{\E(\w^\top\x)^2 \leq d_1 r/2}\frac1n \sum_{i=1}^{n}\epsilon_i \w^\top \x_i\\
&=\E_{\epsilon_i}\sup_{\w^\top\C\w \leq d_1 r/2}\frac1n \langle \w, \sum_{i=1}^{n}\epsilon_i \x_i\rangle\\
&=\E_{\epsilon_i}\sup_{\|\C^{1/2}\w\|^2 \leq d_1 r/2}\frac1n \langle \C^{1/2}\w, \sum_{i=1}^{n}\epsilon_i \C^{-\dagger/2}\x_i\rangle\\
&=\frac{\sqrt{d_1 r}}{\sqrt{2}n} \E_{\epsilon_i} \| \sum_{i=1}^{n}\epsilon_i \C^{\dagger/2}\x_i\|\\
&\geq \frac{c\sqrt{d_1 r}}{\sqrt{2}n}   \sqrt{\sum_{i=1}^{n} \|\C^{\dagger/2}\x_i\|^2} = \frac{c\sqrt{d_1 r}\|\X\|_{\C^\dagger}}{\sqrt{2}n}
\end{align*}
where the last inequality follows from Khintchine-Kahane inequality in Lemma~\ref{lem:Khintchine-Kahane}.
\end{proof}

Next, we define some function classes that will be used frequently in the proofs.
\begin{definition}\label{def:classes}
For any closed subset $[a, b] \subset \R$, let  $\Pi_{[a,b]}(y):= \max\{ a, \min\{ b, y \} \}$. For $\z:=(\x,y)$ and $f:\cX\to\cY$, define the squared loss $\ell_2(f,\z):=(1-yf(\x))^2$. For a given value $\alpha > 0$, consider the following classes
\begin{align*}
\cW_\alpha &:= \{ \w=(\u,\V)\in \R^{d_1}\times \R^{d_0 \times d_1}, \ \sum_{i=1}^{d_1} |u_i|\sqrt{\E\sigma(\v_i^\top\x)^2} \leq \alpha \}\\
\cF_{\alpha} &:= \{f_\w : \x \mapsto \u^\top\sigma(\V^\top\x), \  \w\in \cW_\alpha  \}, \\
\cG_{\alpha} &:= \Pi_{[-1, 1]} \circ \cF_{\alpha} = \{g_\w = \Pi_{[-1, -1]} \circ f_\w, \  f_\w \in \cF_\alpha  \} \\
\cL_\alpha &:=\{ \ell_2 : (g_\w, \z) \mapsto (y-g_\w(\x))^2, \  g_\w \in \cG_\alpha \}
\end{align*}
\end{definition}

\begin{lemma}\label{lem:rademacher_twice}
Let $\cW_\alpha,\cF_\alpha,\cG_\alpha,\cL_\alpha$ be as defined in Definition~\ref{def:classes}. Then the following holds true:
\begin{enumerate}
\item $\fR_\cS(\cG_{\alpha}) \leq \fR_\cS(\cF_{\alpha})$.
\item If $\cY=\{-1,+1\}$ (binary classification), then it holds that $\fR_\cS(\cL_{\alpha}) \leq 2\fR_\cS(\cG_{\alpha})$.
\end{enumerate}
\end{lemma}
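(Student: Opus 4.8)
The plan is to prove the two claims of Lemma~\ref{lem:rademacher_twice} separately, both via standard contraction-style arguments for Rademacher complexity, exploiting the structure of the projection map $\Pi_{[-1,1]}$ and the squared loss over the clipped class $\cG_\alpha$.

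For the first claim, $\fR_\cS(\cG_\alpha)\leq\fR_\cS(\cF_\alpha)$, I would observe that $g_\w=\Pi_{[-1,1]}\circ f_\w$ is obtained by composing each $f_\w$ with the clipping function $\Pi_{[-1,1]}$. The key fact is that $\Pi_{[-1,1]}$ is $1$-Lipschitz (it is a projection onto a convex interval, hence non-expansive) and satisfies $\Pi_{[-1,1]}(0)=0$. By the Ledoux--Talagrand contraction principle for empirical Rademacher complexity (Talagrand's lemma, the same tool invoked in the proof of Theorem~\ref{thm:generalization_sensing_expected}), composing a function class with a fixed $1$-Lipschitz map that fixes the origin can only decrease the empirical Rademacher complexity. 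Applying this with the contraction $\phi=\Pi_{[-1,1]}$ to the class $\cF_\alpha$ immediately yields $\fR_\cS(\cG_\alpha)=\fR_\cS(\Pi_{[-1,1]}\circ\cF_\alpha)\leq\fR_\cS(\cF_\alpha)$.

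For the second claim, under $\cY=\{-1,+1\}$, I would analyze $\ell_2(g_\w,\z)=(y-g_\w(\x))^2$ as a composition. The crucial structural point is that $g_\w(\x)\in[-1,1]$ by construction and $y\in\{-1,+1\}$, so the map $t\mapsto(y-t)^2$ restricted to $t\in[-1,1]$ has derivative $-2(y-t)$ of magnitude at most $4$; however, a tighter and cleaner route is to write $(y-t)^2=1-2yt+t^2$ and note that $y^2=1$. Since $y\in\{\pm1\}$ is a sign, I would instead use the contraction principle with the loss viewed as a function of $g_\w(\x)$: for fixed $y$, the map $t\mapsto(y-t)^2$ on $[-1,1]$ is $4$-Lipschitz, but to get the constant $2$ I would exploit that $(y-t)^2 = (1-yt)^2$ when $y\in\{\pm 1\}$ (since $y^2=1$ gives $(y-t)^2=y^2(1-t/y)^2=(1-yt)^2$ using $1/y=y$), and $s\mapsto(1-s)^2$ on $s=yt\in[-1,1]$ is $4$-Lipschitz while the factor $y$ is absorbable by symmetry of the Rademacher variables. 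The clean argument is: by Talagrand's lemma the loss class picks up the Lipschitz constant of $\phi_y(t)=(y-t)^2$ over $t\in[-1,1]$, and since the Rademacher variables are symmetric the dependence on the label $y$ through $\phi_y$ can be absorbed, leaving an effective Lipschitz constant of $2$ (not $4$) because the relevant increment is controlled on the clipped range.

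The main obstacle I anticipate is pinning down the constant $2$ rather than the naive $4$ in the second claim; this requires carefully using both that $g_\w$ is clipped to $[-1,1]$ and that $y\in\{\pm1\}$, and then correctly applying the contraction principle (possibly together with absorbing the sign $y$ into the Rademacher variables via their symmetry). The first claim is essentially immediate once the non-expansiveness of $\Pi_{[-1,1]}$ is noted; the second claim is where the bookkeeping of the Lipschitz constant over the restricted domain must be done with care.
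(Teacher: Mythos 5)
Your proof of the first claim is exactly the paper's: $\Pi_{[-1,1]}$ is a $1$-Lipschitz map fixing the origin, and Talagrand's contraction lemma gives $\fR_\cS(\cG_\alpha)\leq\fR_\cS(\cF_\alpha)$. Nothing to add there.

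For the second claim you also follow the same route as the paper: use $y_i^2=1$ to rewrite $(y_i-g_\w(\x_i))^2=(1-y_ig_\w(\x_i))^2$, apply contraction to $h(s)=(1-s)^2$, and absorb the signs $y_i$ into the Rademacher variables by symmetry. But your write-up stalls at precisely the step that carries the content of the claim. You correctly observe that $h(s)=(1-s)^2$ has derivative $-2(1-s)$ and hence Lipschitz constant $4$ on $[-1,1]$, and then assert that ``the relevant increment is controlled on the clipped range,'' leaving ``an effective Lipschitz constant of $2$.'' That sentence is not an argument: the variable $s=y\,g_\w(\x)$ genuinely ranges over all of $[-1,1]$ (take $y=+1$ and $g_\w(\x)$ near $-1$), so no restriction of the domain rescues the constant, and absorbing $y_i$ into $\zeta_i$ does not help either, since that substitution is performed after the contraction step and only removes the label, not the factor $\sup_{s\in[-1,1]}|h'(s)|=4$. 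For what it is worth, the paper's own proof makes the identical unsupported assertion that $h(z)=(1-z)^2$ is $2$-Lipschitz for $z\in[-1,1]$; as stated that is false, and the argument as written (yours and the paper's) honestly yields only $\fR_\cS(\cL_\alpha)\leq 4\,\fR_\cS(\cG_\alpha)$. So: same approach as the paper, but the one step you yourself flagged as the main obstacle is left unresolved, and it is exactly the step where either a real justification or a weakened constant is required.
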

\begin{proof}
Since $\Pi_{[-1, -1]}(\cdot)$ is 1-Lipschitz, by Talagrand's contraction lemma, we have that $\fR_\cS(\cG_{\alpha}) \leq \fR_\cS(\cF_{\alpha})$. The second claim follows from
\begin{align*}
\fR_\cS(\cL_\alpha)&=\E_\zeta\sup_{\w\in\cW}\frac1n\sum_{i=1}^{n}\zeta_i(y_i-g_\w(\x_i))^2\\
&=\E_\zeta\sup_{\w\in\cW}\frac1n\sum_{i=1}^{n}\zeta_i(1-y_i g_\w(\x_i))^2 \tag{$y_i \in \{ -1,+1 \}$} \\
&\leq 2\E_\zeta\sup_{\w\in\cW}\frac1n\sum_{i=1}^{n}\zeta_i y_i g_\w(\x_i)\\
&= 2\E_\zeta\sup_{\w\in\cW}\frac1n\sum_{i=1}^{n}\zeta_i  g_\w(\x_i) = 2\fR_\cS(\cG_\alpha)
\end{align*}
where the first inequality follows from Talagrand's contraction lemma due to the fact that $h(z)=(1-z)^2$ is 2-Lipschitz for $z\in[-1,1]$, and the penultimate holds true since for any fixed $(y_i)_{i=1}^{n}\in \{-1,+1\}^n$, the distribution of $(\zeta_1 y_1, \ldots, \zeta_ny_n)$ is the same as that of $(\zeta_1,\ldots,\zeta_n)$.
\end{proof}

\begin{proof}[Proof of Corollary~\ref{cor:gen_bound_beta_regression}]
We use the standard generalization bound in Theorem~\ref{thm:generalization_regression} for class $\cG_\alpha$:
\begin{align*}
L_\cD(g_\w) &\leq \hat{L}_\cS(g_\w) + 4M\fR_\cS(\cG_\alpha)+3M^2\sqrt\frac{\log(2/\delta)}{2n}\\
&\leq \hat{L}_\cS(g_\w) + 8\fR_\cS(\cF_\alpha)+12\sqrt\frac{\log(2/\delta)}{2n} \tag{Lemma~\ref{lem:rademacher_twice}} \\
&\leq \hat{L}_\cS(g_\w) + \frac{16\alpha\|\X\|_{\C^\dagger}}{\sqrt{\beta}n}+12\sqrt\frac{\log(2/\delta)}{2n} \tag{Theorem~\ref{thm:rademacher_dnn_new}}
\end{align*}
where second inequality follows because the maximum deviation parameter $M$ in Theorem~\ref{thm:generalization_regression} is bounded as $$M=\sup_{\w\in\cW}\sup_{(\x,y)\in\cX\times\cY}|y - g_\w(\x)|\leq \sup_{\w\in\cW}\sup_{(\x,y)\in\cX\times\cY}|y| + |g_\w(\x)| \leq 2.$$
\end{proof}

\begin{proof}[Proof of Corollary~\ref{cor:gen_bound_nobeta_regression}] Recall that the input is jointly distributed as $(\x,y)\sim \cD$.
For $\cX\subseteq \R_+^{d_0}$, let $\cX'=\cX \cup -\cX$ be the \emph{symmetrized input domain}. Let $\zeta$ be a Rademacher random variable. Denote the \emph{symmetrized input} by $\x' = \zeta\x$, and the joint distribution of $(\x', y)$ by $\cD'$. By construction, $\cD'$ is centrally symmetric w.r.t. $\x'$, i.e., it holds for all $(\x,y)\in\cX\times \cY$ that $\cD'(\x,y)=\cD'(-\x,y)=\frac12\cD(\x,y)$. As a result, population risk with respect to the original distribution $\cD$ can be bounded in terms of the population risk with respect to the \emph{symmetrized distribution} $\cD'$ as follows:
\begin{align}
L_\cD(f) &:= \E_\cD[\ell(f(\x),y)] \nonumber \\
&\leq \E_\cD[\ell(f(\x),y)+\ell(f(-\x),y)] \nonumber \\
&= 2\E_\cD[\frac12 \ell(f(\x),y)+\frac12\ell(f(-\x),y)] \nonumber \\
&= 2\E_\cD\E_\zeta [\ell(f(\zeta\x),y) \ | \ \x,y] \nonumber \\
&= 2\E_{\cD'}[\ell(f(\x'),y)] =  2L_{\cD'}(f)
\end{align}
Moreover, since $\cD'$ is centrally symmetric, Assumption~\ref{ass:symmetry} holds with $\beta = \frac12$. The proof of Corollary~\ref{cor:gen_bound_nobeta_regression} follows by doubling the right hand side of inequalities in Corollary~\ref{cor:gen_bound_beta_regression}, and substituting $\beta = \frac12$.
\end{proof}

\subsection{Classification}
Although in the main text we only focus on the task of regression with squared loss, it is not hard to extend the results to binary classification. In particular, the following two Corollaries bound the miss-classification error in terms of  the training error and the Rademacher complexity of the target class, with and without symmetrization.
\begin{corollary}\label{cor:gen_bound_beta_classification}
Consider a binary classification setting where $\cY = \{-1,+1\}$. For any $\w\in \cF_{\alpha}$, for any $\delta\in(0,1)$, the following generalization bound holds with probability at least $1-\delta$ over $\cS=\{(\x_i,y_i) \}_{i=1}^{n}\sim{\cD}^n$:
\begin{equation*}
\bP\{ y f_\w(\x) < 0 \} \leq \hat{L}_{\cS}(g_\w) + \frac{8\alpha\|\X\|_{\C^\dagger}}{\sqrt{\beta}n} + 4\sqrt{\frac{\log(1/\delta)}{2n}}
\end{equation*}
where $g_\w(\cdot) = \max\{ -1, \min\{1, f_\w(\cdot)\} \}$ projects the network output onto the range $[-1, 1]$.
\end{corollary}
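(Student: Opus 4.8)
The plan is to reduce the binary misclassification error to the squared-loss risk of the clipped predictor and then invoke the uniform-convergence machinery already developed for regression. Concretely, the starting observation is a surrogate inequality: since $y\in\{-1,+1\}$ and $g_\w(\x)=\Pi_{[-1,1]}(f_\w(\x))\in[-1,1]$ preserves the sign of $f_\w(\x)$, whenever $yf_\w(\x)<0$ we have $yg_\w(\x)\le 0$ and hence $(y-g_\w(\x))^2\ge 1$, so that $\mathbf{1}\{yf_\w(\x)<0\}\le (y-g_\w(\x))^2=\ell_2(g_\w,\z)$ pointwise. Taking expectations over $\cD$ gives $\bP\{yf_\w(\x)<0\}\le \E_\cD[(y-g_\w(\x))^2]=L_\cD(g_\w)$, the population squared-loss risk of the clipped network. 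This is the step that lets the empirical squared loss $\hat L_\cS(g_\w)$ appear on the right-hand side even though the left-hand side is a probability of error.

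Next I would apply the standard Rademacher generalization bound (Theorem~\ref{thm:generalization_classification}) to the loss class $\cL_\alpha$ of Definition~\ref{def:classes}, whose members take values in $[0,4]$ since $yg_\w(\x)\in[-1,1]$. Rescaling to land in $[0,1]$ and applying the bound yields $L_\cD(g_\w)\le \hat L_\cS(g_\w)+2\fR_\cS(\cL_\alpha)+4\sqrt{\log(1/\delta)/(2n)}$, where the deviation constant and the factor on the Rademacher term are read off after tracking the $[0,4]$ range. I would then chain the two parts of Lemma~\ref{lem:rademacher_twice}: first $\fR_\cS(\cL_\alpha)\le 2\fR_\cS(\cG_\alpha)$ (this is where $\cY=\{-1,+1\}$ is used, via the sign-symmetry of Rademacher variables that turns $\zeta_iy_i$ back into $\zeta_i$), and then $\fR_\cS(\cG_\alpha)\le \fR_\cS(\cF_\alpha)$ by Talagrand contraction through the $1$-Lipschitz clipping $\Pi_{[-1,1]}$. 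Finally I would substitute the complexity estimate $\fR_\cS(\cF_\alpha)\le 2\alpha\|\X\|_{\C^\dagger}/(n\sqrt\beta)$ from Theorem~\ref{thm:rademacher_dnn_new}, which collapses the chain to $2\fR_\cS(\cL_\alpha)\le 8\alpha\|\X\|_{\C^\dagger}/(\sqrt\beta\,n)$ and produces exactly the stated bound. The whole argument mirrors the proof of Corollary~\ref{cor:gen_bound_beta_regression}, with the regression generalization bound replaced by its classification counterpart and the surrogate and doubling steps inserted.

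I expect the main obstacle to be bookkeeping rather than conceptual: verifying the surrogate inequality at the boundary cases (e.g.\ $f_\w(\x)=0$ or clipping saturating at $\pm1$), and propagating the correct constants through the $[0,4]$ range of $\ell_2$ together with the factor-of-two doubling in Lemma~\ref{lem:rademacher_twice}. The only genuinely delicate point is ensuring that the step $\fR_\cS(\cL_\alpha)\le 2\fR_\cS(\cG_\alpha)$ is legitimate, which hinges on both the $2$-Lipschitzness of $z\mapsto(1-z)^2$ on $[-1,1]$ and the binary label assumption; neither extends verbatim to the regression setting, so this is the place where the classification and regression proofs diverge. All remaining pieces (surrogate, contraction, and the plug-in of Theorem~\ref{thm:rademacher_dnn_new}) are routine once this lemma is in hand.
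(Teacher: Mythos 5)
Your proposal is correct and follows essentially the same route as the paper's proof: reduce the misclassification probability to the squared loss of the clipped predictor via the pointwise surrogate inequality, apply the standard Rademacher bound to the (rescaled) loss class, chain the two contraction steps of Lemma~\ref{lem:rademacher_twice}, and plug in Theorem~\ref{thm:rademacher_dnn_new}, with the constants propagating exactly as you describe.
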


\begin{proof}[Proof of Corollary~\ref{cor:gen_bound_beta_classification}]
We use the standard generalization bound in Theorem~\ref{thm:generalization_classification}. Recall that $g_w = \Pi_{[-1, 1]}(f_\w)$, where $\Pi_{[-1, 1]}(y) = \max\{ -1, \max\{1, y\} \}$ projects onto the range $[-1, 1]$. It is easy to bound the classification error of $f_\w$ in terms of the $\ell_2$-loss of $g_\w$:
\begin{equation}
\bP\{ \operatorname{sgn}(f_\w(\x)) \neq y \} = \bP\{ yf_\w(\x) < 0 \} = \E[\1_{yf_\w(\x) < 0}] = \E[\1_{yg_\w(\x) < 0}] \leq \E(1-yg_\w(\x))^2 = L_\cD(g_\w).
\end{equation}
We use Theorem~\ref{thm:generalization_classification} for class $\frac14\cL_\alpha$ to get the generalization bound as follows:
\begin{align*}
\frac14 L_\cD(g_\w) &\leq \frac14 \hat{L}_{\cS}(g_\w) +2\fR_\cS(\frac14 \cL_\alpha)+ \sqrt\frac{\log(1/\delta)}{2n}\\
\implies L_\cD(g_\w) &\leq \hat{L}_{\cS}(g_\w) + 4\fR_\cS( \cF_\alpha)+4\sqrt\frac{\log(1/\delta)}{2n} \tag{by Lemma~\ref{lem:rademacher_twice}}\\
\implies L_\cD(g_\w) &\leq \hat{L}_{\cS}(g_\w) + \frac{8\alpha\|\X\|_{\C^\dagger}}{\sqrt{\beta}n}+4\sqrt\frac{\log(1/\delta)}{2n} \tag{by Theorem~\ref{thm:rademacher_dnn_new}}\\
\end{align*}
\end{proof}

\begin{corollary}\label{cor:gen_bound_nobeta_classification}
Consider a binary classification setting where $\cY = \{-1,+1\}$. For any $\w\in \cF'_{\alpha}$, for any $\delta\in(0,1)$, the following generalization bound holds with probability at least $1-\delta$ over a sample of size $n$ and the randomization in symmetrization \begin{equation*}
\bP\{ y g_\w(\x) < 0 \} \leq 2\hat{L}_{\cS'}(g_\w) + \frac{23\alpha'\|\X\|_{\C^\dagger}}{n} + 8\sqrt{\frac{\log(1/\delta)}{2n}}
\end{equation*}
\end{corollary}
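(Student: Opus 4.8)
The plan is to obtain this $\beta$-independent classification bound from Corollary~\ref{cor:gen_bound_beta_classification} in exactly the way Corollary~\ref{cor:gen_bound_nobeta_regression} is obtained from Corollary~\ref{cor:gen_bound_beta_regression}: first reduce the misclassification probability to a squared loss of the clipped predictor, then apply the randomized symmetrization, which forces the symmetrized distribution to be centrally symmetric and hence $\beta$-retentive with $\beta=\frac12$, removing the dependence on $\beta$ at the cost of a factor of $2$.

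First I would reduce the quantity of interest to a squared loss. Since clipping preserves sign, $\operatorname{sgn}(g_\w(\x))=\operatorname{sgn}(f_\w(\x))$, so exactly as in the proof of Corollary~\ref{cor:gen_bound_beta_classification} the $0/1$ indicator is dominated by the squared loss of the clipped predictor, giving $\bP\{yg_\w(\x)<0\}\le L_\cD(g_\w)$. Next I would invoke the symmetrization inequality established in the proof of Corollary~\ref{cor:gen_bound_nobeta_regression}: because $(1-yg_\w(\cdot))^2\ge 0$, we have $L_\cD(g_\w)=\E_\cD(1-yg_\w(\x))^2\le \E_\cD[(1-yg_\w(\x))^2+(1-yg_\w(-\x))^2]=2L_{\cD'}(g_\w)$, where $\cD'$ is the distribution of $(\zeta\x,y)$. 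Thus it remains only to control $L_{\cD'}(g_\w)$.

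For that final step I would run the generalization argument of Corollary~\ref{cor:gen_bound_beta_classification} entirely over the symmetrized sample $\cS'$ and distribution $\cD'$. Applying Theorem~\ref{thm:generalization_classification} to $\frac14\cL_\alpha$ built over $\cD'$, together with Lemma~\ref{lem:rademacher_twice}, yields $L_{\cD'}(g_\w)\le \hat{L}_{\cS'}(g_\w)+4\fR_{\cS'}(\cF'_{\alpha})+4\sqrt{\frac{\log(1/\delta)}{2n}}$, where the norm constraint is now with respect to the symmetrized activations $a_i'$, i.e.\ the class $\cF'_{\alpha}$ assumed in the corollary. Since $\cD'$ is centrally symmetric, Assumption~\ref{ass:symmetry} holds with $\beta=\frac12$, so Theorem~\ref{thm:rademacher_dnn_new} gives $\fR_{\cS'}(\cF'_{\alpha})\le \frac{2\alpha\|\X\|_{\C^\dagger}}{n\sqrt{1/2}}=\frac{2\sqrt2\,\alpha\|\X\|_{\C^\dagger}}{n}$. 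Chaining $\bP\{yg_\w<0\}\le L_\cD(g_\w)\le 2L_{\cD'}(g_\w)$ and doubling each term produces precisely the three stated quantities: $2\hat{L}_{\cS'}(g_\w)$, the constant $4\cdot 2\cdot 2\sqrt2=16\sqrt2\le 23$ in front of $\alpha\|\X\|_{\C^\dagger}/n$, and $8\sqrt{\frac{\log(1/\delta)}{2n}}$.

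The step requiring the most care is verifying that the data-dependent quantity $\|\X\|_{\C^\dagger}$ appearing in Theorem~\ref{thm:rademacher_dnn_new} is genuinely unchanged by symmetrization, so that it may be written without primes. This holds because the second moment is invariant, $\C'=\E[\zeta^2\x\x^\top]=\C$, and each per-sample term satisfies $(\zeta_i\x_i)^\top\C^\dagger(\zeta_i\x_i)=\x_i^\top\C^\dagger\x_i$ since $\zeta_i^2=1$. The only other bookkeeping is keeping the hypothesis constraint consistently stated over the symmetrized activations (that is, $\w\in\cF'_{\alpha}$), which is exactly the hypothesis of the corollary; everything beyond this is the routine constant-tracking already carried out in the two corollaries above.
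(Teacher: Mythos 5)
Your proposal is correct and follows essentially the same route as the paper: the paper's proof likewise reduces the misclassification probability to the squared loss of the clipped predictor via Corollary~\ref{cor:gen_bound_beta_classification}, uses $L_\cD(f)\leq 2L_{\cD'}(f)$ from the symmetrization argument of Corollary~\ref{cor:gen_bound_nobeta_regression}, and doubles the bound with $\beta=\tfrac12$, yielding the same constant $16\sqrt{2}\leq 23$. Your additional check that $\|\X\|_{\C^\dagger}$ is invariant under symmetrization is a detail the paper leaves implicit but is handled correctly.
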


\begin{proof}[Proof of Corollary~\ref{cor:gen_bound_nobeta_classification}]
Akin to proof of Corollary~\ref{cor:gen_bound_nobeta_regression}, we have that $L_\cD(f) \leq 2L_{\cD'}(f)$, and the marginal distribution is $\frac12$-retentive. Proof of Corollary~\ref{cor:gen_bound_nobeta_classification} follows by doubling the right hand side of inequalities in Corollary~\ref{cor:gen_bound_beta_classification}, and substituting $\beta=\frac12$.
\end{proof}

\section{Additional Experiments}
In this section, we include additional plots which was not reported in the main paper due to the space limitations.

\subsection{Matrix Completion}
Figure~\ref{fig:sensing} in the main paper shows comparisons between plain SGD and the dropout algorithm on the MovieLens dataset for a factorization size of $d_1=70$. The observation that we make with regard to those plots is not at all limited to the specific choice of the factorization size. In Figure~\ref{fig:add_mlens} here, we report similar experiments with factorization sizes $d_1 \in \{ 30, 110, 150, 190 \}$. It can be seen that the overall behaviour of plain SGD and dropout are very similar in all experiments. In particular, plain SGD always achieves the best training error but it has the largest generalization gap. Furthermore, increasing the dropout rate increases the training error but results in a tighter generalization gap. 

It can be seen that an appropriate choice of the dropout rate \emph{always} perform better than the plain SGD in terms of the test error. For instance, a dropout rate of $p=0.2$ seems to always outperform plain SGD. Moreover, as the factorization size increases, the function class becomes more complex, and a larger value of the dropout rate is more helpful. For example, when $d_1=30$, the dropout with rates $p=0.3, 0.4$ fail to achieve a good test performance, where as for larger factorization sizes ($d_1 \in \{ 110,150,190 \}$), they consistently outperform plain SGD as well as other dropout rates.
\begin{figure*}[!t]
\centering
\begin{tabular}{cccc}
\hspace*{-15pt} 
$d_1 = 30$ & \hspace*{-25pt} 
$d_1 = 110$ & \hspace*{-25pt} 
$d_1 = 150$ & \hspace*{-25pt} 
$d_1 = 190$\\
\hspace*{-15pt} 
\includegraphics[width=0.28\textwidth]{./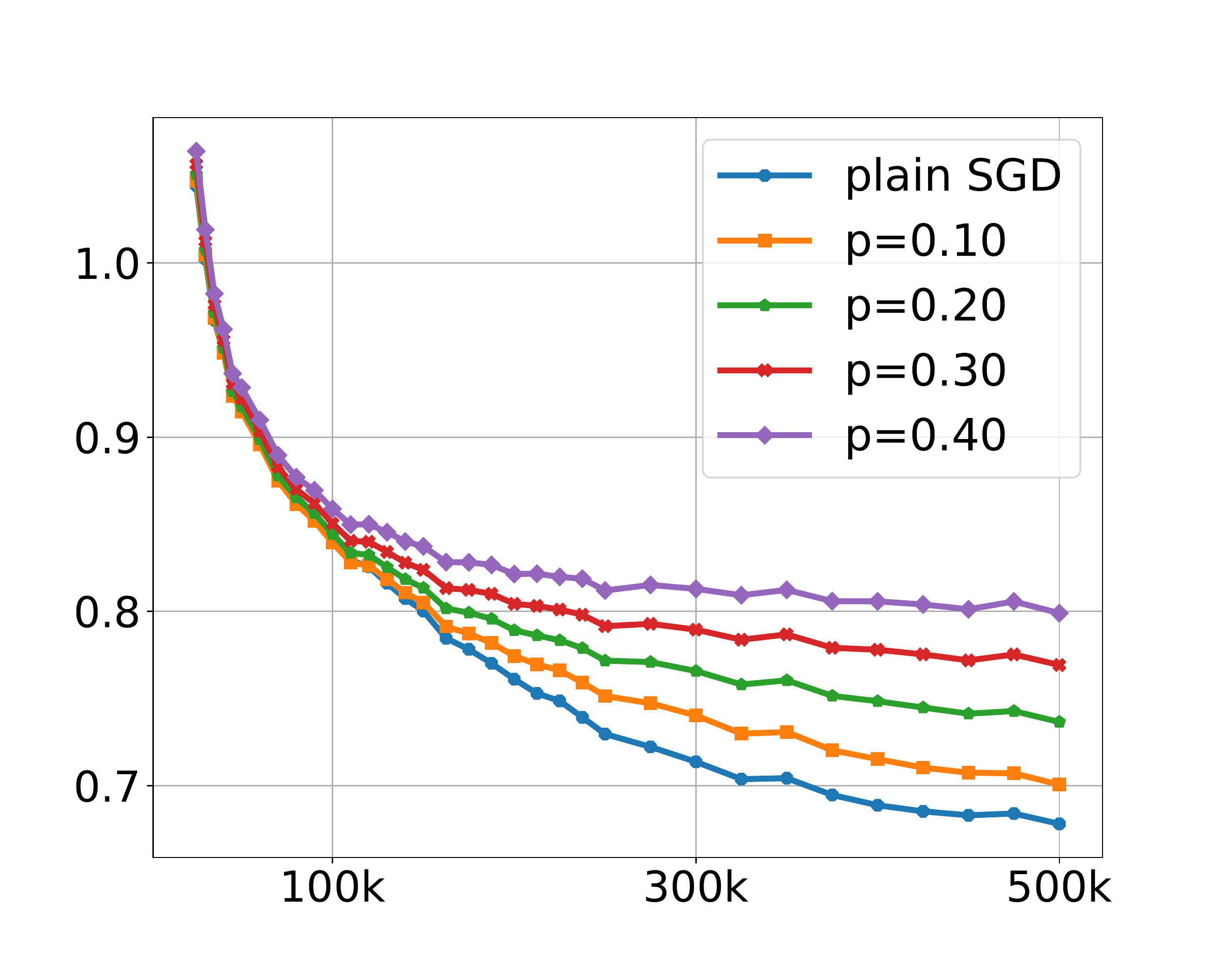}
&
\hspace*{-22pt} 
\includegraphics[width=0.28\textwidth]{./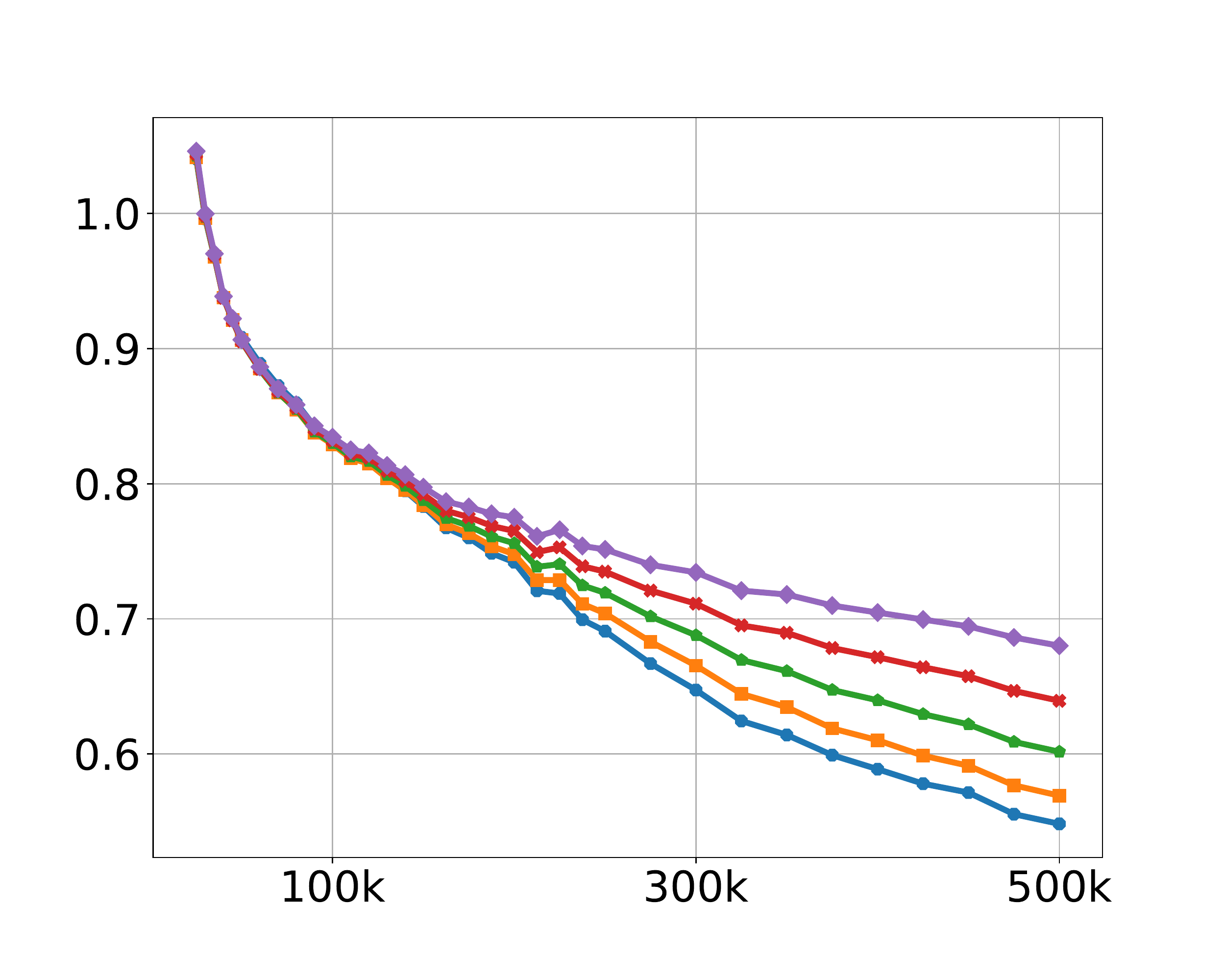}
&
\hspace*{-22pt} 
\includegraphics[width=0.28\textwidth]{./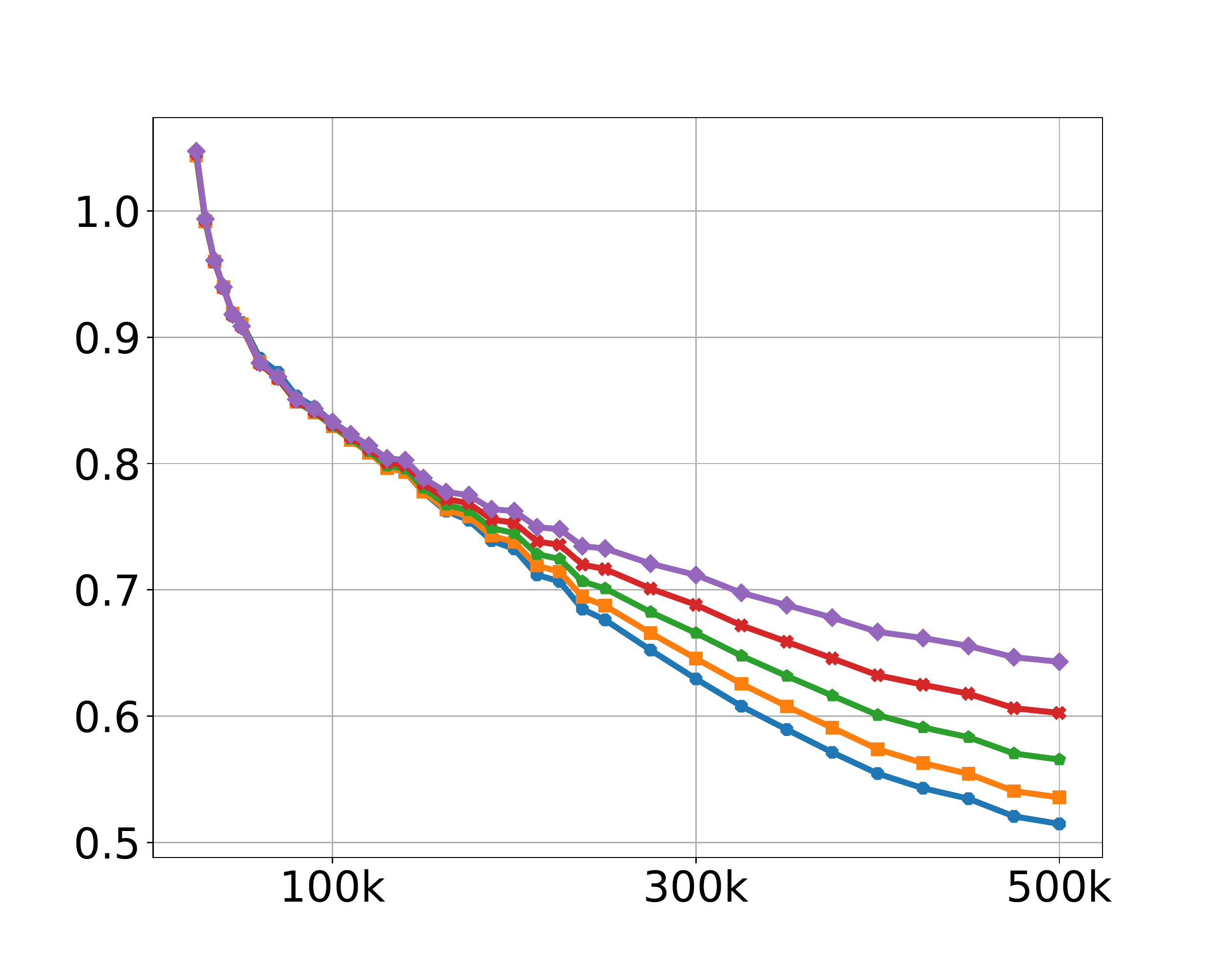}
&
\hspace*{-23pt} 
\includegraphics[width=0.28\textwidth]{./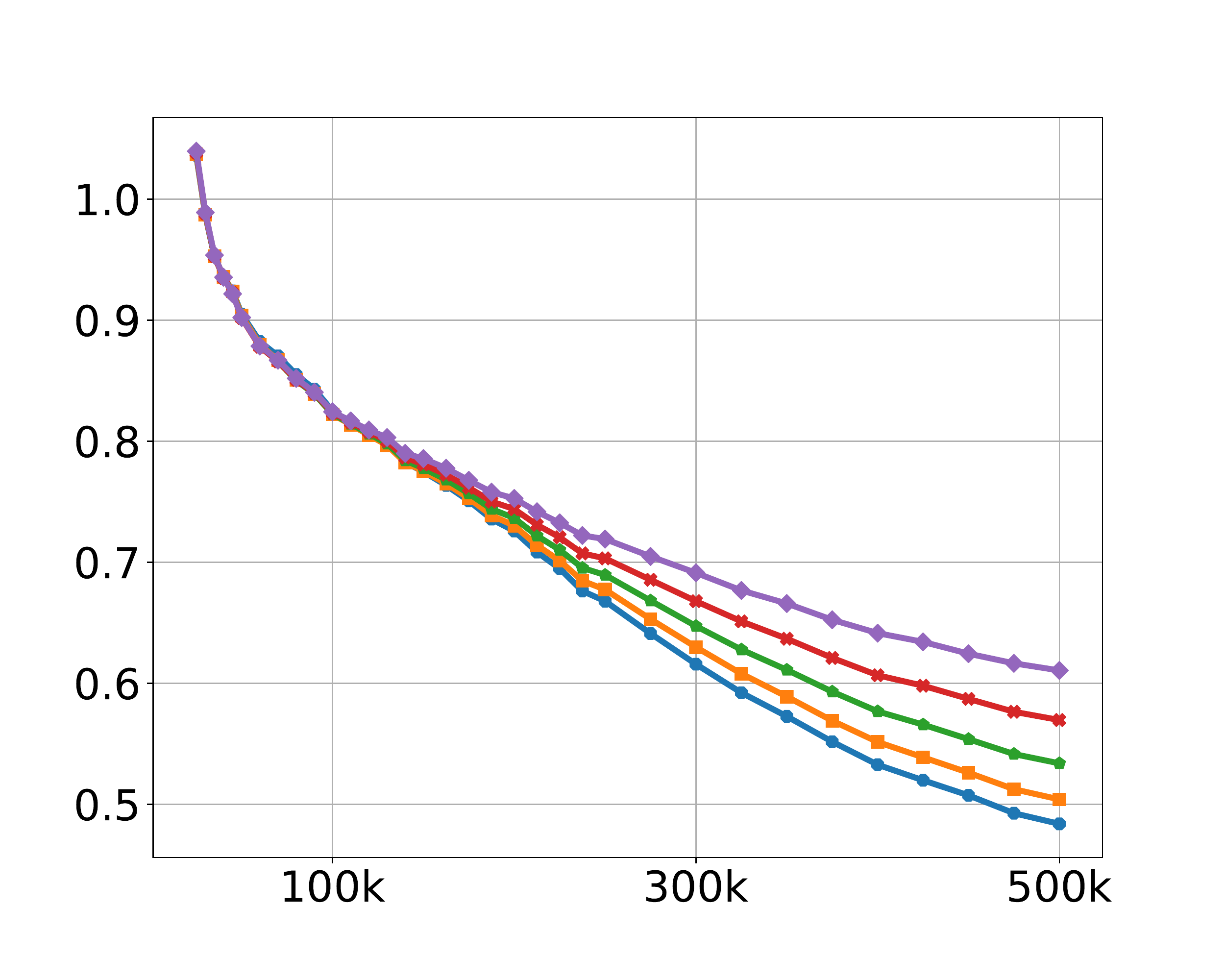}\\
\hspace*{-15pt} 
\includegraphics[width=0.28\textwidth]{./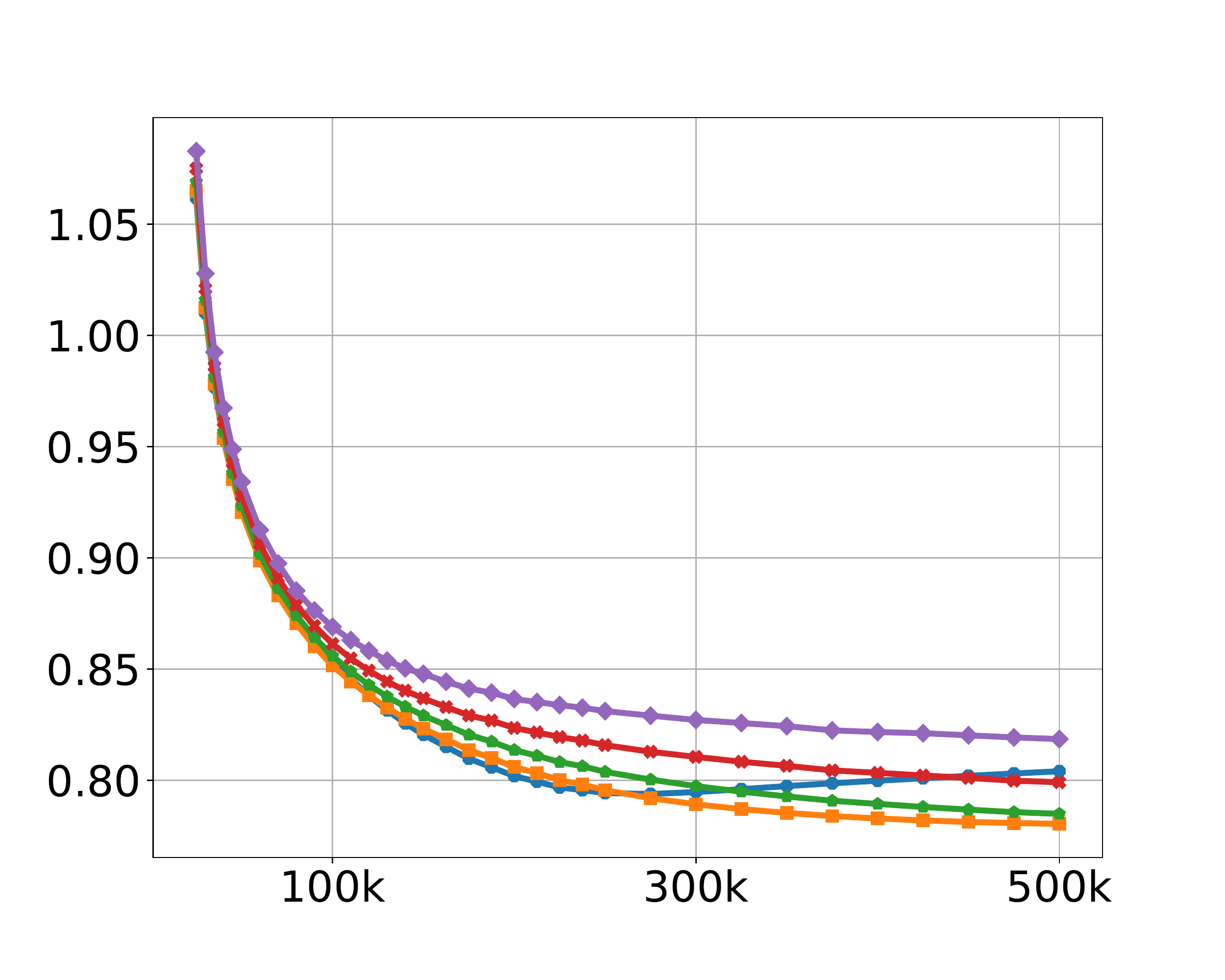}
&
\hspace*{-22pt} 
\includegraphics[width=0.28\textwidth]{./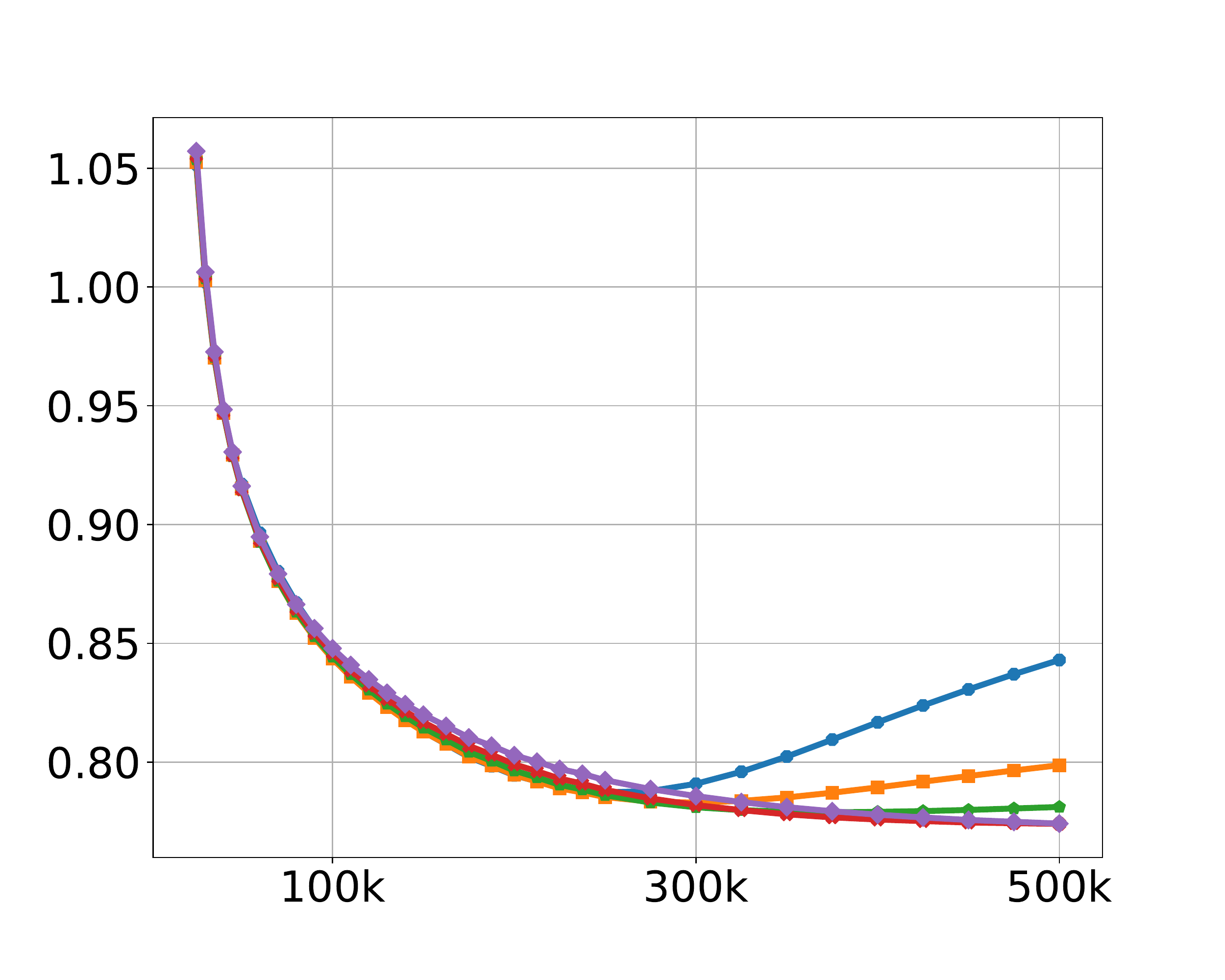}
&
\hspace*{-22pt} 
\includegraphics[width=0.28\textwidth]{./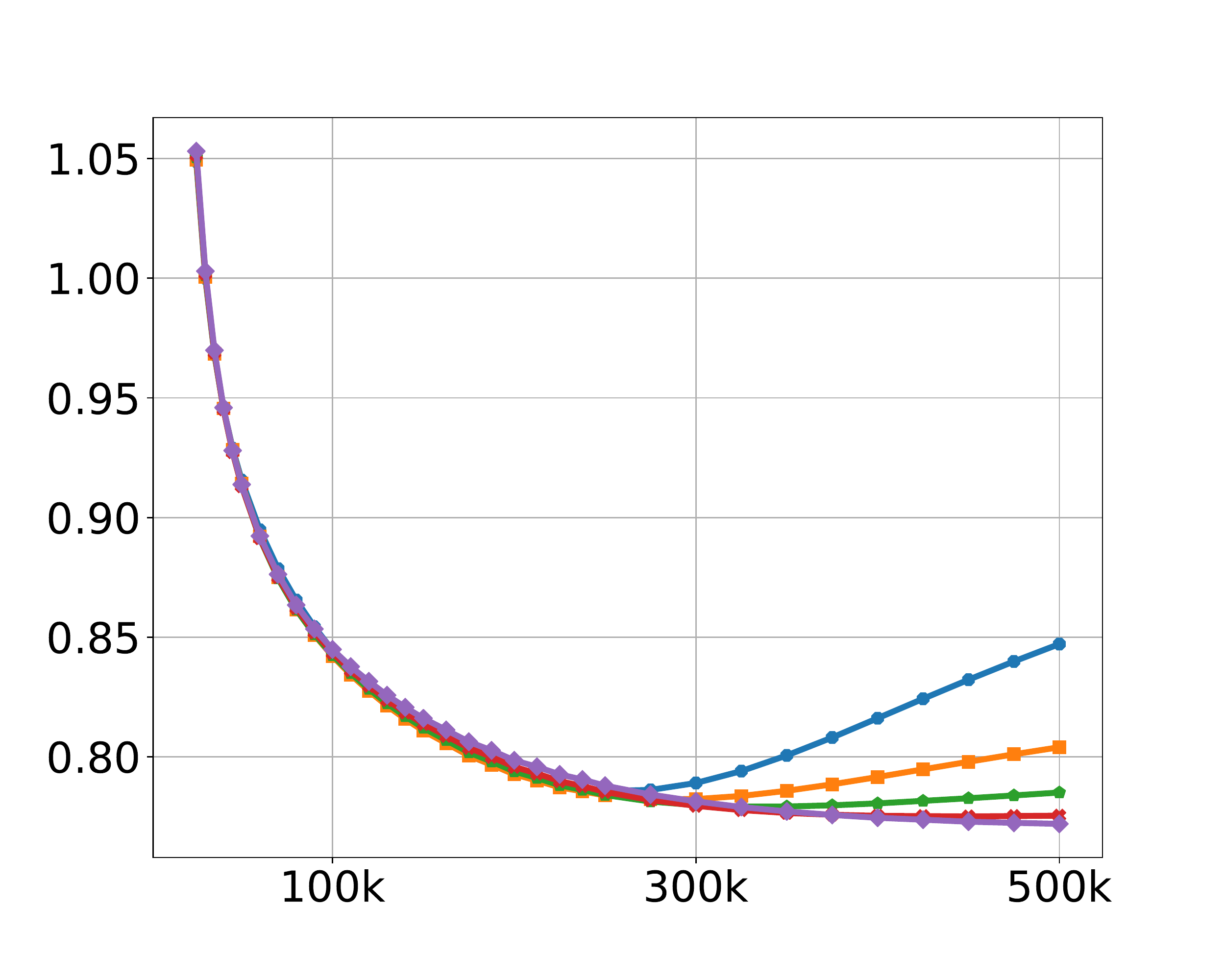}
&
\hspace*{-23pt} 
\includegraphics[width=0.28\textwidth]{./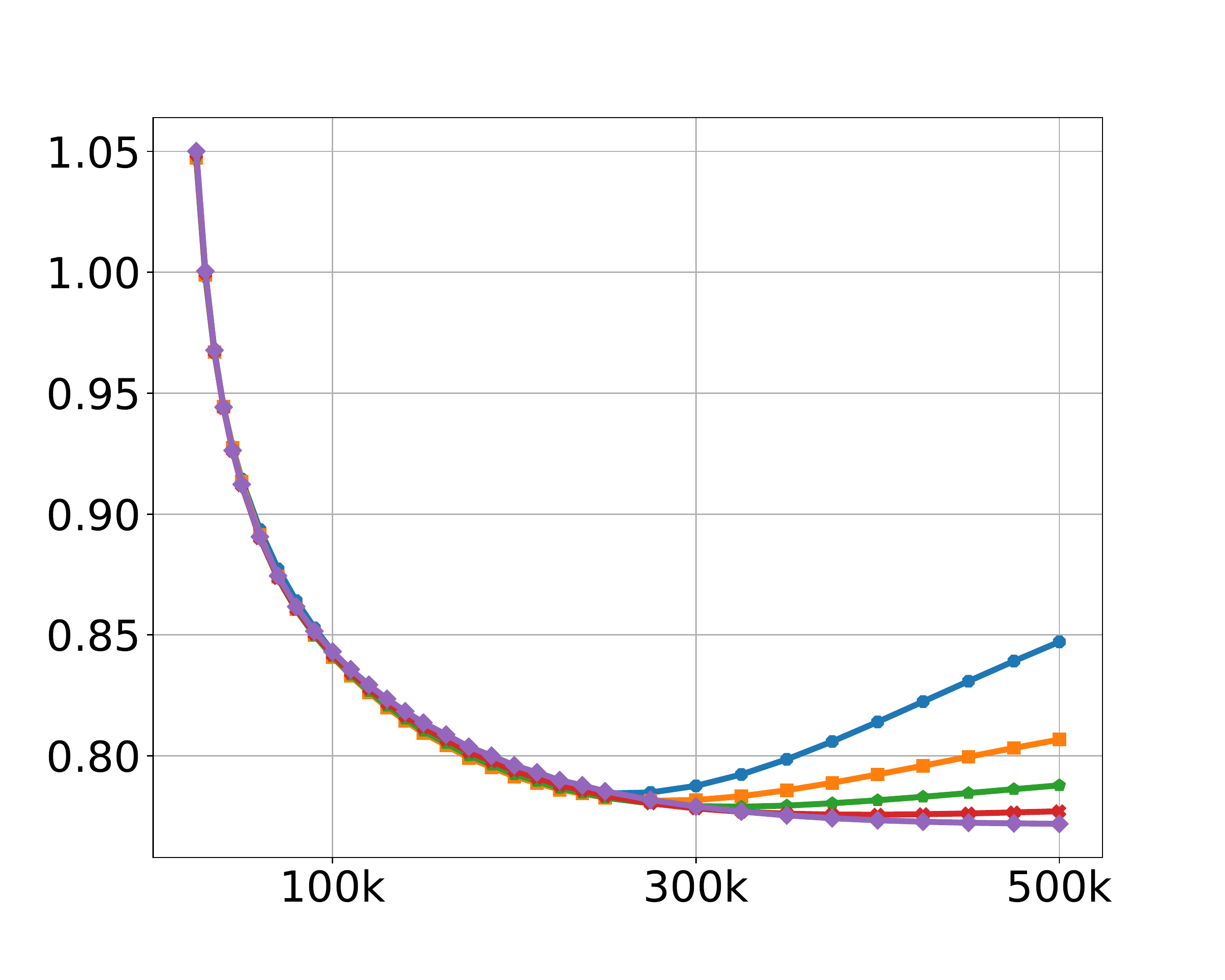}\\
\hspace*{-15pt} 
\includegraphics[width=0.28\textwidth]{./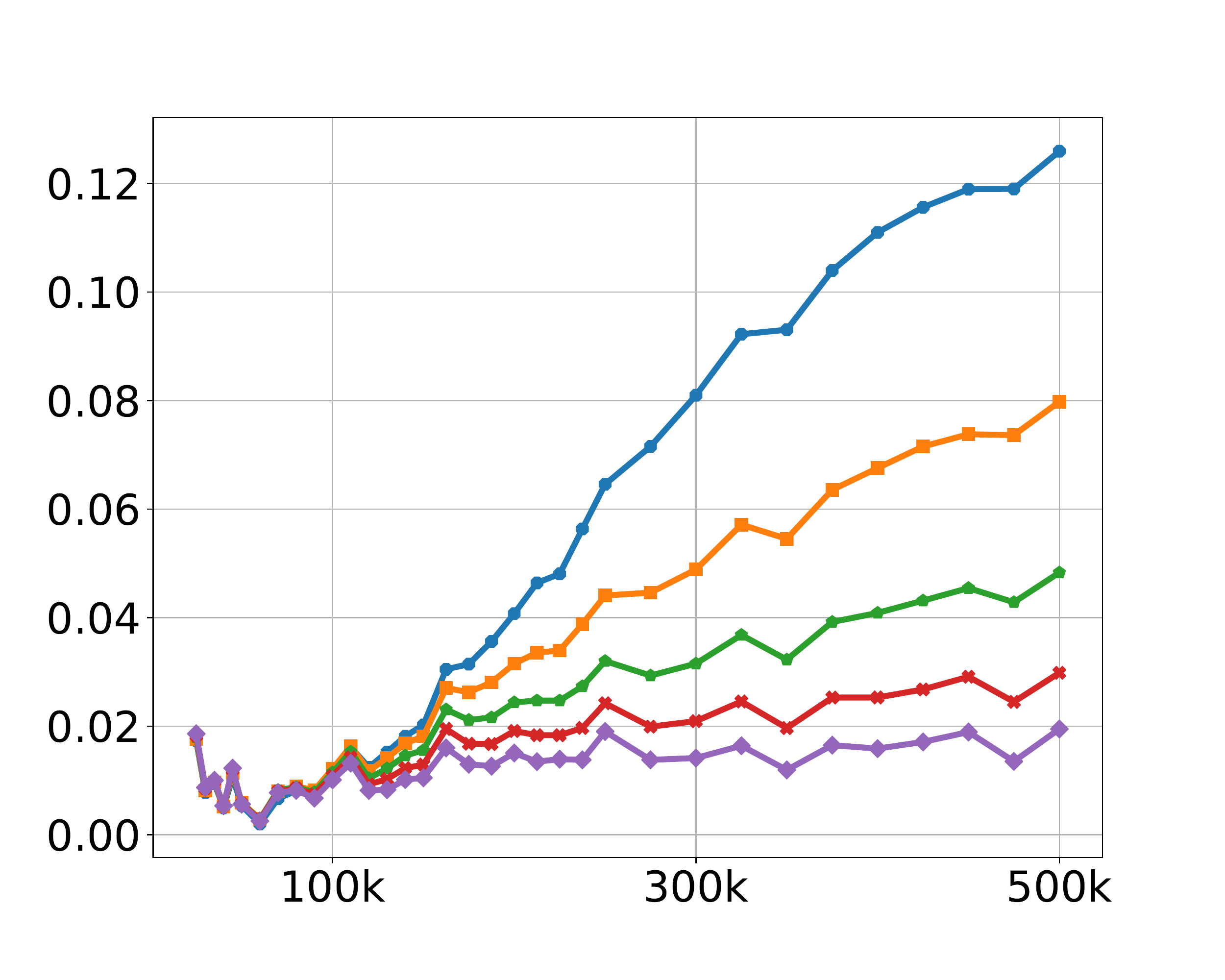}
&
\hspace*{-22pt} 
\includegraphics[width=0.28\textwidth]{./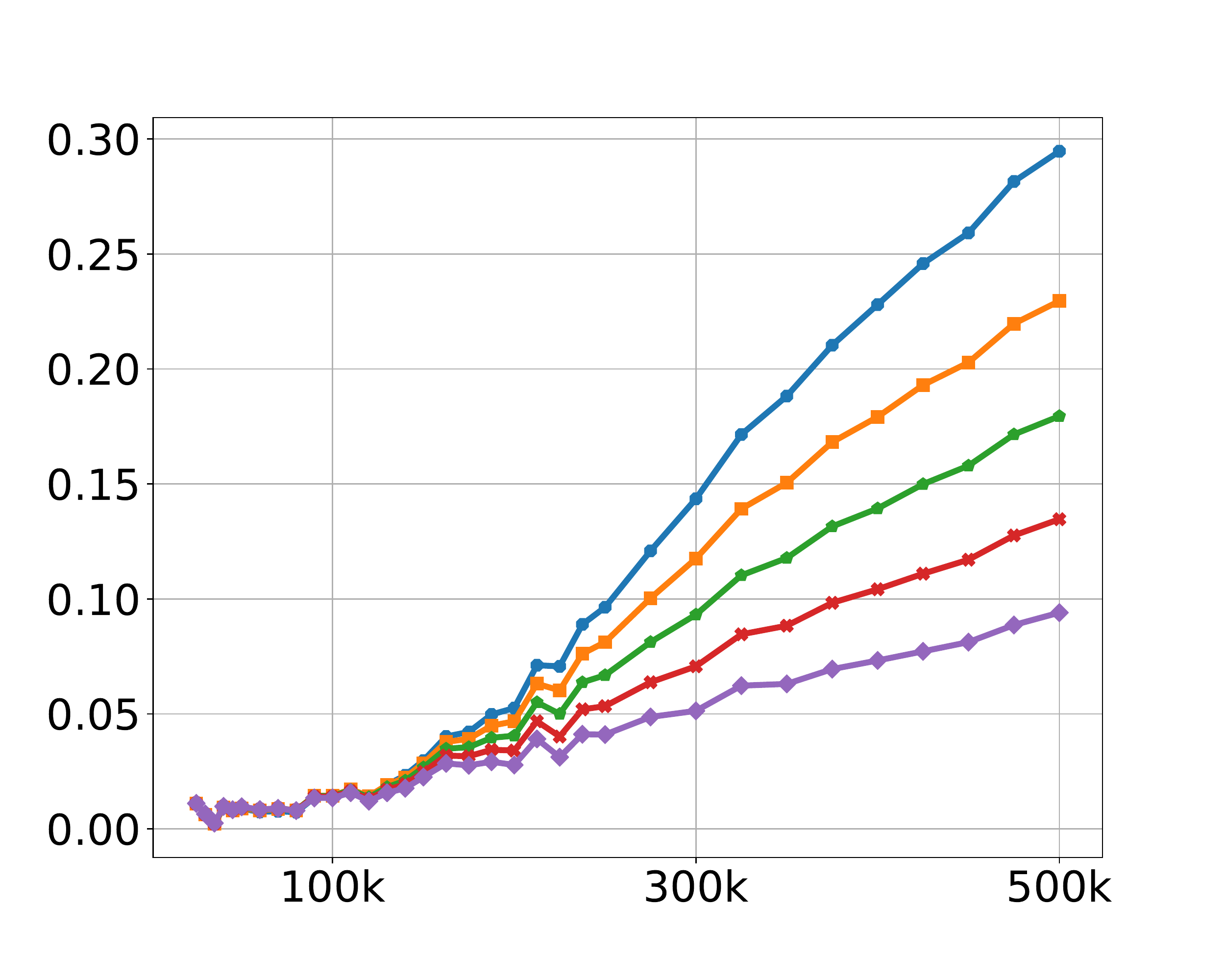}
&
\hspace*{-22pt} 
\includegraphics[width=0.28\textwidth]{./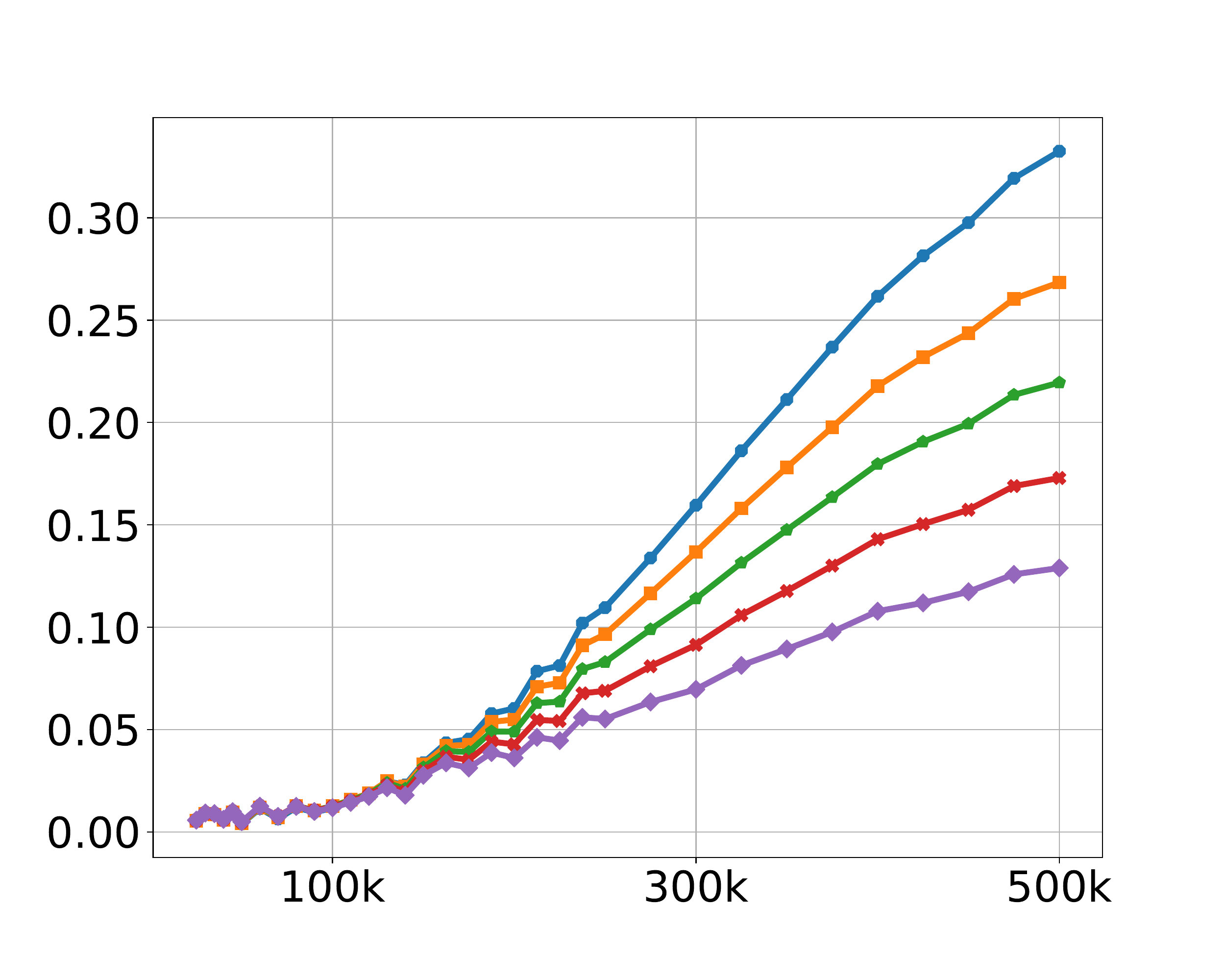}
&
\hspace*{-23pt} 
\includegraphics[width=0.28\textwidth]{./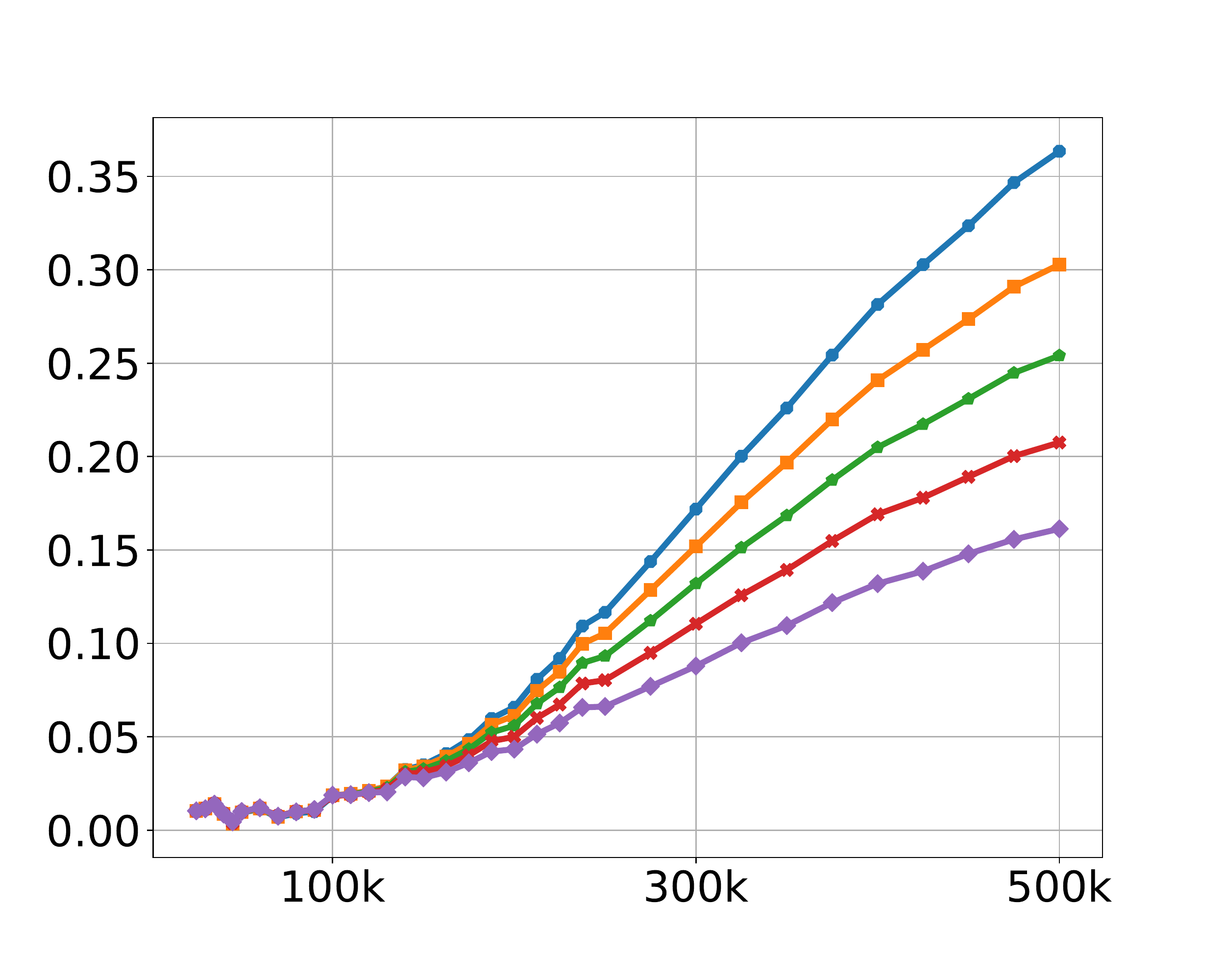}
\vspace*{-15pt} 
\end{tabular}
\caption{\small{MovieLens dataset: the training error ({\bf top}), the test error ({\bf middle}), and the generalization gap for plain SGD as well as dropout with $p \in \{ 0.25, 0.50, 0.75 \}$ as a function of the number of iterates, for different factorization sizes $d_1 = 30$ (first column), $d_1 = 110$ (second column), $d_1 = 150$ (third column), and $d_1 = 190$ (forth column).}}
\label{fig:add_mlens}
\end{figure*}

\subsection{Shallow Neural Networks}
\begin{figure*}
\centering
\begin{tabular}{ccc}
\hspace*{-20pt} 
\includegraphics[width=0.365\textwidth]{./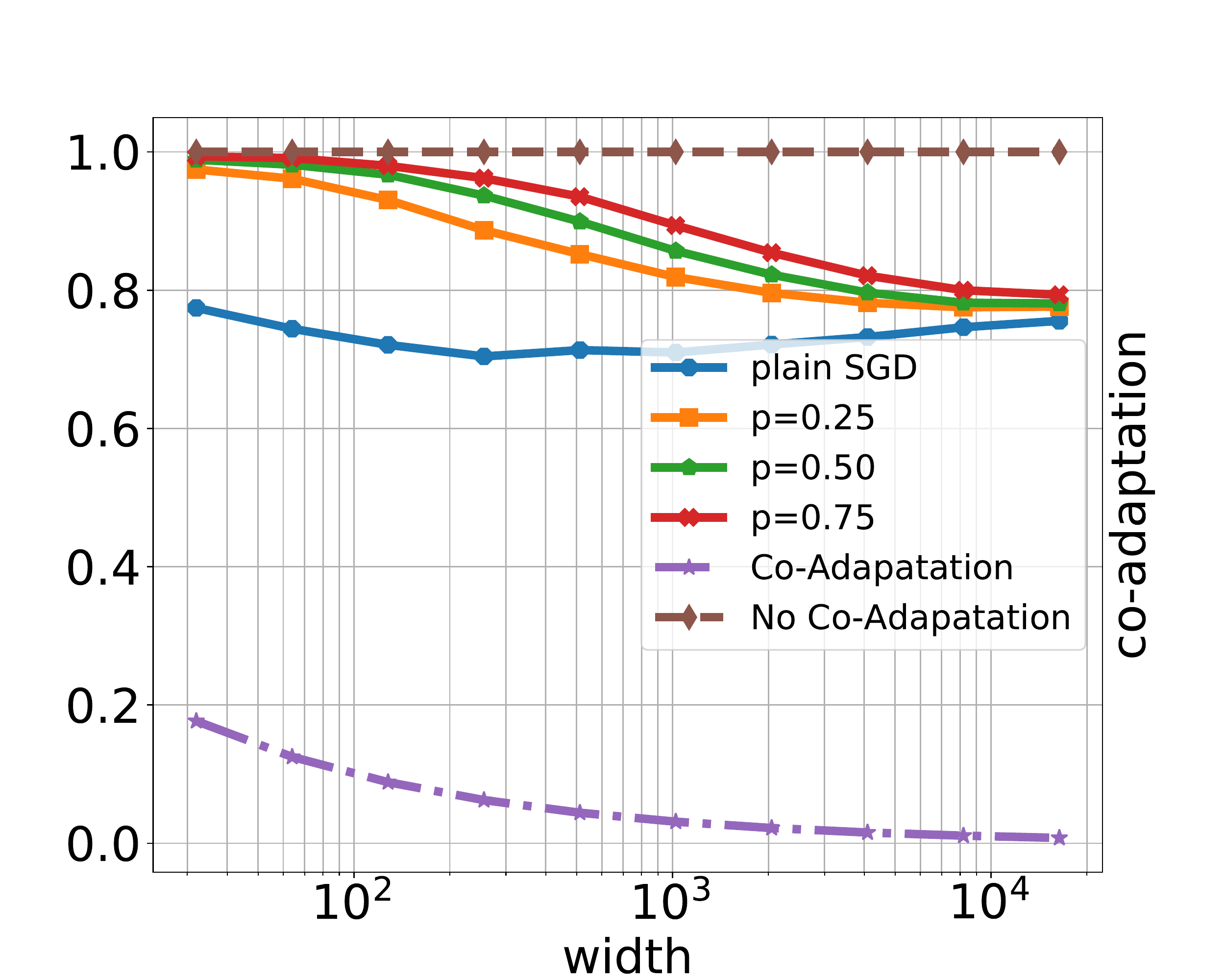}
&
\hspace*{-22pt} 
\includegraphics[width=0.365\textwidth]{./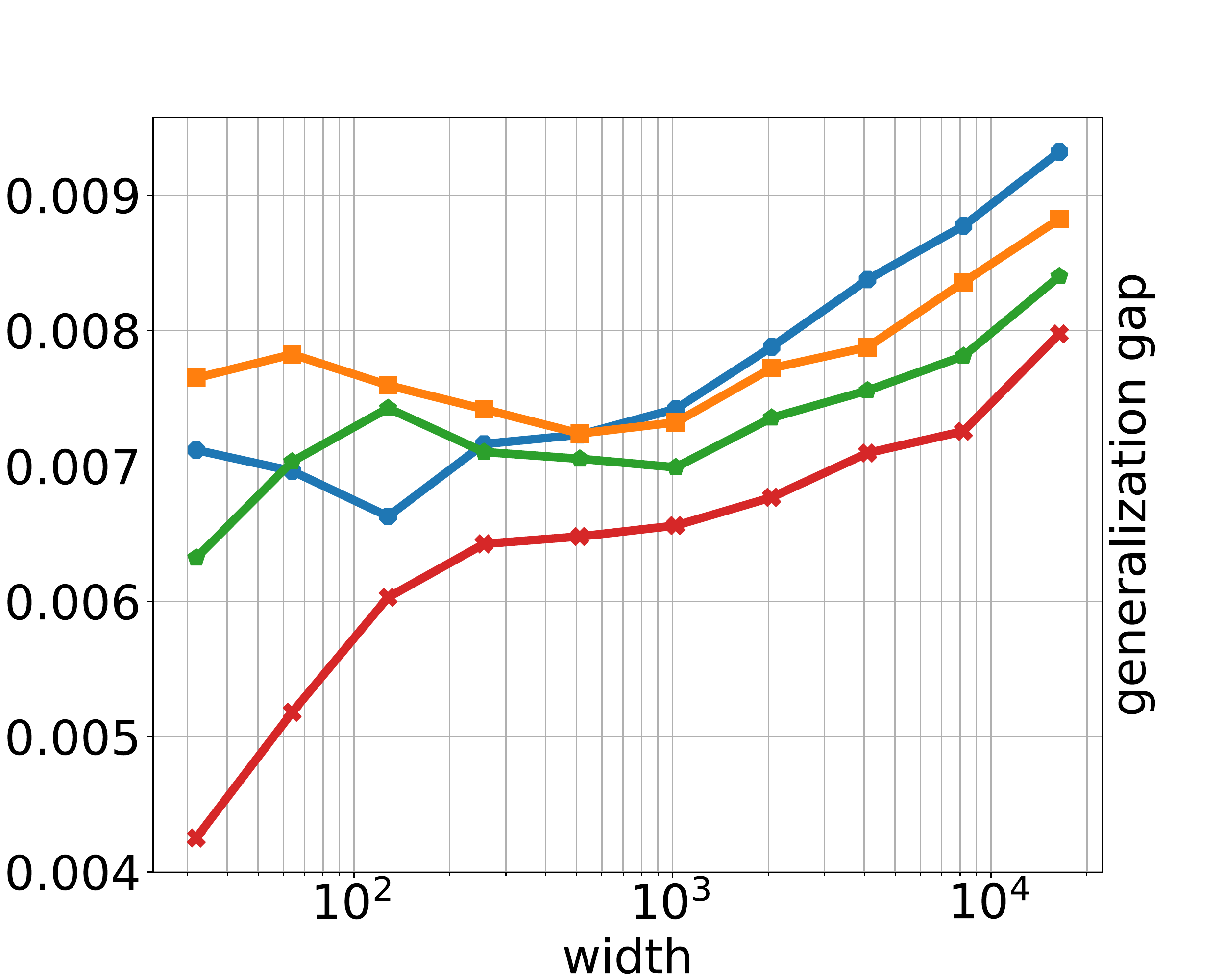}
&
\hspace*{-22pt} 
\includegraphics[width=0.365\textwidth]{./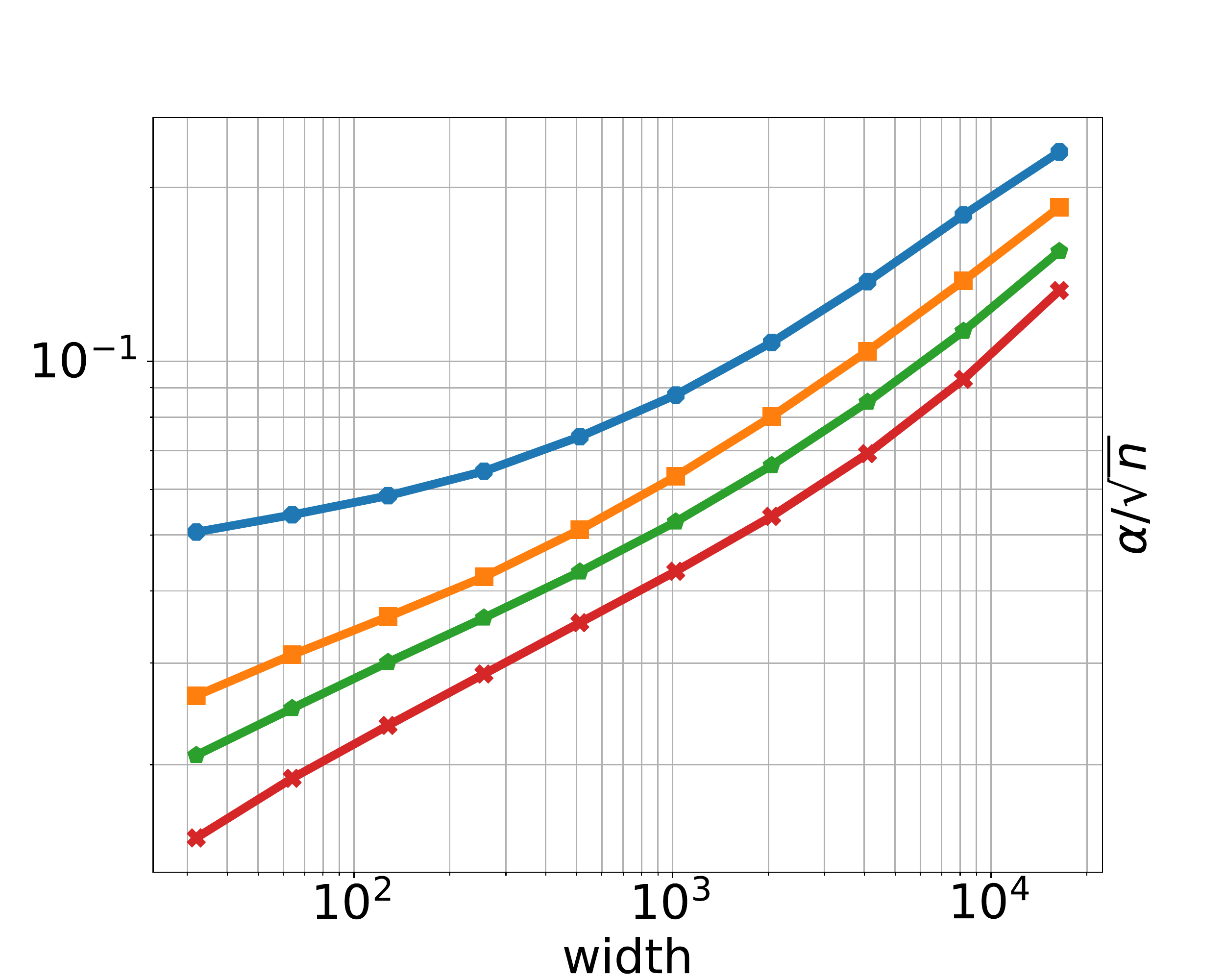}
\vspace*{-10pt} \\
\hspace*{-20pt} 
\includegraphics[width=0.365\textwidth]{./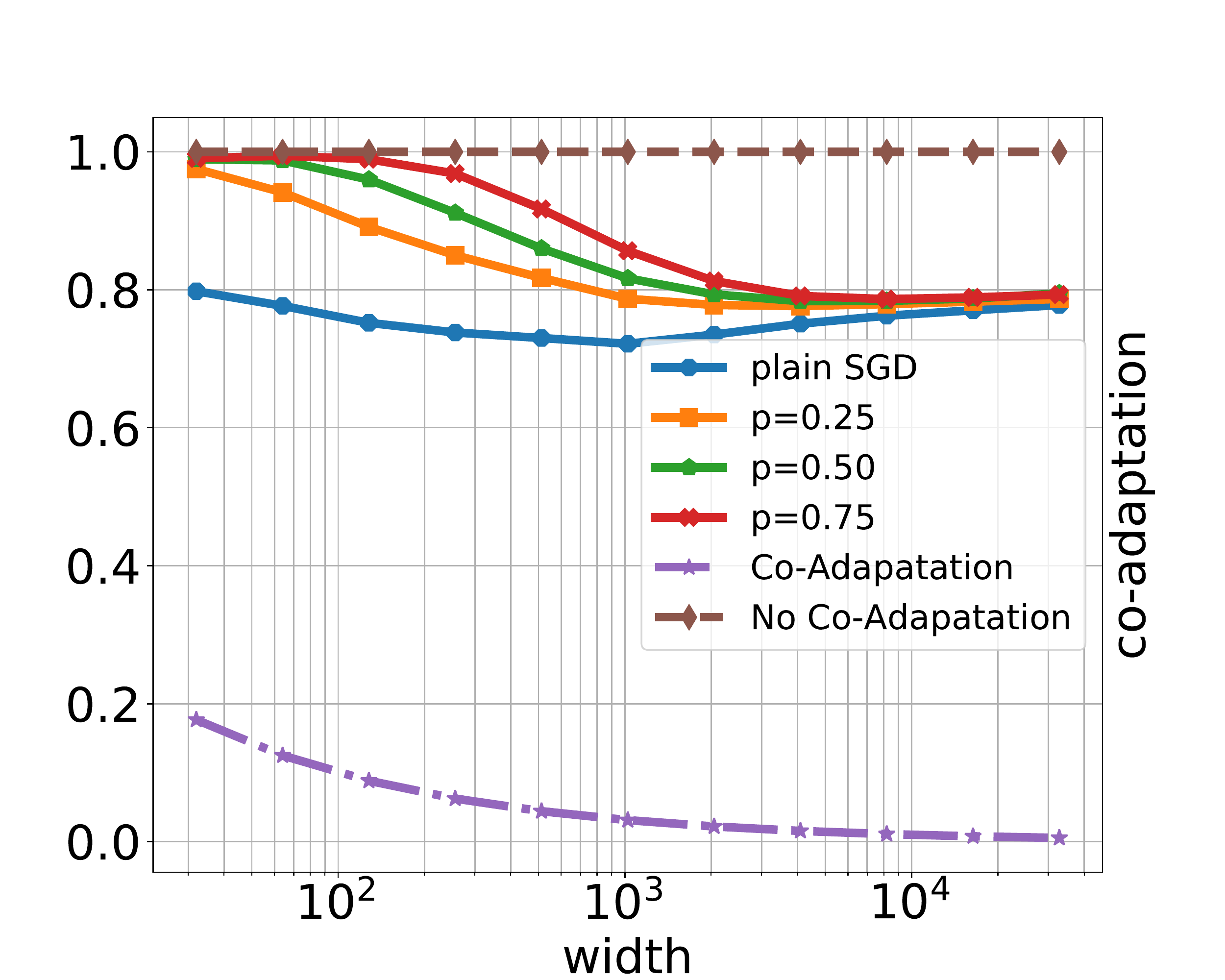}
&
\hspace*{-22pt} 
\includegraphics[width=0.365\textwidth]{./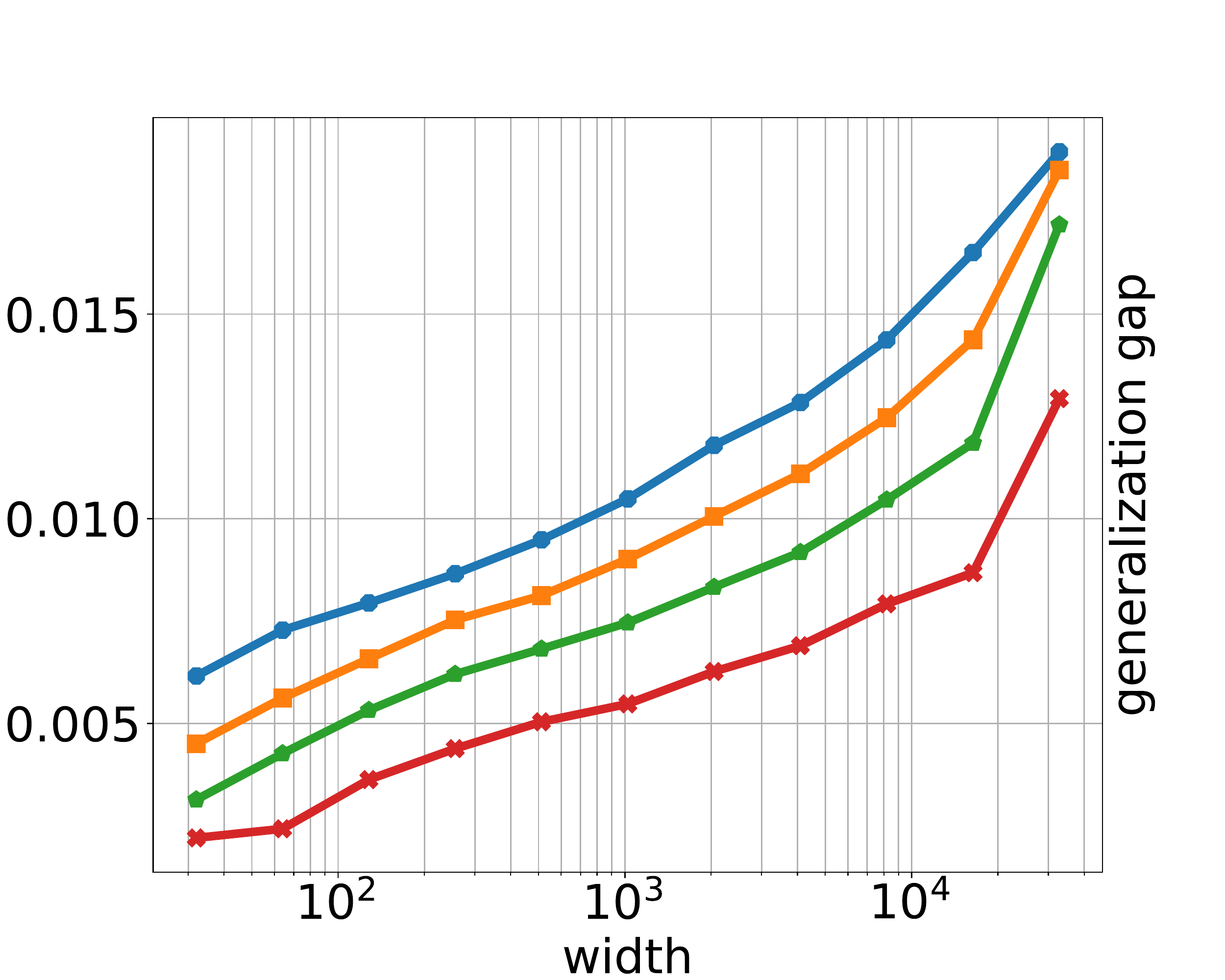}
&
\hspace*{-22pt} 
\includegraphics[width=0.365\textwidth]{./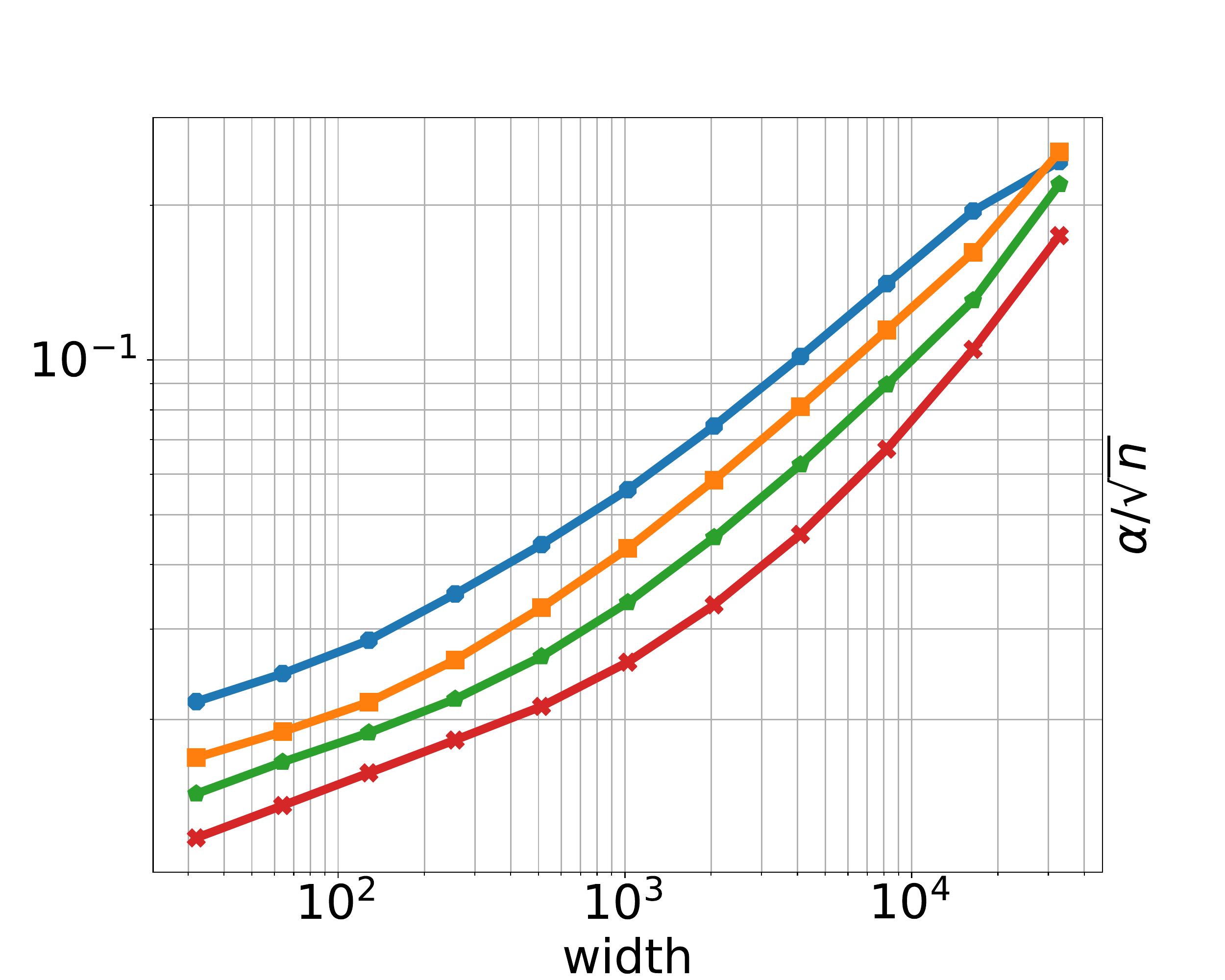}
\vspace*{-15pt} 

\end{tabular}
\caption{\label{fig:2nn_app}({\bf left}) \emph{``co-adaptation''}; ({\bf middle}) generalization gap; and ({\bf right}) $\alpha/\sqrt{n}$ ({\bf top}) with symmetrization on FashionMNIST; and ({\bf bottom}) without symmetrization on MNIST. In left column, the dashed brown and dotted purple lines represent minimal and maximal co-adaptations, respectively.}
\end{figure*}

In Figure~\ref{fig:2nn_app}, we plot the co-adaptation measure, the generalization gap, as well as the complexity measure $\alpha/\sqrt{n}$ as a function of width of the network, for FashionMNIST with symmetrization, and for MNIST without symmetrization.

The co-adaptation plot is very similar to Figure~\ref{fig:2nn} in the main text. In particular, 1) increasing the dropout rate results in less co-adaptation; 2) even plain SGD is biased towards networks with less co-adapation; and 3) as the networks becomes wider, the co-adaptation curves corresponding to plain SGD converge to those of dropout. We also make similar observations for the generalization gap as well as the complexity term $\alpha/\sqrt{n}$. In particular, 1) a higher dropout rate corresponds to a lower generalization gap, uniformly for all widths; 2) the generalization gap is higher for wider networks; and 3) curves with smaller complexity terms in the right plot correspond to curves with smaller generalization gaps in the middle plot. 

\subsection{Deep Neural Networks}
In Section~\ref{sec:dnn}, we derived generalization bounds that scale with the explicit regularizer as $O(\sqrt\frac{\texttt{width} \cdot R(\w)}{n})$. Although our theoretical analysis is limited to two-layer networks; empirically, we show in Figure~\ref{fig:dnn} that the generalization gap correlates well with this measure even for deep neural networks. In particular, we train deep convolutional neural networks with a dropout layer on top of the feature extractor, i.e. the top hidden layer. Let $\texttt{feature}_i$ denote the $i$-th hidden node in the top hidden layer. Akin to the derivation presented in Proposition~\ref{prop:dropout_reg_dnn}, it is easy to see that the (expected) explicit regularizer is given by $R(\w)=\frac{p}{1-p}\sum_{i=1}^{\texttt{width}} \|\u_i\|^2 a_i^2$, where $\texttt{width}$ is the width of the top hidden layer, $\U$ denotes the top layer weight matrix, and $a_i^2 = \E_\x [\texttt{feature}_i(\x)^2]$ is the second moment of the $i$-th node in the top hidden layer. 

We train convolutional neural networks with and without dropout, on MNIST, Fashion MNIST, and CIFAR-10. The CIFAR-10 dataset consists of 60K $32\times 32$ color images in 10 classes, with 6k images per class, divided into a training set and a test set of sizes 50K and 10K respectively~\cite{krizhevsky2009learning}. We do not perform symmetrization in these experiments. In contrast with the experiments in the previous section, here we run the experiments on full datasets, representing each of the ten classes as a one-hot target vector.

For MNIST and Fashion MNIST datasets, we use a convolutional neural network with one convolutional layer and two fully connected layers. The convolutional layer has 16 convolutional filters, padding and stride of 2, and kernel size of 5. We report experiments on networks with the width of the top hidden layer chosen from $\texttt{width}\in \{ 2^6, 2^7, 2^8, 2^9, 2^{10}, 2^{11} \}$. In all the experiments, a fixed learning rate $\texttt{lr}=0.5$ and a mini-batch of size 256 is used to perform the updates. We train the models for 30 epochs over the whole training set.

For CIFAR-10, we use an AlexNet~\cite{krizhevsky2012imagenet}, where the layers are modified accordingly to match the dataset. The only difference here is that we apply dropout to the top hidden layer, whereas  in~\cite{krizhevsky2012imagenet}, dropout is used on top of the second and the third hidden layers from the top. We report experiments on networks with the width of the top hidden layer chosen from $\texttt{width}\in \{ 2^5, 2^6, 2^7, 2^8, 2^9, 2^{10}, 2^{11} , 2^{12} \}$.
In all the experiments, an initial learning rate $\texttt{lr}=5$ and a mini-batch of size 256 is used to perform the updates. We train the models for 100 epochs over the whole training set. We decay the learning rate by a factor of 10 every 30 epochs.

\begin{figure*}[!t]
\centering
\begin{tabular}{ccc}
\hspace*{-10pt} 
MNIST
&
\hspace*{-12pt} 
Fashion MNIST
&
\hspace*{-12pt} 
CIFAR-10
\vspace*{-3pt}
\\
\hspace*{-10pt} 
\includegraphics[width=0.30\textwidth]{./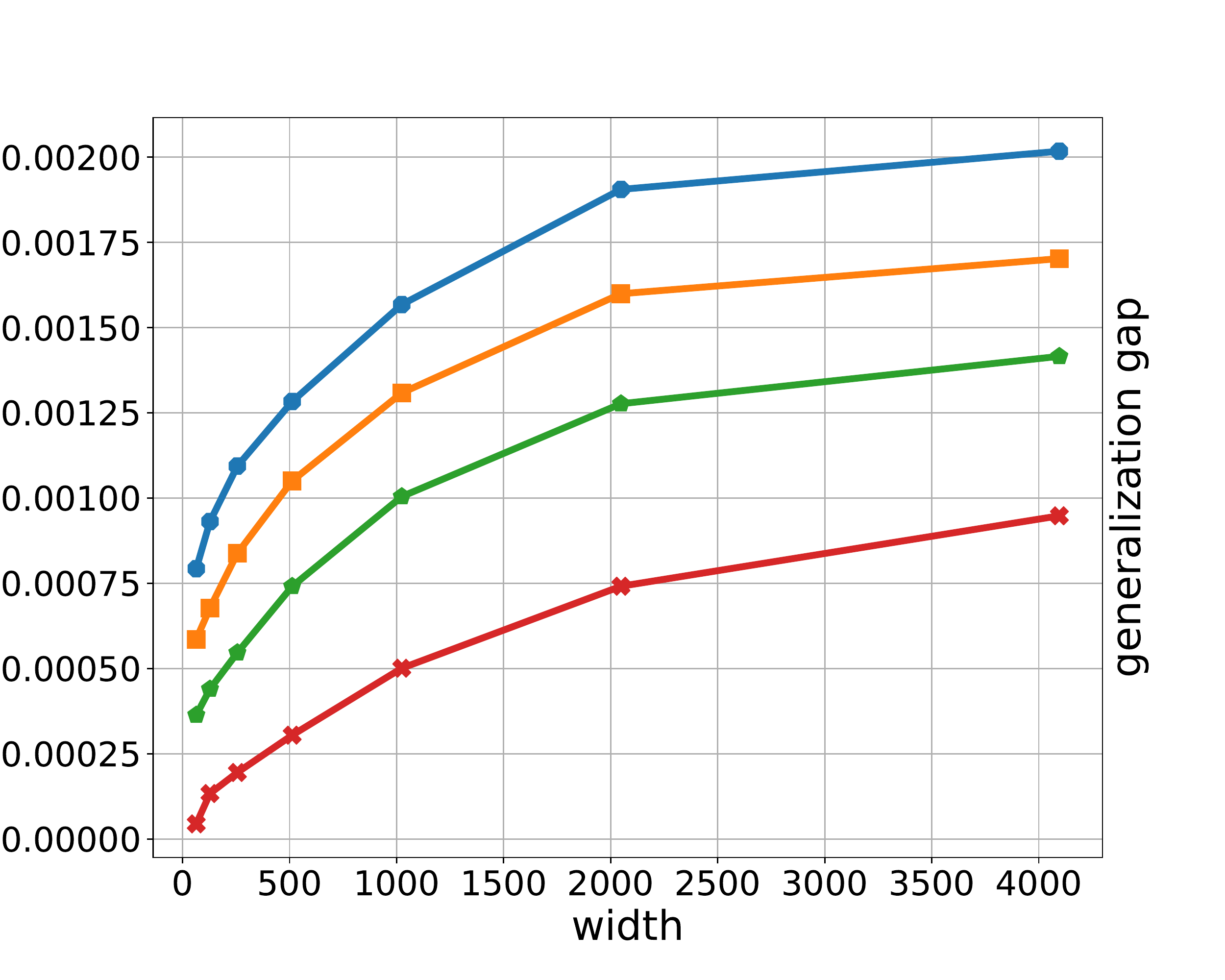}
&
\hspace*{-12pt} 
\includegraphics[width=0.30\textwidth]{./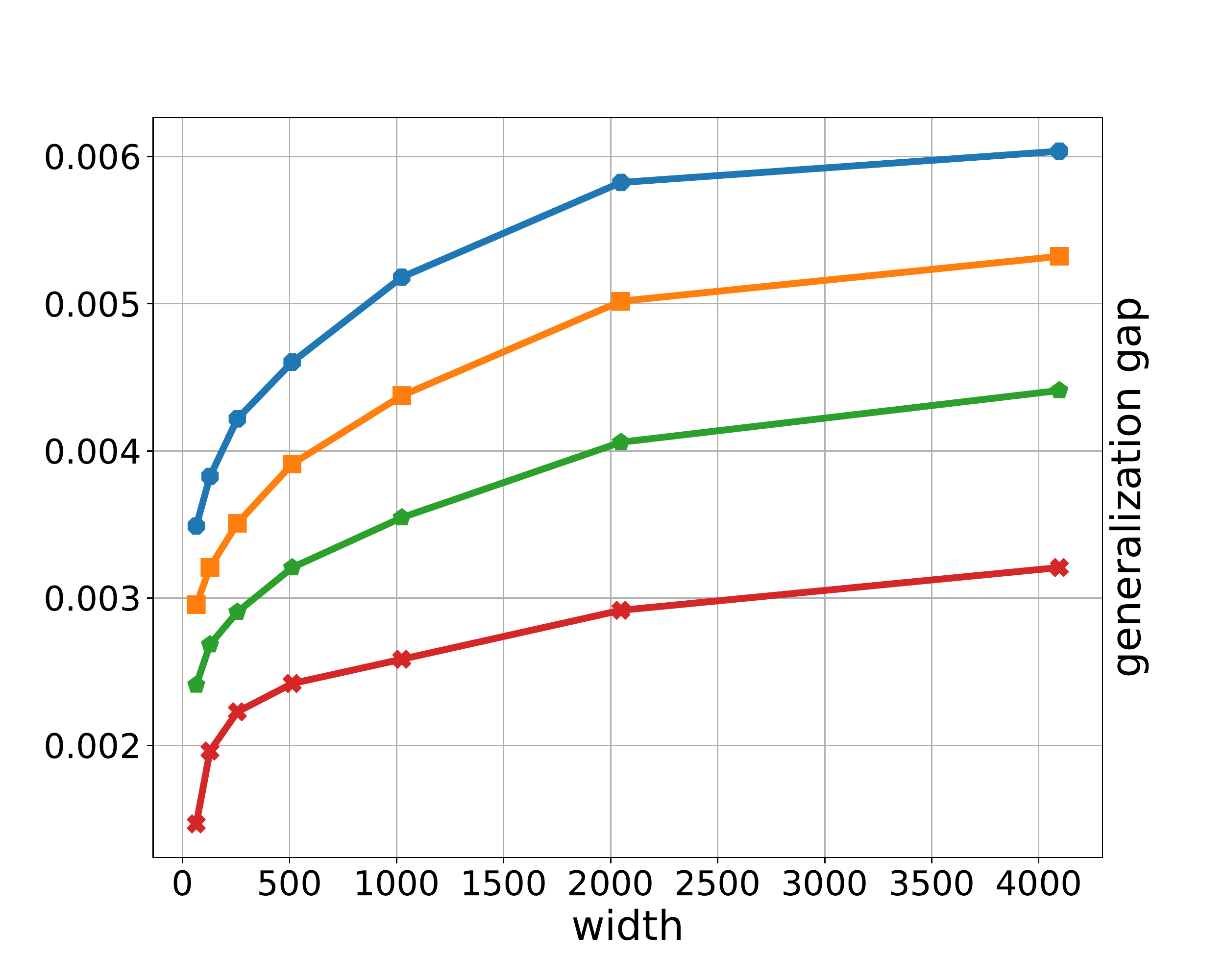}
&
\hspace*{-12pt} 
\includegraphics[width=0.30\textwidth]{./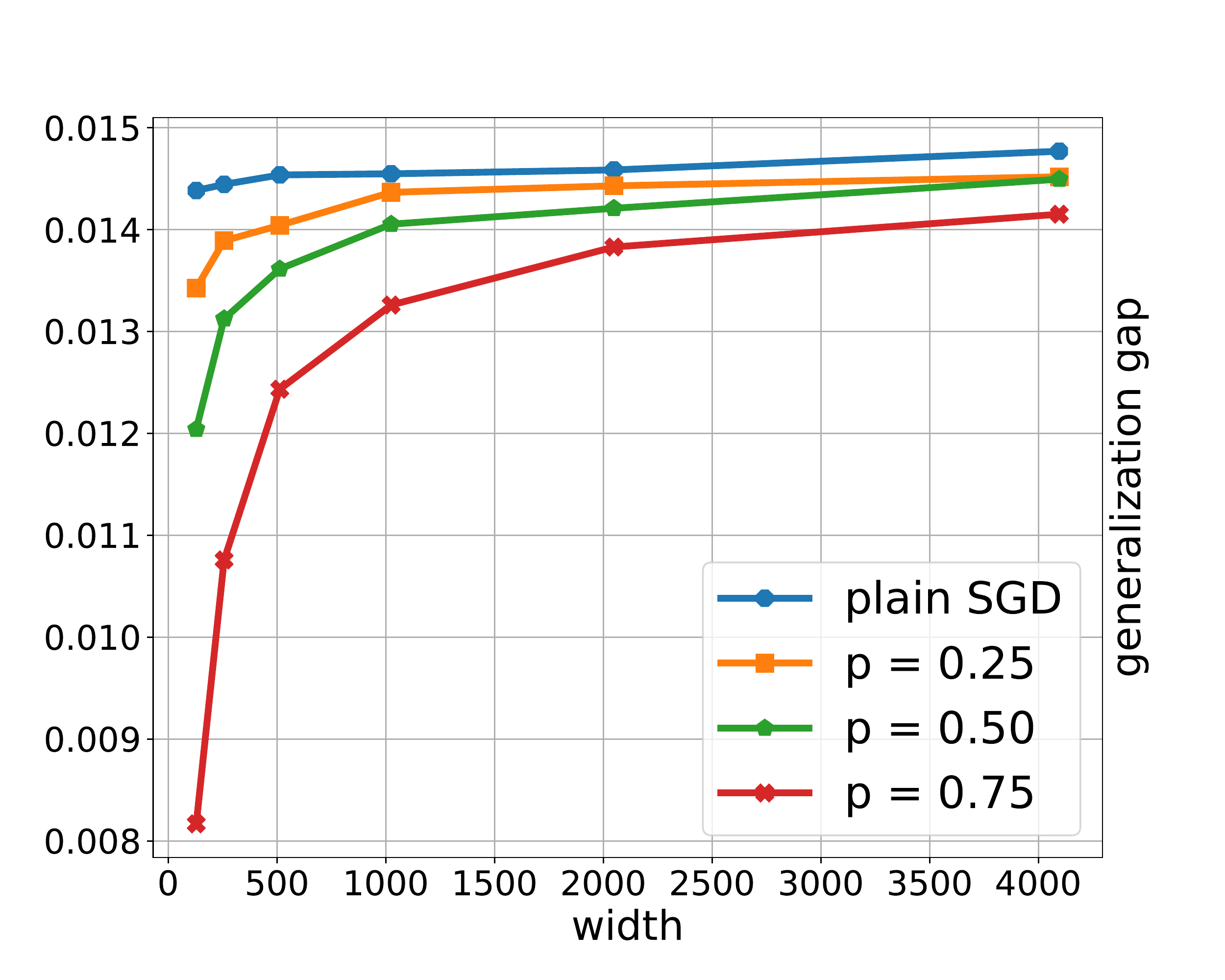}\\
\hspace*{-10pt} 
\includegraphics[width=0.30\textwidth]{./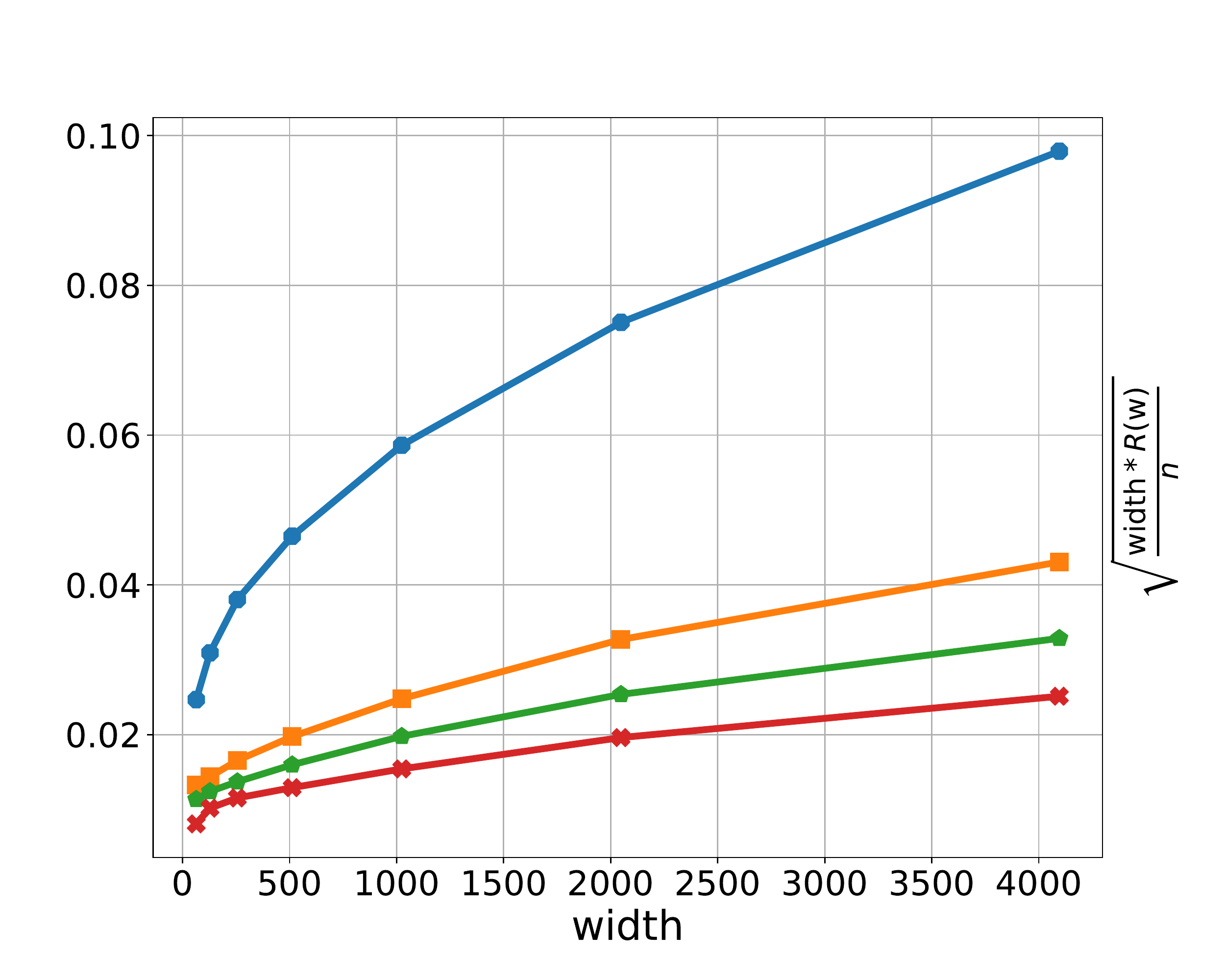}
&
\hspace*{-12pt} 
\includegraphics[width=0.30\textwidth]{./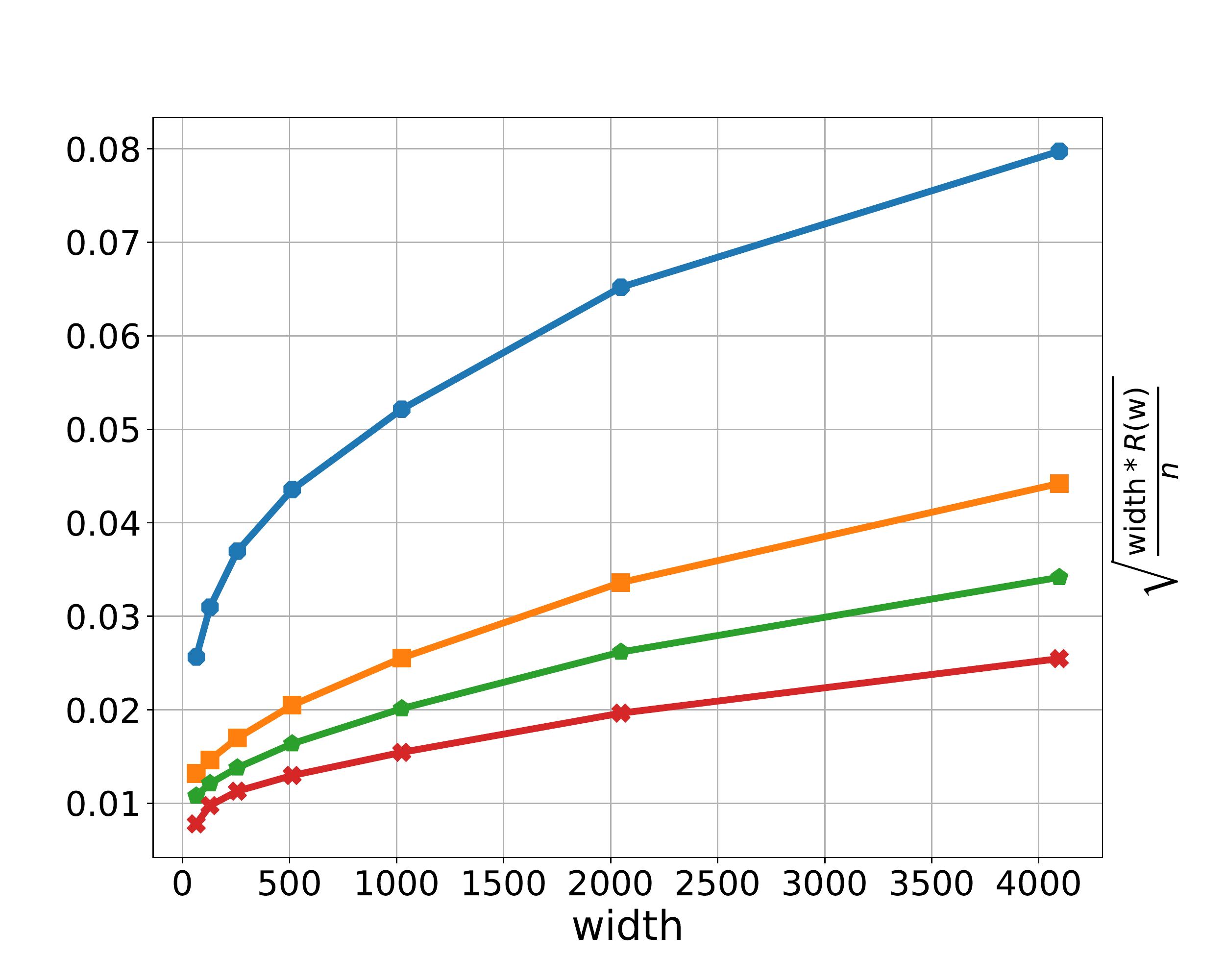}
&
\hspace*{-12pt} 
\includegraphics[width=0.30\textwidth]{./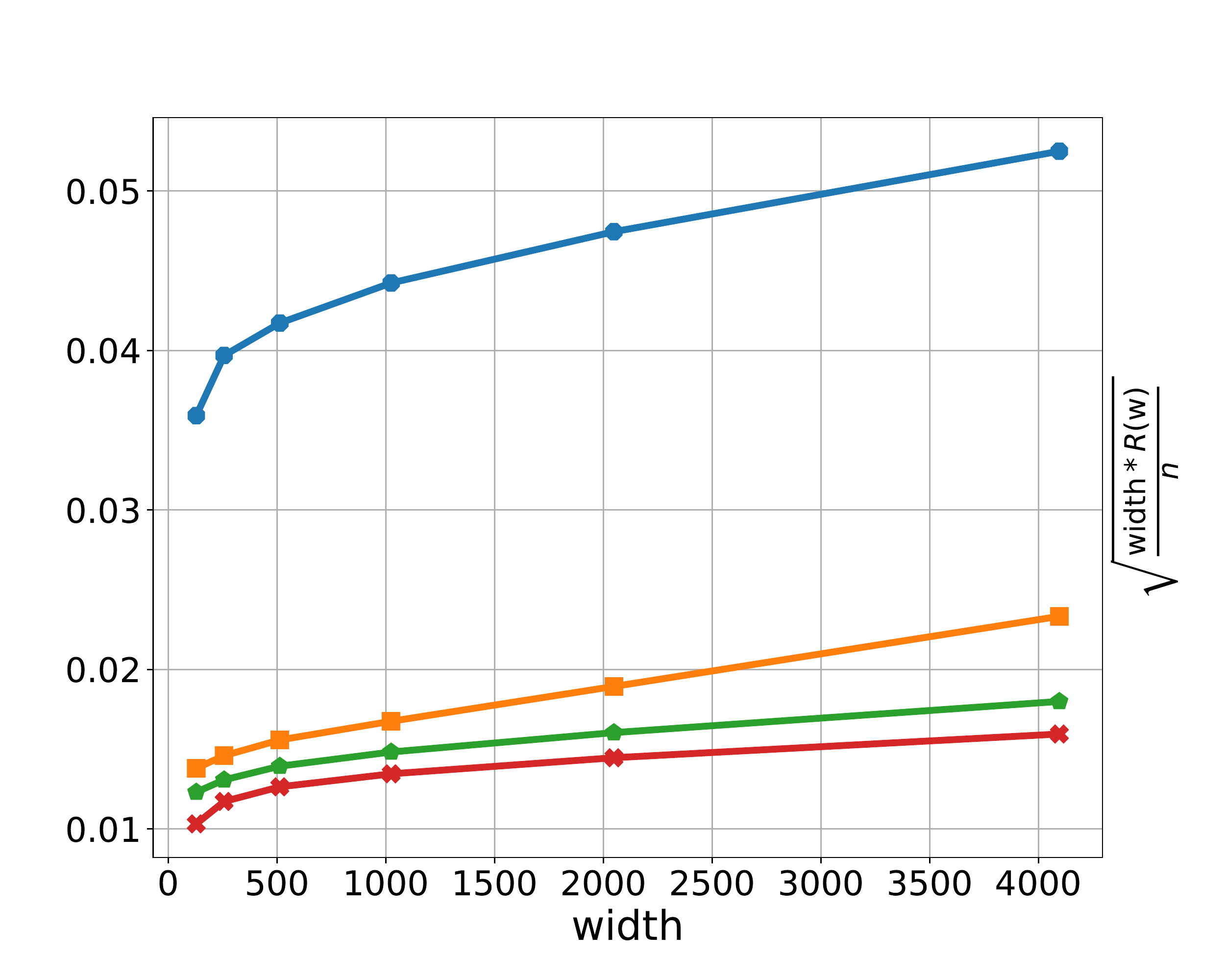}
\end{tabular}
\caption{({\bf top}) generalization gap and ({\bf bottom}) the complexity measure $(\sqrt\frac{\texttt{width}\cdot R(\w)}{n})$ as a function of the width of the top hidden layer on ({\bf left}) MNIST, ({\bf middle}) Fashion MNIST, and ({\bf right}) CIFAR-10.}
\label{fig:dnn}
\end{figure*}


\end{document}